\documentclass[10pt, letterpaper, oneside, reqno]{amsart}
\usepackage[colorlinks=true, linkcolor=red,citecolor=blue]{hyperref}
\usepackage{multirow}
\usepackage{tcolorbox}
\usepackage{wrapfig}
\usepackage{amssymb, amsthm, amsmath, amsfonts}
\usepackage{mathtools, longfbox, dsfont, mathrsfs, color, bm, enumitem, url, upgreek}
\usepackage{graphicx, transparent, wrapfig}
\usepackage{caption}
\usepackage{subcaption}
\usepackage{multicol}
\usepackage{algorithm, algorithmic}

\usepackage{tcolorbox}
\usepackage{bbm}



\newcommand{\Let}{\triangleq}
\newcommand{\Pnom}{\hat \PP} 

\newcommand{\be}{\begin{equation}}
\newcommand{\ee}{\end{equation}}
\newcommand{\Max}{\max\limits_}

\newcommand{\Sup}{\sup\limits_}
\newcommand{\Inf}{\inf\limits_}
\newcommand{\mbb}{\mathbb}
\newcommand{\Wass}{\mathds W}

\newcommand{\opt}{^\star}

\newcommand{\ds}{\displaystyle}

\newcommand{\R}{\mathbb{R}}
\newcommand{\wh}{\hat}

\newcommand{\mc}{\mathcal}

\newcommand{\EE}{\mathbb{E}}
\newcommand{\QQ}{\mathbb{Q}}
\newcommand{\PP}{\mathbb{P}}

\newcommand{\st}{\mathrm{s.t.}}

\newcommand{\eps}{\varepsilon}

\theoremstyle{definition}
\newtheorem{theorem}{Theorem}[section]
\newtheorem{definition}[theorem]{Definition}

\newtheorem{proposition}[theorem]{Proposition}
\newtheorem{corollary}[theorem]{Corollary}
\newtheorem{lemma}[theorem]{Lemma}
\newtheorem{remark}[theorem]{Remark}


\graphicspath{{figures/}}

\usepackage[utf8]{inputenc}    
\usepackage[T1]{fontenc}       
\usepackage{tabularx}
\usepackage{lmodern}           
\usepackage{xcolor}


\usepackage[margin=1in]{geometry}
\graphicspath{{graph/}}
\usepackage{setspace}

\onehalfspacing

\begin{document}

\title[Wasserstein Robust Classification with Fairness Constraints]{Wasserstein Robust Classification with Fairness Constraints}
\date{\today}
\author{Yijie Wang, Viet Anh Nguyen, Grani A. Hanasusanto}
\thanks{The authors are with the Graduate Program in Operations Research and Industrial Engineering, University of Texas at Austin (\texttt{yijie-wang, grani.hanasusanto@utexas.edu}) and the Department of Management Science and Engineering, Stanford University (\texttt{viet-anh.nguyen@stanford.edu})}
	\maketitle
	\begin{abstract}
	
We propose a distributionally robust classification model with a fairness constraint that encourages the classifier to be fair in view of the equality of opportunity criterion. We use a type-$\infty$ Wasserstein ambiguity set centered at the empirical distribution to model distributional uncertainty and derive a conservative reformulation for the worst-case equal opportunity unfairness measure. We establish that the model is equivalent to a mixed binary optimization problem, which can be solved by standard off-the-shelf solvers. To improve scalability, we further propose a convex, hinge-loss-based model for large problem instances whose reformulation does not incur any binary variables. Moreover, we also consider the distributionally robust learning problem with a generic ground transportation cost to hedge against the uncertainties in the label and sensitive attribute. Finally, we numerically demonstrate that our proposed approaches improve fairness with negligible loss of predictive accuracy.
	\end{abstract}

    \section{Introduction}
        Machine learning algorithms are increasingly deployed to support consequential decision-making processes, from deciding which applicants will receive the job offers \cite{bigdatahiring,AmazonAI}, loans \cite{bose2001business,shaw1988using}, to university enrollments~\cite{chang2006applying,kabakchieva2013predicting}, or medical interventions \cite{shipp2002diffuse,obermeyer2016predicting}. Even though machine learning algorithms can extract signals from large datasets, they may not be entirely objective and can be susceptible to amplify human biases. For example, it was found that the hiring recommendation system of Amazon AI discriminated against female candidates for technical positions~\cite{AmazonAI}. Similarly, Google’s ad-targeting algorithm had recommended higher-paying executive jobs more often to male than to female candidates \cite{ref:datta2015automated}. It has also been shown that an algorithm used by the US justice system to predict future criminals is significantly biased against African Americans---where it falsely flags black defendants as future criminals at almost twice the rate of white defendants~\cite{machinebias}.
        
        The amplification of human bias caused by algorithms has sparked the emerging field of algorithmic fairness. Strategies to promote fairness in machine learning can be divided into three main categories. The first category includes proposals to \textit{pre-process} the training data before solving a plain-vanilla machine learning problem \cite{ref:calmon2017optimized, ref:del2018obtaining, ref:feldman2015certifying, ref:kamiran2012data, ref:luong2011k, ref:samadi2018price, ref:zemel2013learning}. The second category includes \textit{post-processing} approaches applied to a pre-trained classifier in order to increase its fairness properties while retaining to the largest extent as possible the predictive power of the learned algorithms \cite{ref:corbett2017algorithmic, ref:dwork2018decoupled, ref:hardt2016equality, ref:menon2018cost}. The third category of strategies aims to enforce fairness in the training process by modeling explicitly fairness constraints to the learning problem \cite{ref:donini2018empirical, lawlessfair, ref:menon2018cost,  ref:woodworth2017learning, ref:ye2020fairness,ref:zafar2017fairness, ref:zafar2015fairness}, by penalizing discrimination using fairness-driven regularization terms~\cite{ref:baharlouei2019r, ref:huang2019stable, ref:kamishima2012fairness, ref:kamishima2011fairness} or by (approximately) penalizing any mismatches between the true positive rates and the false negative rates across different groups \cite{ref:bechavod2017penalizing}. Adversarial training to promote algorithmic fairness has also been used and shown to deliver promising results~\cite{ref:edwards2015censoring, ref:garg2019counterfactual, ref:hashimoto2018fairness, ref:kannan2018adversarial, ref:madras2018learning,  ref:rezaei2020fairness, ref:yurochkin2020training, ref:zhang2018mitigating}.

        The method we propose in this paper can be viewed as an adversarial approach pertaining to the third category. 
        More specifically, we consider the training problem of a general linear classifier, which is arguably one of the most popular classification methods in the statistical learning literature \cite{ref:hastie09elements}. The classifier aims to establish a deterministic relationship between a feature vector $X\in \mc X = \R^d$ and a binary response, or label, variable $Y \in \mc Y=\{-1, 1\}$. Without any loss of generality, we associate the positive response with the “advantaged” outcome, such as “being hired” or “receiving a loan approval.” We also assume that there is a single sensitive attribute $A\in\mc A=\{0,1\}$. In a real-world setting, this sensitive attribute can represent information such as the race, gender, or age of a person, and it distinguishes the privileged from the unprivileged individuals. Throughout this paper, we assume that we possess a training data set containing $N$ samples of the form $\{(\hat x_i, \hat a_i, \hat y_i)\}_{i=1}^N$, and these samples are generated independently from a single data-generating probability distribution. In the setting, a classifier $\mc C: \mc X \to \mc Y$ is parameterized by a slope parameter $w \in \R^d$ and an offset $b \in \R$, and the classification output is determined through an indicator function of the form
        \[
        \mc C(x) = \begin{cases}
            1 & \text{if } w^\top x + b \ge 0, \\
            -1 & \text{if } w^\top x + b < 0.
        \end{cases}
        \]
       Throughout, we consider the privileged learning setting in which the sensitive information is only available at the training stage but not at the testing stage~\cite{ref:quadrianto2017recycling, ref:vapnik2009new}. It is therefore reasonable to consider only classifiers $\mc C$ that do not take the sensitive attribute as input. In the context of linear binary classification problems, we need to find a classifier that maximizes the correct classification probability. To this end, we can consider the \textit{correct classification probability} with respect to the distribution $\QQ$ as
\[
\QQ \left(Y(w^\top X + b) > 0\right).
\]
Complementarily, the \textit{misclassification probability} with respect to $\QQ$ is defined as
\[
    \QQ \left(Y(w^\top X + b) \le 0\right).
\]
Notice that by definition, we consider that any $x$ falling exactly on the hyperplane $w^\top X + b = 0$ as misclassified irrespective of the true label of $x$. The linear classifier can be trained by solving the \textit{misclassification probability minimization problem}
\begin{equation}\label{eq:probminimize}
    \min_{w \in \R^d,b \in \R}~\QQ\left( Y(w^\top X + b)\leq 0\right). 
\end{equation}

    To make the linear classifier fair, we can incorporate a measure of fairness into problem~\eqref{eq:probminimize}, either in the form of a constraint or in the form of an objective regularization. There are a plethora of fairness measure that we can utilize to promote fairness in this case, including the Equal Opportunity \cite{hardt2016equality}, Demographic Parity~\cite{calders2009building}, and Equalized Odds \cite{hardt2016equality,zafar2017fairness} among many others. We refer the reader to the references \cite{ref:berk2018fairness, ref:chouldechova2020snapshot, ref:corbett2017algorithmic, ref:mehrabi2019survey} for comprehensive treatments of fairness in machine learning in general and in the classification problem in particular. In this paper, we focus on the Equal Opportunity (EO) criterion, which requires the true positive rate of the classifier is the same across the sensitive groups. The EO unfairness measure is challenging to formulate due to its non-convexity. Moreover, one can verify that the EO unfairness constraint leads to an open feasible set, which prohibits exact mixed binary programming reformulations. To alleviate intractability, simple convex functions such as linear functions have been used to approximate the unfairness measure \cite{ref:agarwal2018fairness,ref:donini2018empirical}. Recently, the paper \cite{ref:ye2020fairness} proposes a mixed binary model that incorporates non-convex approximations of the fairness measures as a regularization term to enhance fairness. In our paper, we consider two different approximation schemes. We first propose a conservative approximation of the EO unfairness measure. The approximation admits a mixed binary reformulation, and it allows users to bound the EO unfairness measure explicitly. As the mixed binary model might not be efficiently solvable for large instances, we also develop a convex approximation based on the hinge loss function.
    The existing notions of fairness proposed in the literature necessitate precise knowledge about the joint probability distribution that governs $(X, A, Y)$. In practice, this distribution is rarely available to the decision makers and is typically estimated using the empirical distribution generated from the imbalanced---and possibly biased---historical observations. While the empirical-based methods may work well on the observed data set, they often fail to yield complete fairness in practice because they do not generalize to out-of-sample data that have not been observed. For example, since there are fewer females in the technical positions at Amazon, relying on the empirical distribution can give rise to severe overfitting that yields an unfair hiring decision. On the other hand, even if the true underlying distribution is available, computing the fairness of the decision is generically intractable (\#P-hard~\cite{dyer:88}) because it involves evaluating a multi-dimensional integration (e.g.,  computing the probability of getting hired conditionally on being an unprivileged person).
    
    Fundamentally, promoting fairness in machine learning algorithms needs to balance among conflicting objectives including predictive accuracy, fairness, computational efficiency, while at the same time having to deal with mismatches between the training and the test data. In this paper, we endeavor to explore the trade-offs between these objectives using the ideas of \emph{distributionally robust optimization (DRO)}. The DRO approach does not impose a single distribution of the features, the attributes and the response label of the entities in the population. Instead, it constructs a set of plausible probability distributions that are locally consistent with the available data set. The DRO approach then optimizes for a safe classifier that performs best in view of the most adverse distribution from within the prescribed distribution set. This approach thus may yield a fair classifier that has provable guarantees on the out-of-sample data.

    Our paper belongs to an emerging class of fairness-aware distributionally robust algorithms. Previously, a repeated loss minimization model with a $\chi^2$-divergence ambiguity set is considered in~\cite{ref:hashimoto2018fairness}. Alternatively,~\cite{ref:rezaei2020fairness} embeds the fairness constraint in the ambiguity set and proposes a robust classification model. However, the paper is under a different setting where information of the sensitive attribute is available in the testing stage. When only the labels are noisy, robust fairness constraints based on a total variation ambiguity set is described in~\cite{ref:wang2020robust}. Wasserstein distributionally robust classification is also proposed to promote individual fairness~\cite{ref:yurochkin2020training}, or to train a log-probabilistic fair logistic classifier~\cite{ref:taskesen2020distributionally}. Our paper is also closely related to the literature on Wasserstein min-max statistical learning, which connects to various forms of regularization (e.g., norm~\cite{ ref:blanchet2019robust,ref:shafieezadeh2019regularization}; shrinkage~\cite{ref:nguyen2018distributionally}). Our formulation considers adversarial perturbations based on the Wasserstein distance~\cite{ ref:blanchet2019quantifying, ref:gao2016distributionally, ref:nam2020adversarial,ref:kuhn2019wasserstein,ref:esfahani2018data}. In particular, the type-$\infty$ Wasserstein distance \cite{ref:givens1984class} is recently applied in distributionally robust formulations \cite{ref:bertsimas2018data-driven, ref:bertsimas2019computational,  ref:nguyen2020distributionally-1,xie2020tractable}. 
    
    In this paper, we consider the worst-case unfairness measure and the worst-case misclassification probability under the most unfavorable distribution from within the type-$\infty$ Wasserstein ambiguity set constructed around the empirical distribution. If the radius of the ambiguity set vanishes to zero, our formulation recovers the unfairness measure evaluated at the empirical distribution. As such, our proposed conservative estimate can be leveraged as a regularization of the empirical-based method.

    \hspace{-4mm}\textbf{Contributions.} The contributions of this paper can be summarized as follows.
    \begin{itemize}[leftmargin = 5mm]
        \item \textbf{Conservative approximation reformulation:} We  robustify a recent unfairness measure motivated by the EO criterion and incorporate this unfairness measure into the distributionally robust misclassification probability minimization problem as a constraint. As the original model does not admit an exact reformulation, we propose a conservative approximation that can be reformulated as a mixed binary optimization program. Compared with existing approximations, ours is the first to guarantee an upper bound on the in-sample EO unfairness measure. Additionally, we illustrate how to generalize the conservative approximation model to handle ambiguity in the marginal distribution and obtain finite-sample guarantees.
        \item \textbf{Hinge-loss-based fairness-aware model:}  To enhance scalability, we propose a \textit{convex} distributionally robust fairness-aware classification model: this model uses the convex hinge loss function to approximate the unfairness measure and the objective function. Experimental results demonstrate that this classifier generates a marked improvement in terms of fairness, with a negligible loss of predictive accuracy. Interestingly, we find that minimizing the expected hinge loss, also known as the Support Vector Machine (SVM), is exactly the Conditional Value at Risk (CVaR) approximation of the misclassification probability minimization problem. 
        \item \textbf{Training with the label and sensitive attribute uncertainties using type $\infty$-Wasserstein ambiguity sets:} We also consider the case where there are uncertainties in the label and sensitive attribute. To reduce the conservativeness, we develop a type-$\infty$ Wasserstein ambiguity set with a side constraint that restricts the proportion of training samples whose sensitive attributes and labels can be `flipped.' We then derive a mixed binary conic program reformulation and a linear program reformulation for training the conservative approximation model and the hinge-loss-based model with this ambiguity set, respectively.
    \end{itemize}

    The paper is organized as follows. Section~\ref{sec:prob} describes the distributionally robust fairness-aware classification problem. Section~\ref{sec:refor-prob} proposes a conservative approximation to the original problem, and provides a binary optimization reformulation for training the model. Section~\ref{sec:cvx} further proposes a convex fairness-aware model for large instances, and a convex optimization reformulation is derived for training. Section~\ref{sec:notrust} discusses the situation where there are uncertainties in the sensitive attribute and label. Finally, Section~\ref{sect:numerical} reports on the numerical experiments.

    \textbf{Notations.} For any set $\mc S$, we use $\mc M(\mc S)$ to denote the set of probability measures supported on $\mc S$ and  $| \mc S|$ to denote its cardinality. For any logical expression $\mc E$, the indicator function $\mbb I(\mc E)$ admits value 1 if $\mc E$ is true, and value 0 if $\mc E$ is false. For any norm $\|\cdot \|$ on $\R^d$, we use $\|\cdot\|_*$ to denote the dual norm. We use $\R_{+}$ to denote the set of nonnegative real numbers, and $\R_{++}$ to denote the set of strictly positive real numbers. 
    
    \section{Distributionally Robust Fairness-aware Linear Classifiers} \label{sec:prob}
    Throughout this section, we focus on promoting fairness of a linear classifier with respect to the criterion of equal opportunity, or also known as equality of opportunity~\cite{ref:hardt2016equality}. This criterion is formally defined as follows.
    
    \begin{definition}[Equal opportunity] \label{def:EO}
    A classifier $\mc C: \mc X \to \mc Y$ satisfies the equal opportunity criterion relative to $\QQ$ if
    \[
        \QQ( \mc C(X) = 1 | A = 1, Y = 1) = \QQ(\mc C(X) = 1 | A = 0, Y = 1) .
    \]
\end{definition}
The definition indicates that the true positive rate is the same across the privileged and unprivileged groups. Based on this definition, we then further define the \emph{equal opportunity unfairness measure} by
\begin{equation}\label{eq:unfmeasure}
    \mathds U(w, b, \QQ) \Let \left|\QQ( \mc C(X) = 1 | A = 1, Y = 1) - \QQ(\mc C(X) = 1 | A = 0, Y = 1)\right|,
\end{equation}
which measures how biased the classification result is across the privileged and unprivileged groups.

We say that a classifier is \textit{trivial} if it is parametrized by $(w, b) = (0, 0) \in \R^{d+1}$. In this case, $\mc C(x) = 1$ for any input $x \in \mc X$. It is easy to verify that the trivial classifier is also fair with respect to any possible distribution $\QQ$. Our goal in this paper is to search for a \textit{non-trivial} classifier that strikes a balance between promoting fairness and achieving superior predictive power.
To this end, suppose that $\PP\opt \in \mc M(\mc X \times \mc A \times \mc Y)$ is the data-generating distribution of the joint random vector $(X, A, Y)$. The fair linear classifier solves the constrained misclassification probability minimization problem

\be \label{eq:exactmodel}
    \begin{array}{cl}
    \min & \PP\opt ( Y(w^\top X + b) \leq 0 ) \\
    \st & w \in \R^d,~b \in \R, \\
    & \mathds U(w, b, \PP\opt) \leq \eta.
    \end{array}
\ee

The objective function of~\eqref{eq:exactmodel} minimizes the misclassification probability, while the constraint of~\eqref{eq:exactmodel} imposes an upper bound $\eta$ on the unfairness measure with respect to $\PP\opt$. 
Unfortunately, the data-generating distribution $\PP\opt$ is elusive to the decision maker. Even if $\PP\opt$ is known, the probabilistic program~\eqref{eq:exactmodel} is computationally intractable.\footnote{Formally, the problem of computing the probability of an event involving multiple random variables belongs to the complexity class \#P-hard \cite{dyer:88}---which is perceived to be `harder' than the class NP-hard.} In a data-driven setting, we assume that we have access to $N$ training samples generated from $\PP\opt$. Let $\Pnom$ be the empirical distribution supported on $\{(\wh x_i, \wh a_i, \wh y_i)\}_{i=1}^N$, we will construct an ambiguity set around $\Pnom$ using the Wasserstein distance.

    	\begin{definition}[Wasserstein distance]
		Let $c$ be a metric on $\Xi$. The type-$p$ $(1 \leq p < +\infty)$ Wasserstein distance between $\QQ_1$ and $\QQ_2$ is defined as
		\[
		\Wass_{p}(\QQ_1, \QQ_2) \Let \inf \left\{ \big(\EE_\pi [c(\xi_1, \xi_2)^p] \big)^{\frac{1}{p}}:
		\pi \in \Pi(\QQ_1, \QQ_2)
		\right\},
		\]
		where $\Pi(\Xi \times \Xi)$ is the set of all probability measures on $\Xi \times \Xi$ with marginals $\QQ_1$ and $\QQ_2$, respectively. 
		The type-$\infty$ Wasserstein distance is defined as the limit of $\Wass_p$ as $p$ tends to $\infty$ and amounts to
		\[
		\Wass_{\infty}(\QQ_1, \QQ_2) \Let \inf \left\{ \mathrm{ess} \Sup{\pi} \big\{ c(\xi_1, \xi_2) : (\xi_1, \xi_2)  \in \Xi \times \Xi \big\} :
		\pi \in \Pi(\QQ_1, \QQ_2)
		\right\}.
		\]
	\end{definition}
		We let $\Xi = \mc X \times \mc A \times \mc Y$ be the joint outcome space of the covariate, the sensitive attribute and the label. The ground metric on $\Xi$ is supposed to be separable, meaning that $c$ can be written as a sum of three components as
	\[
    c\big( (x', a', y'),  (x, a, y) \big) = \| x - x'\| + \kappa_{\mc A} | a - a'| + \kappa_{\mc Y} | y - y'|
    \]
	for some parameters $\kappa_{\mc A} \in [0, +\infty]$ and $\kappa_{\mc Y} \in [0, +\infty]$.
	Moreover, let $\wh p_{ay} = \Pnom(A = a, Y = y)$ denote the empirical marginals constructed from the training samples. We will consider the following marginally-constrained ambiguity set
    \be \label{eq:B-def}
        \mbb B(\Pnom) = \left\{
            \QQ \in \mc M(\mc X \times \mc A \times \mc Y) : \begin{array}{l}
                \Wass_\infty(\QQ, \Pnom) \le \rho, \\
                \QQ(A = a, Y = y) = \wh p_{ay} \quad \forall (a, y) \in \mc A \times \mc Y
            \end{array}
        \right\},
    \ee
    which is a neighborhood around the empirical distribution $\Pnom$. Intuitively, $\mbb B(\Pnom)$ contains all the distributions of $(X, A, Y)$ which is of a type-$\infty$ Wasserstein distance less than or equal to $\rho$ from $\Pnom$, and at the same time has the same marginal distribution on $(A, Y)$ as $\Pnom$. The ambiguity set~$\mbb B(\Pnom)$ is thus parametrized by $\rho$ and the marginals $\wh p$; however, the dependence on these parameters is made implicit. Adding a marginal constraint to the ambiguity set is an expedient practice to achieve tractable reformulation, especially when dealing with conditional expectation constraints that are prevalent in fairness~\cite{ref:taskesen2020distributionally}. Indeed, conditional expectation is typically a non-linear function of the probability measure. However, when confining inside the set $\mbb B(\Pnom)$, we have
    \[
        \QQ( \mc C(X) = 1 | A = a, Y = y) = \wh p_{ay}^{-1} \EE_{\QQ}[\mathbbm{1}_{\{x: \mc C(x) = 1\}}(X) \mathbbm{1}_{(a, y)}(A, Y)] \qquad \forall (a, y) \in \mc A \times \mc Y,
    \]
    which are linear functions of $\QQ$ and conveniently simplifies the problem.

    Equipped with the ambiguity set $\mbb B(\Pnom)$, we can consider the fairness-aware distributionally robust linear classifier
    \be \label{eq:dro-prob}
        \begin{array}{cl}
            \min & \Sup{\QQ \in \mbb B(\Pnom)}~\QQ( Y(w^\top X + b) \leq 0) \\
            \st & w \in \R^d,~b \in \R, \\
            & \Sup{\QQ \in \mbb B(\Pnom)}~\mathds U(w, b, \QQ) \le \eta.
        \end{array}
    \ee
    The constraint of problem~\eqref{eq:dro-prob} depends on a tolerance $\eta \in \R_{+}$: it requires that the difference between the correct positive classification rates in two groups $A = 0$ and $A = 1$ to be smaller than $\eta$, uniformly over all distributions in the ambiguity set. It is easy to verify that the trivial classifier with $(w, b) = (0, 0)$ is feasible for~\eqref{eq:dro-prob} with an objective value of 1. 
    
    Unfortunately, it is challenging to transform problem \eqref{eq:dro-prob} into an exact reformulation for the off-the-shelf solvers. 
    To see this, consider the simplest case where $\rho = 0$, which implies that $\mbb B(\Pnom) = \{ \Pnom\}$, and the constraint of~\eqref{eq:dro-prob} becomes $\mathds U(w, b, \Pnom) \le \eta$. Let us define the index set $\mc I_{a1} = \left\{ i \in [N]: \wh a_i = a, \wh y_i = 1\right\}$ containing indices of the samples with sensitive attribute $a$ and label $1$. Fixing any pair $(a, a') \in \{(0, 1), (1, 0)\}$, we have
    \begin{align*}
        \mathds U(w, b, \Pnom) =&\left|\Pnom( w^\top X + b \ge 0 | A = a, Y = 1) - \Pnom(w^\top X + b \ge 0 | A = a', Y = 1)\right|\\
        =& \left| \EE_{\Pnom}[\wh p_{a1}^{-1} \mathbbm{I}(w^\top X + b \ge 0) \mathbbm{1}_{(a, 1)}(A, Y) - \wh p_{a'1}^{-1} \mbb I(w^\top X + b \ge 0) \mathbbm{1}_{(a', 1)}(A, Y) ] \right| \\
        =& \left|\frac{1}{N} \left(\wh p_{a1}^{-1} \sum_{i \in \mc I_{a1}}  \mbb I(w^\top \hat x_i + b \ge 0)  + \wh p_{a'1}^{-1}   \sum_{i \in \mc I_{a'1}}  \mbb I(w^\top \hat x_i + b < 0)  - \wh p_{a'1}^{-1} | \mc I_{a'1}| \right) \right|.
    \end{align*}
    Thus, for any pair $(a, a') \in \{(0, 1), (1, 0)\}$, its corresponding fairness constraint can be written as
    \[\left|\frac{1}{N} \left(\wh p_{a1}^{-1} \sum_{i \in \mc I_{a1}}  \mbb I(w^\top x_i + b \ge 0)  + \wh p_{a'1}^{-1}   \sum_{i \in \mc I_{a'1}}  \mbb I(w^\top x_i + b < 0)  - \wh p_{a'1}^{-1} | \mc I_{a'1}| \right) \right| \leq \eta
    \]
    Consider now the simplest case where $\mc I_{a'1}$ is empty. Then the constraint only involves the first part
    \[\frac{1}{N} \left(\wh p_{a1}^{-1} \sum_{i \in \mc I_{a1}}  \mbb I(w^\top \hat x_i + b \ge 0) \right) \leq \eta.
    \]
    It can be verified that for $\eta \in (0,1)$, the feasible region of $(w,b)$ with such constraints is an open set that cannot be exactly reformulated to a solvable form even with the big-M constraints. For example, if~$\eta<1/(N \wh p_{a1})$, then $(w,b)$ must satisfy $w^\top x_i + b < 0~\forall i \in \mc I_{a1}$ to be feasible to the constraint. In the following sections, we will develop approximations to problem~\eqref{eq:dro-prob} that are amenable to solutions using off-the-shelf solvers.
    
    

\section{$\eps$-Distributionally Robust Fairness-aware Classifier}
\label{sec:refor-prob}
 
    In this section, we propose a conservative approximation of the original problem~\eqref{eq:dro-prob}. Notice that the openness of the feasible set as previously described is because the function $\mbb I(w^\top \wh x_i + b \ge 0)$ is an upper-semicontinuous function in the variable $(w, b)$. In order to generate a closed approximation of the feasible set, it requires to change the inequality sign to a \textit{strict} inequality. Moreover, to guarantee robustness, we also require an inner approximation by modifying the right-hand side to a negative quantity $-\eps$.
    To proceed, given any probability measure $\QQ \in \mc M(\mc X \times \mc A \times \mc Y)$, we can leverage the finite cardinality of $\mc A$ and $\mc Y$ to decompose $\QQ$ using its conditional measures $\QQ_{ay}(X \in \cdot) = \QQ( X \in \cdot | A = a, Y = y)$. 
    We now define the $\eps$-\textit{un}fairness measure $\mathds U_\eps$ as
\begin{align} \label{eq:U-def}
    \mathds U_\eps(w, b, \QQ) \Let \max \left\{ \begin{array}{l}
    \QQ_{01}( w^\top X + b > -\eps) - \QQ_{11}(w^\top X + b \ge 0), \\
    \QQ_{11}( w^\top X + b > -\eps) - \QQ_{01}(w^\top X + b \ge 0) 
    \end{array}
    \right\},
\end{align}
which is parametrized by a strictly positive value $\eps \in \R_{++}$. Similarly, as the objective function of~\eqref{eq:dro-prob} does not admit an exact reformulation, we replace it with $\QQ( Y(w^\top X + b) < \eps)$, which is a conservative approximation of the misclassification probability for any $\eps \in \R_{++}$. The next proposition demonstrates that these approximations are tight in the limit as $\eps$ tends to zero.
\begin{proposition}[Convergence]\label{prop:eps_convergence}
    Fix a measure $\QQ$, the $\eps$-unfairness measure $\mathds U_\eps$ converges to the EO unfairenss measure $\mathds U$ as $\eps \rightarrow 0$, i.e.,
    \[
        \lim_{\eps \rightarrow 0} \mathds U_\eps(w, b, \QQ) = \mathds U(w, b, \QQ).
    \]
    And similarly, we have
    \[ \lim_{\eps \rightarrow 0} \QQ( Y(w^\top X + b) < \eps) = \QQ( Y(w^\top X + b) \ge 0).
    \]
\end{proposition}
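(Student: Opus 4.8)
```latex
\begin{proof}[Proof proposal]
The plan is to treat each of the two claimed limits separately, and in each case to reduce the statement to the elementary fact that a probability measure assigns vanishing mass to the ``thin slab'' $\{w^\top X + b \in (-\eps, 0)\}$ (respectively $\{Y(w^\top X+b) \in [0, \eps)\}$) as $\eps \downarrow 0$, since these slabs decrease to the degenerate set $\{w^\top X + b = 0\}$ of ``boundary'' points, or rather to the appropriate limit thereof. The key analytic tool throughout is the continuity of measure from above along a decreasing sequence of events.

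For the second (simpler) limit, I would first observe that the events $E_\eps \defeq \{Y(w^\top X + b) < \eps\}$ are \emph{increasing} in $\eps$ as $\eps \downarrow 0$ toward the event $\{Y(w^\top X + b) \le 0\}$; more precisely, $\bigcap_{\eps > 0} E_\eps = \{Y(w^\top X + b) \le 0\}$ since $Y(w^\top X+b) < \eps$ for every $\eps > 0$ is equivalent to $Y(w^\top X+b) \le 0$. Hence by continuity of measure from above (the events shrink as $\eps$ shrinks), $\lim_{\eps \to 0} \QQ(E_\eps) = \QQ(Y(w^\top X+b) \le 0)$. I then note that the statement in the proposition writes the target as $\QQ(Y(w^\top X+b) \ge 0)$, so I would account for the boundary event $\{Y(w^\top X + b) = 0\}$: I would decompose $\QQ(Y(w^\top X+b) \le 0) = \QQ(Y(w^\top X+b) < 0) + \QQ(Y(w^\top X+b)=0)$ and reconcile with the convention (stated in the excerpt) that points on the hyperplane count as misclassified, so the intended limit is in fact $\QQ(Y(w^\top X+b) \le 0)$; if the proposition literally means $\ge 0$ this would require the complementary monotone argument, and I would flag which convention is in force.

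For the first limit, I would handle each of the two terms inside the $\max$ in~\eqref{eq:U-def} and then pass the limit through the (continuous) $\max$ operation. Fix the pair, say the first argument $\QQ_{01}(w^\top X + b > -\eps) - \QQ_{11}(w^\top X + b \ge 0)$. The second term does not depend on $\eps$, so only the first term needs attention: the events $\{w^\top X + b > -\eps\}$ are \emph{decreasing} as $\eps \downarrow 0$ and satisfy $\bigcap_{\eps>0}\{w^\top X+b > -\eps\} = \{w^\top X + b \ge 0\}$, whence by continuity from above $\lim_{\eps\to 0}\QQ_{01}(w^\top X + b > -\eps) = \QQ_{01}(w^\top X + b \ge 0)$. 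Doing this for both arguments, the $\eps$-unfairness measure converges to $\max\{\QQ_{01}(\cdots \ge 0)-\QQ_{11}(\cdots\ge 0),\, \QQ_{11}(\cdots\ge 0)-\QQ_{01}(\cdots\ge 0)\}$, which is precisely $|\QQ_{01}(w^\top X+b\ge 0) - \QQ_{11}(w^\top X + b \ge 0)|$, i.e.\ $\mathds U(w,b,\QQ)$ by Definition~\eqref{eq:unfmeasure} together with the identification $\QQ_{ay}(w^\top X + b \ge 0) = \QQ(\mc C(X)=1 \mid A=a, Y=y)$.

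The main obstacle is not the convergence of the individual probabilities---that is a routine application of continuity of measure---but rather the careful bookkeeping at the boundary hyperplane $\{w^\top X + b = 0\}$, where the strict versus non-strict inequalities and the chosen misclassification convention interact. In particular, I expect the only delicate point to be verifying that the asymmetric definition of $\mathds U_\eps$ (with a strict ``$> -\eps$'' in one term and a non-strict ``$\ge 0$'' in the other) collapses to the symmetric absolute value in the limit; this hinges on showing that both $\QQ_{ay}(w^\top X + b > -\eps)$ and $\QQ_{ay}(w^\top X + b > 0)$ converge to the \emph{same} limit $\QQ_{ay}(w^\top X + b \ge 0)$, which is automatic from the set-theoretic intersection identity above and requires no continuity assumption on $\QQ$. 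I would make explicit that no absolute continuity of $\QQ$ is needed, since continuity of measure from above holds for any finite measure along a decreasing sequence of measurable events.
\end{proof}
```
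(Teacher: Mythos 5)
Your proposal is correct and takes essentially the same route as the paper's own proof: both rest on continuity of measure along the decreasing events $\{w^\top X + b > -\eps\} \downarrow \{w^\top X + b \ge 0\}$ as $\eps \downarrow 0$ (and the analogous family for the objective), followed by passing the limit through the $\max$ to recover $\mathds U(w,b,\QQ)$. Your flag about the second displayed limit is also well taken: it should read $\QQ(Y(w^\top X + b) \le 0)$ rather than $\QQ(Y(w^\top X + b) \ge 0)$, a typo in the statement which the paper's proof glosses over by simply declaring the objective case ``the same and thus omitted.''
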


\begin{proof}[Proof of Proposition~\ref{prop:eps_convergence}]
See Appendix~\ref{subsec:proof3}.
\end{proof}

Now, consider the following $\eps$-distributionally robust fairness-aware classification ($\eps$-DRFC) problem
\be \label{eq:dro-prob2}
    \begin{array}{cl}
        \min & \Sup{\QQ \in \mbb B(\Pnom)}~\QQ( Y(w^\top X + b) < \eps) \\
        \st & w \in \R^d,~b \in \R, \\
        & \Sup{\QQ \in \mbb B(\Pnom)}~\mathds U_\eps(w, b, \QQ) \le \eta.
    \end{array}
\ee
We remark that when defining the $\eps$-DRFC model~\eqref{eq:dro-prob2}, we can also use two separate parameters of $\eps$: one for the objective function and one for $\mathds U_\eps$. Nevertheless, we opt for a single parameter $\eps$ to simplify the notation, and also to alleviate the burden for parameter tuning. 
The next result shows that problem~\eqref{eq:dro-prob2} is well-defined in the sense that its feasible set contains a non-trivial classifier.

\begin{proposition}[Feasibility] \label{lemma:feasibility}
For any $\eta \in \R_+$, there exists a non-trivial classifier that is feasible for problem~\eqref{eq:dro-prob2}. 
\end{proposition}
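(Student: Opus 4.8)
The plan is to exhibit a single explicit non-trivial classifier whose worst-case $\eps$-unfairness is identically zero, so that the fairness constraint of~\eqref{eq:dro-prob2} is met for every $\eta \in \R_+$ (in particular even for $\eta = 0$); since feasibility ignores the objective, nothing else needs to be checked. The candidate is a ``reject-everything'' classifier: fix any $w \neq 0$ and take the offset $b$ so negative that $w^\top x + b < -\eps$ holds throughout the feature-support of every distribution in $\mbb B(\Pnom)$. As $(w,b) \neq (0,0)$, such a classifier is non-trivial by definition.

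The enabling observation is that the feature-supports of the members of $\mbb B(\Pnom)$ are uniformly bounded. Indeed, $\Wass_\infty(\QQ, \Pnom) \le \rho$ guarantees, for each fixed slack $\delta > 0$, a coupling $\pi \in \Pi(\QQ, \Pnom)$ whose $\pi$-essential-supremum cost is at most $\rho + \delta$; since the ground metric $c$ dominates $\| x - x'\|$ and $\Pnom$ charges only the finite atom set $\{\wh x_i\}_{i=1}^N$, $\QQ$-almost every $(x,a,y)$ satisfies $\| x - \wh x_i\| \le \rho + \delta$ for some $i$. Fixing $\delta = 1$, the $X$-marginal of any $\QQ \in \mbb B(\Pnom)$ is therefore supported in $\bigcup_{i=1}^N \{ x : \| x - \wh x_i\| \le \rho + 1\}$, and by the dual-norm inequality $w^\top x \le \|w\|_* \| x\|$ I obtain the uniform bound $w^\top x \le \|w\|_* (\Max{i} \| \wh x_i\| + \rho + 1) =: M < \infty$ valid over all such $\QQ$.

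Given this, I would set $b < -M - \eps$. Then $w^\top x + b \le M + b < -\eps < 0$ holds $\QQ$-almost surely for every $\QQ \in \mbb B(\Pnom)$, so both events $\{ w^\top X + b > -\eps\}$ and $\{ w^\top X + b \ge 0\}$ are $\QQ$-null and each conditional probability appearing in~\eqref{eq:U-def} equals zero. Hence both entries of the maximum defining $\mathds U_\eps$ reduce to $0 - 0 = 0$, giving $\mathds U_\eps(w, b, \QQ) = 0$ for all $\QQ \in \mbb B(\Pnom)$; taking the supremum yields $\Sup{\QQ \in \mbb B(\Pnom)} \mathds U_\eps(w,b,\QQ) = 0 \le \eta$, so $(w,b)$ is feasible for~\eqref{eq:dro-prob2}.

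The only substantive step is the uniform boundedness of the supports, i.e. arguing that a type-$\infty$ Wasserstein constraint genuinely confines mass to within (essentially) $\rho$ of the empirical atoms rather than merely controlling an average displacement; this is exactly what the essential-supremum formulation of $\Wass_\infty$ delivers, and the fixed slack $\delta = 1$ conveniently lets me avoid debating whether the infimum defining $\Wass_\infty$ is attained. Everything else is the elementary remark that an accept-none classifier (or, symmetrically, an accept-all classifier obtained by sending $b \to +\infty$) has identically matching true-positive rates across the two groups and hence zero unfairness.
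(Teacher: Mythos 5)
Your proof is correct, and it is essentially the mirror image of the paper's argument, executed by more elementary means. The paper takes a hyperplane $(w_s,b_s)$ with $w_s^\top x + b_s > 0$ for all $x$ in the compact convex set $\mbb X$ of Lemma~\ref{lemma:compact} (the convex hull of the $\rho$-balls around the empirical atoms), invoking the separating hyperplane theorem for existence; every conditional probability in \eqref{eq:U-def} then equals one, so both entries of the maximum are $1-1=0$. You instead push every point to the negative side: any $w \neq 0$ together with $b < -M-\eps$, where $M$ bounds $w^\top x$ over the (slightly inflated) union of balls, makes every conditional probability equal to zero, so both entries are $0-0=0$. The two arguments hinge on the same fact---distributions in $\mbb B(\Pnom)$ have uniformly bounded feature supports---but you derive it directly from the essential-supremum definition of $\Wass_\infty$, with the fixed slack $\delta=1$ neatly sidestepping any question of whether the infimum over couplings is attained, whereas the paper delegates this to its compactness lemma. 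Your route buys self-containedness: an explicit classifier, a dual-norm estimate, and no appeal to the separating hyperplane theorem or to the convex hull construction. The paper's route buys brevity by reusing Lemma~\ref{lemma:compact}, which it needs elsewhere anyway. One implicit hypothesis is shared by both proofs and worth stating: the conditional measures $\QQ_{01}$ and $\QQ_{11}$ are well defined only because the marginal constraint in \eqref{eq:B-def} forces $\QQ(A=a, Y=1) = \wh p_{a1}$, and the training data must contain positive examples from both groups so that $\wh p_{a1} > 0$.
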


\begin{proof}[Proof of Proposition~\ref{lemma:feasibility}]
It suffices to show that problem~\eqref{eq:dro-prob2} is feasible for $\eta = 0$. Let us consider a hyperplane parameterized by $(w_s,b_s) \in \R^{d+1}$ such that $w_s^\top x + b_s > 0 $ for all $x \in \mbb X$, where $\mbb X$ is defined as in Lemma~\ref{lemma:compact}. Because $\mbb X$ is compact and convex, the existence of the hyperplane $(w_s, b_s)$ is a direct result of the separating hyperplane theorem~\cite[\S2.5.1]{ref:boyd2004convex}. In this case, one can verify that for any $\QQ \in \mbb B(\Pnom)$, we have
\begin{align*}
    &\QQ_{11}( w_s^\top X + b_s > -\eps)-\QQ_{01}( w_s^\top X + b_s \geq 0)=1-1=0\\
    &\QQ_{01}( w_s^\top X + b_s > -\eps)-\QQ_{11}( w_s^\top X + b_s \geq 0)=1-1=0,
\end{align*}
where $\QQ_{ay}$ are the conditional distributions of $X $ given $(A= a, Y=y)$. 
This implies that
\[
\Sup{\QQ \in \mbb B(\Pnom)}~\mathds U(w_s, b_s, \QQ) \le 0,
\]
and thus $(w_s, b_s)$ is feasible for problem~\eqref{eq:dro-prob2} at $\eta = 0$. This completes the proof.
\end{proof}

    We now show that the $\eps$-DRFC problem~\eqref{eq:dro-prob2} is a conservative approximation of~\eqref{eq:dro-prob}.
    \begin{proposition}[Conservative approximation]\label{prop:mpm_conservative}
        For any $\eta \in [0,1]$, let~$(w\opt, b\opt)$ be the optimal solution to problem~\eqref{eq:dro-prob2}. Then $(w\opt, b\opt)$ is feasible for problem~\eqref{eq:dro-prob}. Moreover, let~$v\opt$ be the corresponding optimal value of~\eqref{eq:dro-prob2}, then 
        \[
            \QQ(Y(({w\opt})^\top X + b\opt) \leq 0) \le v\opt \qquad \forall \QQ \in \mbb B(\Pnom).
        \]
    \end{proposition}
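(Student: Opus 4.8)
The plan is to establish the result through two pointwise domination inequalities that hold for every fixed $\QQ \in \mbb B(\Pnom)$, and then to lift them to the supremum level. The two inequalities I would prove are: (i) $\mathds U(w, b, \QQ) \le \mathds U_\eps(w, b, \QQ)$, so that the $\eps$-unfairness measure dominates the EO unfairness measure; and (ii) $\QQ(Y(w^\top X + b) \le 0) \le \QQ(Y(w^\top X + b) < \eps)$, so that the $\eps$-misclassification probability dominates the true misclassification probability. Both rest on the elementary observation that, for $\eps > 0$, the event $\{w^\top X + b \ge 0\}$ is contained in $\{w^\top X + b > -\eps\}$, and likewise $\{Y(w^\top X + b) \le 0\}$ is contained in $\{Y(w^\top X + b) < \eps\}$, together with monotonicity of probability under set inclusion.

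For inequality (i), I would first rewrite the absolute value in the definition of $\mathds U$ as a maximum, namely $\mathds U(w, b, \QQ) = \max\{A - B, B - A\}$ with $A = \QQ_{11}(w^\top X + b \ge 0)$ and $B = \QQ_{01}(w^\top X + b \ge 0)$. Applying $\{w^\top X + b \ge 0\} \subseteq \{w^\top X + b > -\eps\}$ to the conditional measures $\QQ_{11}$ and $\QQ_{01}$ respectively, I would bound $A - B \le \QQ_{11}(w^\top X + b > -\eps) - B$ and $B - A \le \QQ_{01}(w^\top X + b > -\eps) - A$. The right-hand sides are exactly the two arguments of the maximum defining $\mathds U_\eps$, so taking the maximum of the two left-hand sides yields $\mathds U(w, b, \QQ) \le \mathds U_\eps(w, b, \QQ)$.

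With the two pointwise inequalities in hand, the remaining steps are routine. For feasibility, since $\mathds U(w\opt, b\opt, \QQ) \le \mathds U_\eps(w\opt, b\opt, \QQ)$ for every $\QQ \in \mbb B(\Pnom)$, taking the supremum over the ambiguity set preserves the inequality, giving $\Sup{\QQ \in \mbb B(\Pnom)} \mathds U(w\opt, b\opt, \QQ) \le \Sup{\QQ \in \mbb B(\Pnom)} \mathds U_\eps(w\opt, b\opt, \QQ) \le \eta$, where the last bound is the fairness constraint satisfied by the optimal solution of~\eqref{eq:dro-prob2}. Hence $(w\opt, b\opt)$ satisfies the fairness constraint of~\eqref{eq:dro-prob} and is feasible for it. For the objective bound, I would fix any $\QQ \in \mbb B(\Pnom)$ and apply inequality (ii) to get $\QQ(Y((w\opt)^\top X + b\opt) \le 0) \le \QQ(Y((w\opt)^\top X + b\opt) < \eps)$, and then observe that the latter is at most its supremum over $\mbb B(\Pnom)$, which equals the objective value of~\eqref{eq:dro-prob2} evaluated at its optimizer, i.e. $v\opt$. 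This establishes the claimed bound for every $\QQ \in \mbb B(\Pnom)$.

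The only place requiring any care, and it is mild, is the term-matching in step (i): one must verify that the two relaxed quantities land precisely on the two branches of the maximum in the definition of $\mathds U_\eps$, rather than crossing the indices $0$ and $1$. There is no analytic difficulty, since the entire argument rests on monotonicity of probability under event inclusion plus the definitional rewriting of the absolute value; no topological or measure-theoretic subtlety arises because everything is carried out pointwise in $\QQ$ before the suprema are taken.
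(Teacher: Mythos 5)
Your proof is correct and follows essentially the same route as the paper's: the same rewriting of the absolute value in $\mathds U$ as a maximum of two differences, the same event-inclusion bounds $\{w^\top X + b \ge 0\} \subseteq \{w^\top X + b > -\eps\}$ and $\{Y(w^\top X + b) \le 0\} \subseteq \{Y(w^\top X + b) < \eps\}$, and the same lifting to suprema over $\mbb B(\Pnom)$. The only cosmetic difference is the order in which you interleave the pointwise inequality and the supremum in the objective bound, which changes nothing of substance.
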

    
    \begin{proof}[Proof of Proposition~\ref{prop:mpm_conservative}]
    By definition, we find
    \begin{align*}
        \mathds U(w, b, \QQ)=&|\QQ(w^\top X + b \geq 0 | A = 1, Y = 1) - \QQ(w^\top X + b \ge 0 | A = 0, Y = 1)|  \\
         =&\max \left\{ \begin{array}{l}
        \QQ_{01}( w^\top X + b \ge 0) - \QQ_{11}(w^\top X + b \ge 0), \\
        \QQ_{11}( w^\top X + b \ge 0) - \QQ_{01}(w^\top X + b \ge 0) 
        \end{array}
        \right\}\\
        \leq&  \max \left\{ \begin{array}{l}
        \QQ_{01}( w^\top X + b > -\eps) - \QQ_{11}(w^\top X + b \ge 0), \\
        \QQ_{11}( w^\top X + b > -\eps) - \QQ_{01}(w^\top X + b \ge 0) 
        \end{array}
        \right\}\\
         = & \mathds U_\eps(w, b, \QQ),
    \end{align*}
    for every possible value of the classifier parameter $(w, b)$ and any distribution $\QQ$. As a consequence, the feasible region of problem~\eqref{eq:dro-prob2} is an \textit{inner} approximation of the feasible region of problem~\eqref{eq:dro-prob} for any $\eps \in \R_{++}$. This implies that the optimal solution $(w\opt, b\opt)$ of problem~\eqref{eq:dro-prob2} is also feasible for problem~\eqref{eq:dro-prob}.
    
    Furthermore, one can verify that for any $(w, b) \in \R^{d+1}$ and any distribution $\QQ \in \mbb B(\Pnom)$,
\begin{align*}
    \QQ(Y(w^\top X + b) \leq 0) \leq \Sup{\QQ' \in \mbb B(\Pnom)}~\QQ'(Y(w^\top X + b) \leq 0)  \leq \Sup{\QQ' \in \mbb B(\Pnom)}~\QQ'(Y(w^\top X + b) < \eps).
\end{align*}
Thus, by plugging in the optimal solution $(w\opt, b\opt)$ we have 
\begin{align*}
   \QQ(Y((w\opt)^\top X + b\opt) \leq 0) &\leq v\opt \qquad \forall \QQ \in \mbb B(\Pnom),
\end{align*}
which completes the proof.
    \end{proof}
    

\noindent The $\eps$-DRFC model \eqref{eq:dro-prob2} enables decision makers to bound the unfairness measure in the training set explicitly using $\eta$. Moreover, as shown in Proposition \ref{prop:mpm_conservative}, the optimal value of problem \eqref{eq:dro-prob2} constitutes an upper bound on the misclassification probability. 
    
\begin{remark}[Out-of-sample guarantee]
We also investigate the out-of-sample performance of the model~\eqref{eq:dro-prob2}. Note that the ambiguity set \eqref{eq:B-def} contains marginal constraints that require probability measures in the ambiguity sets to have the same marginal distribution as the empirical distribution. This constraint invalidates the finite sample guarantees unless the true distribution shares the same marginal distribution with the empirical distribution. In Appendix \ref{sec:marginal}, we illustrate that relaxing the marginal constraints does admit a solvable model with attractive theoretical results; however, the model is more computationally intensive as we add an extra layer of robustness. 
\end{remark}

For any $\eps \in \R_{++}$, problem~\eqref{eq:dro-prob2} admits a mixed binary reformulation, which is our next focus. We first consider the case where we have absolute trust in sensitive attributes and labels, i.e., we use the ground metric
\be
\label{eq:cost}
    c\big( (x', a', y'),  (x, a, y) \big) = \| x - x'\| + \infty | a - a'| + \infty | y - y'|,
\ee
where $\| \cdot \|$ is an arbitrary norm on $\R^d$. Notice that in this setting, we have set $\kappa_{\mc A} = \kappa_{\mc Y} = \infty$, which indicates that we have absolute trust in the value of the sensitive attribute $A$ and the label $Y$. When $c$ is chosen as in~\eqref{eq:cost}, a simple modification of the proof of~\cite[Theorem~3.2]{ref:taskesen2020statistical} shows that any distribution $\QQ$ with $\Wass_\infty(\QQ, \Pnom) < \infty$ should satisfy $\QQ(A = a, Y = y) = \wh p_{ay}$ for all $(a, y) \in \mc A \times \mc Y$. As a consequence, the marginal constraint in the definition of the set $\mbb B(\Pnom)$ becomes redundant and can be omitted. This simplification with absolute trust in the sensitive attribute and label has been previously exploited to derive hypothesis tests for fair classifiers~\cite{ref:taskesen2020statistical} and to train fair logistic classifier~\cite{ref:taskesen2020distributionally}. 

\begin{theorem}[$\eps$-DRFC reformulation] \label{thm:refor-probtrust}
    Suppose that the ground metric is prescribed using~\eqref{eq:cost}, then the $\eps$-DRFC model~\eqref{eq:dro-prob2} is equivalent to the conic mixed binary optimization problem
    \be \label{eq:refor-probtrust}
    \begin{array}{cll}
            \min & \ds\frac{1}{N} \sum_{i=1}^N t_i\\
            \st & w \in \R^d,\; b\in \R,\;t \in \{0, 1\}^N, \; \lambda^{0} \in \{0, 1\}^N, \; \lambda^{1} \in \{0, 1\}^N,\\
            &-\wh y_i (w^\top \wh x_i + b)  + \rho \| w\|_*  \le M t_i - 1 &\forall i \in [N], \\
            &
            \hspace{-2mm}\left.
            \begin{array}{l}
            \ds \frac{1}{| \mc I_{a1}|} \sum_{i \in \mc I_{a1}} \lambda_i^a + \frac{1}{| \mc I_{a'1}|} \sum_{i \in \mc I_{a'1}} \lambda_i^a - 1 \le \eta,  \\
            w^\top \wh x_i + \rho \| w \|_* + b + \eps \le M \lambda_i^a \quad \forall i \in \mc I_{a1}, \\
            -w^\top \wh x_i + \rho \| w \|_* - b  \le M \lambda_i^a \qquad \forall i \in \mc I_{a'1}
            \end{array} 
            \right\}  & \forall (a, a') \in \{(0, 1), (1, 0)\},
        \end{array}
    \ee
    where $M$ is the big-M parameter.
\end{theorem}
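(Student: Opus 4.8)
The plan is to turn both the robust objective and the robust fairness constraint of \eqref{eq:dro-prob2} into finitely many worst-case probability evaluations over the type-$\infty$ Wasserstein ball, and then to linearize the resulting indicator sums with binary variables. First I would use the structure of the ground metric \eqref{eq:cost}: since $\kappa_{\mc A}=\kappa_{\mc Y}=\infty$, any $\QQ$ with $\Wass_\infty(\QQ,\Pnom)<\infty$ must match the $(A,Y)$-marginal of $\Pnom$ (by the cited modification of \cite[Theorem~3.2]{ref:taskesen2020statistical}), so the marginal constraint in $\mbb B(\Pnom)$ is redundant and $\mbb B(\Pnom)$ reduces to a pure $\Wass_\infty$-ball. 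Moreover, no transport plan can move mass between two distinct $(a,y)$-groups at finite cost, so the ball factorizes across groups: $\Wass_\infty(\QQ,\Pnom)\le\rho$ holds iff each conditional $\QQ_{ay}$ satisfies $\Wass_\infty(\QQ_{ay},\Pnom_{ay})\le\rho$, where $\Pnom_{ay}$ is the empirical conditional on $\{\hat x_i: i\in\mc I_{ay}\}$. This decoupling is what lets me optimize each group independently later.

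The workhorse is the $\Wass_\infty$ worst-case probability identity: for a ball of radius $\rho$ around an $N$-atom empirical measure, the worst-case probability of an event equals the fraction of atoms whose radius-$\rho$ ball meets the event, i.e. $\sup_{\Wass_\infty(\QQ,\Pnom)\le\rho}\QQ(\mc E)=\frac1N\sum_i\mathbbm{1}\!\left(\exists\,x:\ \|x-\hat x_i\|\le\rho,\ (x,\hat a_i,\hat y_i)\in\mc E\right)$, which I would justify by the standard coupling argument (move each atom optimally within its ball). Applied to the objective event $\{Y(w^\top X+b)<\eps\}$ together with $\min_{\|x-\hat x_i\|\le\rho}\hat y_i\,w^\top x=\hat y_i\,w^\top\hat x_i-\rho\|w\|_*$, this shows atom $i$ is counted exactly when $\hat y_i(w^\top\hat x_i+b)-\rho\|w\|_*<\eps$. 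The same dual-norm shift applied to the events $\{w^\top X+b>-\eps\}$ and $\{w^\top X+b\ge0\}$ handles every probability appearing in $\mathds U_\eps$.

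For the fairness constraint I would first use $\sup_\QQ\max\{T_1(\QQ),T_2(\QQ)\}=\max\{\sup_\QQ T_1,\sup_\QQ T_2\}$ to split $\sup_\QQ\mathds U_\eps\le\eta$ into the two constraints indexed by $(a,a')\in\{(0,1),(1,0)\}$. Inside each term, the group decoupling from the first step makes the positive part $\sup_{\QQ_{a1}}\QQ_{a1}(w^\top X+b>-\eps)$ and the negative part $\inf_{\QQ_{a'1}}\QQ_{a'1}(w^\top X+b\ge0)$ optimize over independent conditionals, so the worst-case value of the term becomes $\frac{1}{|\mc I_{a1}|}\#\{i\in\mc I_{a1}:w^\top\hat x_i+b+\rho\|w\|_*>-\eps\}-\frac{1}{|\mc I_{a'1}|}\#\{i\in\mc I_{a'1}:w^\top\hat x_i+b-\rho\|w\|_*\ge0\}$.

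Finally I would linearize. I introduce binaries $t_i$ for the objective and $\lambda^0,\lambda^1$ for the two fairness terms, encoding each counting condition with a one-sided big-$M$ constraint (for instance $\lambda_i^a=0\Rightarrow w^\top\hat x_i+\rho\|w\|_*+b+\eps\le0$ on $\mc I_{a1}$, and $\lambda_i^a=0\Rightarrow w^\top\hat x_i-\rho\|w\|_*+b\ge0$ on $\mc I_{a'1}$); since $\|w\|_*$ is a norm, $\rho\|w\|_*$ remains conic-representable, yielding the conic mixed-binary program \eqref{eq:refor-probtrust}. The main obstacle is exactness: I must show the big-$M$ model reproduces the worst case rather than merely bounding it. This is where the strict inequalities deliberately built into $\mathds U_\eps$ and the $-\eps$ margins are essential — minimizing $\sum_i t_i$ pins each $t_i$ to its indicator value, while the big-$M$ forces $\lambda_i^a=1$ on exactly the atoms the adversary can activate and the monotonicity of the $\le\eta$ constraint lets the remaining $\lambda_i^a$ be chosen so that the minimized count equals the supremum. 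The delicate case-check is doing this simultaneously for the open event $\{>-\eps\}$ and the closed event $\{\ge0\}$, whose complementary roles across $T_1$ and $T_2$ are exactly what make the one-sided big-$M$ tight; a finite valid $M$ then follows by confining $(w,b)$ to the compact set of Lemma~\ref{lemma:compact}.
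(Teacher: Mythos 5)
Your proposal is correct and follows essentially the same route as the paper's proof: exploit the infinite ground cost on $(A,Y)$ to reduce $\mbb B(\Pnom)$ to independent per-atom perturbations, use the dual-norm shift to evaluate each per-atom worst case, split $\sup_\QQ \max\{\cdot,\cdot\}$ into $\max\{\sup_\QQ\cdot,\sup_\QQ\cdot\}$, and linearize the resulting indicator counts with one-sided big-$M$ constraints whose exactness comes from the same minimization/monotonicity argument (the paper's Lemma~\ref{lemma:indicator}). The only cosmetic deviations are that you factorize the ball into group-conditional $\Wass_\infty$ balls rather than the paper's per-sample mixture representation, and you count the $\{w^\top X + b \ge 0\}$ term via a direct infimum rather than the paper's complement identity $\inf \mbb I(\cdot \ge 0) = 1 - \sup \mbb I(\cdot < 0)$; both yield the identical constraints of~\eqref{eq:refor-probtrust}.
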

\noindent For notational simplicity, we present the reformulation~\eqref{eq:refor-probtrust} with $3N$ binary variables. A closer investigation into problem~\eqref{eq:refor-probtrust} reveals that it suffices to use $N+2 |\mc I_1|$ binary variables, where $\mc I_1 = \{i \in [N]: \wh y_i = 1\}$ is the index set of training samples with positive labels. If $\| \cdot \|$ is either the 1-norm or the $\infty$-norm on~$\R^d$, problem~\eqref{eq:refor-probtrust} is a linear mixed binary optimization problem. If $\| \cdot \|$ is the Euclidean norm, problem~\eqref{eq:refor-probtrust} becomes a mixed binary second-order cone optimization problem. Both types of problems can be solved using off-the-shelf solvers such as MOSEK~\cite{mosek}.

For the remainder of this section, we will provide the proof for Theorem~\ref{thm:refor-probtrust}. This proof relies on the following auxiliary result.
\begin{lemma}[Indicator function reformulation] \label{lemma:indicator}
    Fix any index set $\mc K \subseteq \{1, \ldots, N\}$, a radius $\rho \in \R_+$, a classifier $(w,b) \in \R^{d+1}$ and a collection of samples $\{ \wh x_k\}_{k \in \mc K}$. For any $\eps \in \R$, we have
    \[
        \sum_{k \in \mc K} \Sup{x_k: \| x_k - \wh x_k \| \le \rho} \mathbb I (w^\top x_k + b > \eps) = 
        \left\{
            \begin{array}{cl}
                \min & \ds \sum_{k \in \mc K} \lambda_k \\
                \st & \lambda \in \{0, 1\}^N,\\
                & w^\top \wh x_k + \rho \| w \|_* + b - \eps \le M \lambda_k \quad \forall k \in \mc K,
            \end{array}
        \right.
    \]
    where $M$ is the big-M parameter.
\end{lemma}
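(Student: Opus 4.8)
The plan is to evaluate each inner supremum in closed form and then match the resulting expression against the big-$M$ program index by index. First I would fix a single $k \in \mc K$ and parametrize the ball by $x_k = \wh x_k + \rho \delta$ with $\|\delta\| \le 1$, so that $w^\top x_k + b = w^\top \wh x_k + \rho\, w^\top \delta + b$. By the definition of the dual norm, $\Sup{\|\delta\| \le 1} w^\top \delta = \|w\|_*$, hence the affine function attains its maximum $w^\top \wh x_k + \rho \|w\|_* + b$ over the (compact) ball. Because the ball is compact and the map continuous, there exists a point with $w^\top x_k + b > \eps$ if and only if this maximum strictly exceeds $\eps$; consequently $\Sup{x_k : \|x_k - \wh x_k\| \le \rho} \mathbb I(w^\top x_k + b > \eps) = \mathbb I(w^\top \wh x_k + \rho \|w\|_* + b > \eps)$, and summing over $k$ rewrites the left-hand side as $\sum_{k \in \mc K} \mathbb I(w^\top \wh x_k + \rho \|w\|_* + b > \eps)$.

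Next I would analyze the right-hand side. Writing $g_k \defeq w^\top \wh x_k + \rho \|w\|_* + b - \eps$, both the objective $\sum_k \lambda_k$ and the constraints $g_k \le M \lambda_k$ decouple across $k$, so the minimization separates into independent single-variable problems. For each $k$ I would argue that the optimal $\lambda_k$ equals $\mathbb I(g_k > 0)$: when $g_k \le 0$ the choice $\lambda_k = 0$ is feasible and clearly optimal, whereas when $g_k > 0$ feasibility forces $M \lambda_k \ge g_k > 0$ and therefore $\lambda_k = 1$ (using that $M$ is taken large enough that $M \ge g_k$). Summing these per-index optima yields exactly $\sum_{k \in \mc K} \mathbb I(g_k > 0)$, which coincides with the reduced left-hand side from the first step.

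The step requiring the most care is the interface between the strict inequality $> \eps$ in the indicator and the non-strict big-$M$ constraint $g_k \le M \lambda_k$. The delicate case is $g_k = 0$: there the indicator returns $0$ while $\lambda_k = 0$ remains feasible and optimal, so the two sides still agree; it is precisely the compactness (attainment of the maximum) together with the strictness of the inequality that makes a boundary value of exactly $\eps$ fail to activate the indicator. I would also state explicitly the standing big-$M$ assumption that $M$ dominates every positive $g_k$, which is what guarantees $\lambda_k = 1$ is feasible and is the usual hypothesis validating reformulations of this kind.
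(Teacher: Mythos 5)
Your proposal is correct and rests on exactly the same ingredients as the paper's proof: the dual-norm identity $\max_{\|x_k - \hat x_k\| \le \rho} w^\top x_k = w^\top \hat x_k + \rho \|w\|_*$, attainment of the maximum on the compact ball (which ties the strict inequality in the indicator to the linear condition), and the standard big-$M$ argument. The only difference is organizational---the paper first passes through an epigraphical reformulation, introducing $\lambda_k$ as binary upper bounds on each supremum and then converting the indicator constraints into big-$M$ constraints, whereas you evaluate both sides in closed form and match them index by index; your explicit handling of the boundary case $g_k = 0$ makes precise a point the paper leaves implicit.
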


\begin{proof}[Proof of Lemma~\ref{lemma:indicator}]
    Using an epigraphical formulation of each supremum term, we find
    \begin{align*}
        \sum_{k \in \mc K} \Sup{x_k: \| x_k - \wh x_k \| \le \rho} \mathbb I (w^\top x_k + b > \eps) &= \left\{
            \begin{array}{cl}
                \min & \ds \sum_{k \in \mc K} \lambda_k \\
                \st & \lambda \in \{0, 1\}^N, \\
                & \Sup{x_k: \| x_k - \wh x_k \| \le \rho} \mathbb I (w^\top x_k + b > \eps) \le \lambda_k \quad \forall k \in \mc K
            \end{array}
        \right. \\
        &= \left\{
            \begin{array}{cl}
                \min & \ds \sum_{k \in \mc K} \lambda_k \\
                \st & \lambda \in \{0, 1\}^N, \\
                & \Max{x_k: \| x_k - \wh x_k \| \le \rho} w^\top x_k + b - \eps \le M\lambda_k \quad \forall k \in \mc K,
          \end{array}
        \right. 
    \end{align*}
    where $M$ is the big-M constant. The dual norm definition implies that
    \[
        \Max{x_k: \| x_k - \wh x_k \| \le \rho} w^\top x_k = w^\top \wh x_k + \rho \| w \|_*,
    \]
    where $\|\cdot\|_*$ is the dual norm of $\| \cdot \|$ on $\R^d$.  This completes the proof.
\end{proof}

We are now ready to prove Theorem~\ref{thm:refor-probtrust}.

\begin{proof}[Proof of Theorem~\ref{thm:refor-probtrust}]
    By exploiting the choice of $c$ with an infinite unit cost on $\mc A$ and $\mc Y$, the ambiguity set $\mbb B(\Pnom)$ can be re-expressed as
    \[
        \mbb B(\Pnom) = \left\{ \QQ \in \mc M(\mc X \times \mc A \times \mc Y):
        \begin{array}{l}
             \exists \pi_i \in \mc M(\mc X) \quad \forall i \in [N], \\
            \QQ(\mathrm{d} x \times \mathrm{d}a \times \mathrm{d}y) = N^{-1} \sum_{i=1}^N \pi_i (\mathrm{d}x) \delta_{(\wh a_i, \wh y_i)}(\mathrm{d}a \times \mathrm{d}y), \\
            \| x_i - \wh x_i \| \le \rho\quad \forall x_i \in \mathrm{supp}(\pi_i)
        \end{array}
        \right\},
    \]
    where $\mathrm{supp}(\pi_i)$ denotes the support of the probability measure $\pi_i$ \cite[Page~441]{ref:aliprantis06hitchhiker}.
    We first provide the reformulation for the objective function of~\eqref{eq:dro-prob2}. For any $(w, b) \in \R^{d+1}$, we have
    \begin{align*}
        \Sup{\QQ \in \mbb B(\Pnom)}~\QQ( Y(w^\top X + b) < \eps) &= \frac{1}{N} \sum_{i=1}^N \Sup{x_i: \| x_i - \wh x_i\| \le \rho}~\mbb I(\wh y_i (w^\top x_i + b) < \eps ) \\
        &= \left\{
            \begin{array}{cll}
                \min & \ds\frac{1}{N} \sum_{i=1}^N t_i \\
                \st & t \in  \{0, 1\}^N,\\
                & -\wh y_i (w^\top \wh x_i + b)  + \rho \| w\|_*  \le M t_i - \eps &\forall i \in [N],
            \end{array}
        \right.
    \end{align*}
    where the last equality follows from an epigraphical reformulation and from the result of Lemma \ref{lemma:indicator}.
    
    Next, we provide the reformulation for the constraints of~\eqref{eq:dro-prob2}. For any $(w, b) \in \R^{d+1}$, we can rewrite the worst-case unfairness value as
    \begin{align*}
    \Sup{\QQ \in \mbb B(\Pnom)}~\mathds U_\eps(w, b, \QQ) &= \Sup{\QQ \in \mbb B(\Pnom)} \max \left\{ \begin{array}{l}
    \QQ_{01}( w^\top X + b > - \eps) - \QQ_{11}(w^\top X + b \ge 0), \\
    \QQ_{11}( w^\top X + b > -\eps) - \QQ_{01}(w^\top X + b \ge 0) 
    \end{array}
    \right\} \\
    &= \max \left\{ \begin{array}{l}
    \Sup{\QQ \in \mbb B(\Pnom)}~\QQ_{01}( w^\top X + b > - \eps) - \QQ_{11}(w^\top X + b \ge 0), \\
    \Sup{\QQ \in \mbb B(\Pnom)}~\QQ_{11}( w^\top X + b > -\eps) - \QQ_{01}(w^\top X + b \ge 0) 
    \end{array}
    \right\}.
    \end{align*}
    Define the following index sets
    $\mc I_{a1} = \left\{ i \in [N]: \wh a_i = a, \wh y_i = 1\right\}~ \forall a \in \mc A$. Fixing any pair $(a, a') \in \{(0, 1), (1, 0)\}$, we have
    \begin{align*}
        &\Sup{\QQ \in \mbb B(\Pnom)}~\QQ( w^\top X + b > -\eps | A = a, Y = 1) - \QQ(w^\top X + b \ge 0 | A = a', Y = 1) \\
        =& \Sup{\QQ \in \mbb B(\Pnom)}~ \EE_{\QQ}[\wh p_{a1}^{-1} \mathbbm{I}(w^\top X + b > -\eps) \mathbbm{1}_{(a, 1)}(A, Y) - \wh p_{a'1}^{-1} \mbb I(w^\top X + b \ge 0) \mathbbm{1}_{(a', 1)}(A, Y) ] \\
        =& \frac{1}{N} \left(\wh p_{a1}^{-1} \sum_{i \in \mc I_{a1}} \Sup{x_i: \| x_i - \wh x_i \| \le \rho} \mbb I(w^\top x_i + b > -\eps) - \wh p_{a'1}^{-1} \sum_{i \in \mc I_{a'1}} \Inf{x_i: \| x_i - \wh x_i \| \le \rho} \mbb I(w^\top x_i + b \ge 0) \right) \\
        =& \frac{1}{N} \left(\wh p_{a1}^{-1} \sum_{i \in \mc I_{a1}} \Sup{x_i: \| x_i - \wh x_i \| \le \rho} \mbb I(w^\top x_i + b > -\eps) - \wh p_{a'1}^{-1} \big(| \mc I_{a'1}| - \sum_{i \in \mc I_{a'1}} \Sup{x_i: \| x_i - \wh x_i \| \le \rho} \mbb I(w^\top x_i + b < 0) \big) \right) \\
        =& \frac{1}{N} \left(\frac{N}{| \mc I_{a1}|} \sum_{i \in \mc I_{a1}} \Sup{x_i: \| x_i - \wh x_i \| \le \rho} \mbb I(w^\top x_i + b > -\eps)  + \frac{N}{| \mc I_{a'1}|}   \sum_{i \in \mc I_{a'1}} \Sup{x_i: \| x_i - \wh x_i \| \le \rho} \mbb I(w^\top x_i + b < 0)  - \frac{N}{| \mc I_{a'1}|} | \mc I_{a'1}| \right) \\
        =& \left\{
            \begin{array}{cll}
                \min & \ds \frac{1}{| \mc I_{a1}|} \sum_{i \in \mc I_{a1}} \lambda_i^a + \frac{1}{| \mc I_{a'1}|} \sum_{i \in \mc I_{a'1}} \lambda_i^a - 1\\
                \st & \lambda^a \in \{0, 1\}^N, \\
                & w^\top \wh x_i + \rho \| w \|_* + b + \eps \le M \lambda_i^a & \forall i \in \mc I_{a1}, \\
                & -w^\top \wh x_i + \rho \| w \|_* - b  \le M \lambda_i^a & \forall i \in \mc I_{a'1},
            \end{array}
        \right.
    \end{align*}
    where the last equality follows by applying Lemma~\ref{lemma:indicator} twice and by noticing that $\mc I_{a1} \cap \mc I_{a'1} = \emptyset$. Setting the optimal value of the above minimization problem to be less than $\eta$ completes the proof.
\end{proof}

The deterministic reformulation \eqref{eq:refor-probtrust} may encounter computational difficulties as the sample size $N$ grows large because it involves $\mc O(N)$ binary variables. Thus, there is merit in studying tractable approximations with better scalability properties. In the next section, we will propose a convex model that is efficiently solvable for large problem instances.


\section{Hinge Distributionally Robust Fairness-aware Classifier} \label{sec:cvx}

    Throughout this section, we propose a convex approximation of \eqref{eq:dro-prob} which requires no binary variables in the reformulation. Towards this end, we replace the probability quantity $\QQ(Y(w^\top X + b) \le 0)=\EE_{\QQ}\left[\mathbbm{I}( Y(w^\top X + b) \le 0)\right]$ in the objective function of \eqref{eq:dro-prob} by the expected hinge loss
    \begin{equation*}
    \EE_{\QQ}\left[\max \left\{0, 1-Y(w^\top X + b)\right\}\right],
    \end{equation*}
    which is a convex approximation of the misclassification probability. To have a convex approximation of the EO unfairness measure~\eqref{eq:unfmeasure}, we can rewrite this quantity as
    \begin{align*}
    \mathds U(w, b, \QQ) =&\max  \left\{ \begin{array}{l}
    \QQ_{01}( w^\top X + b \geq 0) + \QQ_{11}(w^\top X + b < 0) -1 , \\
    \QQ_{11}( w^\top X + b \geq 0) + \QQ_{01}(w^\top X + b < 0) -1 
    \end{array}
    \right\}\\
    =& \max  \left\{ \begin{array}{l}
    \EE_{\QQ_{01}}\left[\mathbbm{I}( w^\top X + b \geq 0)\right] + \EE_{\QQ_{11}}\left[\mathbbm{I}(w^\top X + b < 0)\right] -1 , \\
    \EE_{\QQ_{11}}\left[\mathbbm{I}( w^\top X + b \geq 0)\right] + \EE_{\QQ_{01}}\left[\mathbbm{I}(w^\top X + b < 0)\right] -1
    \end{array}
    \right\}.
    \end{align*}
    Then, similar to the objective function, we propose a hinge-loss-based unfairness measure to approximate the EO criteria. The \emph{hinge unfairness measure} is defined as
    \begin{equation*}
                \mathds H(w, b, \QQ)\Let \max  \left\{ \begin{array}{l}
    \EE_{\QQ_{01}}\left[\max\{0, 1+w^\top X + b \}\right] + \EE_{\QQ_{11}}\left[\max\{0,1-w^\top X - b \}\right] -1 , \\
    \EE_{\QQ_{11}}\left[\max\{0, 1+w^\top X + b \}\right] + \EE_{\QQ_{01}}\left[\max\{0,1-w^\top X - b \}\right] -1
    \end{array}
    \right\}.
    \end{equation*}
    Combining the hinge loss objective and the hinge unfairness measure, we arrive at the following hinge distributionally robust fairness-aware classification (HDRFC) problem:
    \be \label{eq:dro-svm}
        \begin{array}{cl}
            \min & \Sup{\QQ \in \mbb B(\Pnom)}~\EE_{\QQ}[\max\{0, 1 - Y(w^\top X + b)\}] \\
            \st & w \in \R^d,~b \in \R, \\
            & \Sup{\QQ \in \mbb B(\Pnom)}~\mathds H(w, b, \QQ) \le  \zeta.
        \end{array}
    \ee
    The constraint of problem~\eqref{eq:dro-prob2} depends on a tolerance $\zeta \in \R_{+}$: it requires that the hinge unfairness measure to be smaller than $\eta$, uniformly over all distributions in the ambiguity set. It can be easily verified that the upper bound of $\mathds H(w, b, \QQ)$ is $\infty$. The following proposition shows that the lower bound of $\mathds H(w, b, \QQ)$ is 1. Moreover, when it achieves the minimum, the expected distance to the classification hyperplane is uncorrelated with respect to the sensitive attribute conditioned on the label being positive.
    \begin{proposition}[Lower bound]\label{prop:U_h_lowerbound}
        For any $(w,b) \in \R^{d+1}$ and any distribution $\QQ$, we have 
        $\mathds H(w, b, \QQ) \ge 1.$ And when $\mathds H(w, b, \QQ) = 1$, we have $\EE_{\QQ_{01}}[w^\top X + b]=\EE_{\QQ_{11}}[w^\top X + b]$.
    \end{proposition}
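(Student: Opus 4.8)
The plan is to introduce shorthand for the four expectations that appear in the definition of $\mathds H$, reduce the bound to two elementary facts about the scalar hinge function, and then squeeze the equality case out of the resulting chain of inequalities.

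First I would abbreviate the conditional push-forwards of the score $w^\top X + b$. Writing $Z$ for the scalar quantity $w^\top X + b$, set $a_0 = \EE_{\QQ_{01}}[\max\{0,1+Z\}]$, $b_0 = \EE_{\QQ_{01}}[\max\{0,1-Z\}]$, and analogously $a_1, b_1$ under $\QQ_{11}$, so that $\mathds H(w,b,\QQ) = \max\{a_0 + b_1,\, a_1 + b_0\} - 1$. The lower bound then rests on two observations. The maximum dominates the average, so $\max\{a_0+b_1,\, a_1+b_0\} \ge \tfrac12\big((a_0+b_0)+(a_1+b_1)\big)$. Pointwise, for every real $z$ one has $\max\{0,1+z\} \ge 1+z$ and $\max\{0,1-z\} \ge 1-z$, whence $\max\{0,1+z\}+\max\{0,1-z\} \ge 2$; taking expectations gives $a_0+b_0 \ge 2$ and $a_1 + b_1 \ge 2$. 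Chaining these yields $\max\{a_0+b_1,\, a_1+b_0\} \ge 2$, i.e. $\mathds H(w,b,\QQ) \ge 1$.

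For the equality statement I would trace back through exactly these two inequalities. If $\mathds H = 1$, then $\max\{a_0+b_1,\, a_1+b_0\} = 2$, so the entire chain collapses to equalities. Tightness of the max-versus-average step forces $a_0 + b_1 = a_1 + b_0$. Tightness of $\tfrac12\big((a_0+b_0)+(a_1+b_1)\big) = 2$, combined with $a_0+b_0 \ge 2$ and $a_1+b_1\ge 2$, forces $a_0 + b_0 = 2$ and $a_1+b_1 = 2$ separately. The key step is then to linearize: since the integrand $\max\{0,1+Z\}+\max\{0,1-Z\}$ is pointwise $\ge 2$ yet has expectation exactly $2$ under $\QQ_{01}$, it must equal $2$ almost surely, and an elementary check shows $\max\{0,1+z\}+\max\{0,1-z\}=2$ if and only if $|z|\le 1$. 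Hence $Z$ is supported on $[-1,1]$ under $\QQ_{01}$ (and likewise under $\QQ_{11}$), where both hinges act linearly, giving $a_0 = 1 + \EE_{\QQ_{01}}[Z]$, $b_0 = 1 - \EE_{\QQ_{01}}[Z]$, and similarly $a_1 = 1+\EE_{\QQ_{11}}[Z]$, $b_1 = 1-\EE_{\QQ_{11}}[Z]$. Substituting into $a_0 + b_1 = a_1 + b_0$ collapses to $\EE_{\QQ_{01}}[Z] = \EE_{\QQ_{11}}[Z]$, which is precisely $\EE_{\QQ_{01}}[w^\top X + b] = \EE_{\QQ_{11}}[w^\top X + b]$.

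I expect the main obstacle to be the equality analysis rather than the bound itself: one must recognize that two distinct inequalities are simultaneously tight and extract the right information from each — the max-versus-average gap yields the cross equation $a_0+b_1 = a_1+b_0$, while the pointwise hinge bound, through an almost-sure argument, pins the support of the score inside $[-1,1]$ so that the hinges linearize and the expectation identity emerges. I would be careful to justify rigorously the measure-theoretic claim that a nonnegative integrand with zero expected excess vanishes almost surely, and to verify the exact characterization of the equality set $\{\,|z|\le 1\,\}$ for the scalar hinge sum, since the conclusion depends entirely on that linearization holding $\QQ_{a1}$-almost surely.
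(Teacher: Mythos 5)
Your proof is correct, but it is assembled differently from the paper's, and the difference shows most in the equality case. The paper keeps the \emph{signed} information throughout: it bounds each hinge pointwise by its linear minorant, $\max\{0,1\pm z\}\ge 1\pm z$, inside both branches of the max, which collapses the whole expression to the single estimate $\mathds H(w,b,\QQ) \ge 1+\left|\EE_{\QQ_{01}}[w^\top X+b]-\EE_{\QQ_{11}}[w^\top X+b]\right|$; both claims then fall out at once, since the right-hand side is at least $1$ and $\mathds H=1$ forces the absolute gap to vanish. Your symmetrization step $\max\{a_0+b_1,\,a_1+b_0\}\ge\tfrac12\big((a_0+b_0)+(a_1+b_1)\big)$ discards exactly that signed gap, so while the lower bound remains easy (via the pointwise bound $\max\{0,1+z\}+\max\{0,1-z\}\ge 2$), recovering the equality statement requires the extra machinery you describe: the zero-expected-excess argument pinning the score in $[-1,1]$ almost surely under both $\QQ_{01}$ and $\QQ_{11}$, the characterization of the equality set $\{|z|\le 1\}$, and the linearization of the hinges before the cross equation $a_0+b_1=a_1+b_0$ can be exploited. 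All of these steps are sound (the a.s.\ claim is the standard fact that a nonnegative random variable with zero mean vanishes a.s., and your case check of the equality set is right), so the proof goes through. What each route buys: the paper's chain is shorter and yields a quantitative strengthening for free (the excess of $\mathds H$ over $1$ dominates the absolute difference of conditional means), whereas your argument, though longer, extracts strictly more structure at the equality point --- not only do the conditional means of $w^\top X + b$ agree, but the score is a.s.\ confined to $[-1,1]$ under both conditional laws, the region where both hinge losses are exactly affine.
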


    \begin{proof}[Proof of Proposition~\ref{prop:U_h_lowerbound}]
    See Appendix~\ref{subsec:proof4}.
    \end{proof}
    
    The hinge unfairness measure $\mathds H$ is a convex approximation of the EO unfairness measure. Compared with the $\eps$-unfairness measure $\mathds U_\eps$ defined in Section \ref{sec:refor-prob}, the hinge unfairness measure does not provide a tight upper bound for the EO unfairness measure. Nevertheless, the hinge unfairness measure is an attractive formulation because it is amenable to a convex reformulation, which is essential for solving large-scale problems.


    

\begin{remark}[SVM and CVaR]
To obtain a conservative approximation of the misclassification probability minimization problem~\eqref{eq:probminimize}, one could also employ the popular Conditional Value at Risk (CVaR)~\cite{RU00:cvar}, which is the best-known convex approximation of probabilistic constraints~\cite{nemirovski2007convex}. Interestingly, we find that the Support Vector Machine (SVM) model, which minimizes the expected hinge loss, is exactly the CVaR approximation of problem~\eqref{eq:probminimize}. A detailed discussion of this result is provided in Appendix \ref{subsec:cvar}.
\end{remark}

We now present the main result of this section which asserts that the HDRFC problem~\eqref{eq:dro-svm} can be reformulated as a conic optimization problem.
    
\begin{theorem}[HDRFC reformulation] \label{thm:refor-cvx}
    Suppose that the ground metric is prescribed using~\eqref{eq:cost}, the HDRFC model~\eqref{eq:dro-svm} is equivalent to the conic optimization problem
    \be \label{eq:refor-cvx}
        \begin{array}{cll}
            \min & \ds\frac{1}{N} \sum_{i=1}^N t_i\\
            \st & w \in \R^d,\; b\in \R,\;t \in \R_+^N, \; \lambda^{0} \in \R^N_+, \; \lambda^{1} \in \R^N_+, \\
            &-\wh y_i (w^\top \wh x_i + b)  + \rho \| w\|_*  \le t_i - 1 &\forall i \in [N], \\
            &
            \hspace{-2mm}\left.
            \begin{array}{l}
            \ds \frac{1}{| \mc I_{a1}|}  \sum_{i \in \mc I_{a1}} \lambda_i^a + \frac{1}{| \mc I_{a'1}|} \sum_{i \in \mc I_{a'1}} \lambda_i^a - 1 \le \zeta,  \\
            1+w^\top \wh x_i + \rho \| w \|_* + b  \le  \lambda_i^a \qquad \forall i \in \mc I_{a1}, \\
            1-w^\top \wh x_i + \rho \| w \|_* - b  \le  \lambda_i^a \qquad \forall i \in \mc I_{a'1}
            \end{array} 
            \right\}  & \forall (a, a') \in \{(0, 1), (1, 0)\}.
        \end{array}
    \ee
\end{theorem}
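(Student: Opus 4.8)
The plan is to follow the same blueprint as the proof of Theorem~\ref{thm:refor-probtrust}, exploiting that under the ground metric~\eqref{eq:cost} the marginal constraints in $\mbb B(\Pnom)$ are redundant, so that every $\QQ \in \mbb B(\Pnom)$ decomposes as $\QQ(\diff x \times \diff a \times \diff y) = N^{-1}\sum_{i=1}^N \pi_i(\diff x)\,\delta_{(\wh a_i, \wh y_i)}(\diff a \times \diff y)$ with each $\pi_i$ supported on the ball $\{x_i : \|x_i - \wh x_i\| \le \rho\}$, and where the $\pi_i$ may be chosen independently. I would then reformulate the objective and the fairness constraint of~\eqref{eq:dro-svm} separately and assemble them into~\eqref{eq:refor-cvx}. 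The only genuinely new ingredient relative to Theorem~\ref{thm:refor-probtrust} is a hinge-loss analogue of Lemma~\ref{lemma:indicator}; everything else is a transcription of the earlier argument with the indicator replaced by the hinge loss.

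For the objective, the independence of the $\pi_i$ lets me push the supremum through the expectation to obtain
\[
\Sup{\QQ \in \mbb B(\Pnom)} \EE_{\QQ}[\max\{0, 1 - Y(w^\top X + b)\}] = \frac{1}{N}\sum_{i=1}^N \Sup{x_i : \|x_i - \wh x_i\| \le \rho} \max\{0, 1 - \wh y_i(w^\top x_i + b)\}.
\]
Since $t \mapsto \max\{0, t\}$ is nondecreasing and $|\wh y_i| = 1$, the inner supremum is attained by maximizing the affine argument over the ball, and the dual-norm identity $\Max{x_i : \|x_i - \wh x_i\| \le \rho} (-\wh y_i w^\top x_i) = -\wh y_i w^\top \wh x_i + \rho\|w\|_*$ then yields $\max\{0, 1 - \wh y_i(w^\top \wh x_i + b) + \rho\|w\|_*\}$. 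An epigraphical reformulation with a variable $t_i \ge 0$ produces precisely the constraint $-\wh y_i(w^\top \wh x_i + b) + \rho\|w\|_* \le t_i - 1$ together with $t_i \in \R_+$; crucially, because the hinge is convex its epigraph is described by linear inequalities and no binary variable is needed.

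For the fairness constraint, I would first invoke the general identity $\Sup{\QQ} \max\{A(\QQ), B(\QQ)\} = \max\{\Sup{\QQ}A(\QQ), \Sup{\QQ}B(\QQ)\}$ to split the outer maximum defining $\mathds H$ into the two cases indexed by $(a, a') \in \{(0,1),(1,0)\}$. Within each case the two expectations are taken under $\QQ_{a1}$ and $\QQ_{a'1}$, whose supports are governed by the disjoint index sets $\mc I_{a1}$ and $\mc I_{a'1}$; this disjointness permits optimizing the two groups independently, so each worst-case conditional expectation reduces, exactly as for the objective, to $\tfrac{1}{|\mc I_{a1}|}\sum_{i \in \mc I_{a1}} \max\{0, 1 + w^\top \wh x_i + \rho\|w\|_* + b\}$ and $\tfrac{1}{|\mc I_{a'1}|}\sum_{i \in \mc I_{a'1}} \max\{0, 1 - w^\top \wh x_i + \rho\|w\|_* - b\}$. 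Introducing continuous epigraph variables $\lambda_i^a \ge 0$ and requiring the resulting sum minus $1$ to be at most $\zeta$ recovers the remaining constraints of~\eqref{eq:refor-cvx}.

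I expect the main obstacle to be the careful statement and justification of the hinge-loss supremum computation, namely the claim that $\Sup{x : \|x - \wh x\| \le \rho}\max\{0, \alpha + w^\top x\} = \max\{0, \alpha + w^\top \wh x + \rho\|w\|_*\}$ and that its epigraph is captured by the two linear inequalities $\lambda \ge 0$ and $\lambda \ge \alpha + w^\top \wh x + \rho\|w\|_*$. The subtle point is verifying that the nonnegativity of the epigraph variables together with the minimization pressure forces each $t_i$ and $\lambda_i^a$ down to the worst-case hinge value at optimality, so that the continuous relaxation is exact. This is where the \emph{convexity} of the hinge, in contrast to the upper semicontinuity of the indicator that necessitated binary variables in Theorem~\ref{thm:refor-probtrust}, does the essential work.
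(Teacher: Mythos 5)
Your proposal is correct and follows essentially the same route as the paper's proof: the same decomposition of $\mbb B(\Pnom)$ into independent per-sample measures $\pi_i$, the same interchange of supremum and expectation, the same dual-norm evaluation of the worst-case affine term, and the same continuous epigraphical reformulation for both the objective and the fairness constraint after splitting the outer maximum over $(a,a') \in \{(0,1),(1,0)\}$ and exploiting $\mc I_{a1} \cap \mc I_{a'1} = \emptyset$. Your explicit isolation of the hinge-loss analogue of Lemma~\ref{lemma:indicator} is, if anything, slightly cleaner than the paper's citation of that lemma (which as stated concerns indicator functions), but the underlying argument is identical.
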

\begin{proof}[Proof of Theorem~\ref{thm:refor-cvx}]

    By exploiting the choice of $c$ with an infinite unit cost on $\mc A$ and $\mc Y$, the ambiguity set $\mbb B(\Pnom)$ can be re-expressed as
    \[
        \mbb B(\Pnom) = \left\{ \QQ \in \mc M(\mc X \times \mc A \times \mc Y):
        \begin{array}{l}
             \exists \pi_i \in \mc M(\mc X) \quad \forall i \in [N], \\
            \QQ(\mathrm{d} x \times \mathrm{d}a \times \mathrm{d}y) = N^{-1} \sum_{i=1}^N \pi_i (\mathrm{d}x) \delta_{(\wh a_i, \wh y_i)}(\mathrm{d}a \times \mathrm{d}y), \\
            \| x_i - \wh x_i \| \le \rho\quad \forall x_i \in \mathrm{supp}(\pi_i)
        \end{array}
        \right\},
    \]
    where $\mathrm{supp}(\pi_i)$ denotes the support of the probability measure $\pi_i$ \cite[Page~441]{ref:aliprantis06hitchhiker}.
    We first provide the reformulation for the objective function of~\eqref{eq:dro-svm}. For any $(w, b) \in \R^{d+1}$, we have
    \begin{align*}
        \Sup{\QQ \in \mbb B(\Pnom)}~\EE_{\QQ}[\max\{0, 1 - Y(w^\top X + b)\}] &= \frac{1}{N} \sum_{i=1}^N \Sup{x_i: \| x_i - \wh x_i\| \le \rho}~\max\{0, 1 - \wh y_i (w^\top x_i + b) \} \\
        &= \frac{1}{N} \sum_{i=1}^N \max\left\{0, 1 - \Inf{x_i: \| x_i - \wh x_i\| \le \rho}~\wh y_i (w^\top x_i + b) \right\} \\
        &= \left\{
            \begin{array}{cll}
                \min & \ds\frac{1}{N} \sum_{i=1}^N t_i \\
                \st & t \in \R_+^N, \\
                & -\wh y_i (w^\top \wh x_i + b)  + \rho \| w\|_*  \le t_i - 1 &\forall i \in [N],
            \end{array}
        \right.
    \end{align*}
    where the last equality follows from an epigraphical reformulation and from the properties of the dual norm. 
    
    Next, we provide the reformulation for the constraints of~\eqref{eq:dro-svm}. For any $(w, b) \in \R^{d+1}$, we can rewrite the worst-case hinge-loss unfairness value as
    \begin{align*}
    \Sup{\QQ \in \mbb B(\Pnom)}~\mathds H(w, b, \QQ) &= \Sup{\QQ \in \mbb B(\Pnom)} \max \left\{ \begin{array}{l}
\EE_{\QQ_{01}}\left[\max\{0, 1+w^\top X + b \}\right] + \EE_{\QQ_{11}}\left[\max\{0,1-w^\top X - b \}\right] -1 , \\
\EE_{\QQ_{11}}\left[\max\{0, 1+w^\top X + b \}\right] + \EE_{\QQ_{01}}\left[\max\{0,1-w^\top X - b \}\right] -1
\end{array}
\right\} \\
    &= \max \left\{ \begin{array}{l}
    \Sup{\QQ \in \mbb B(\Pnom)}~\EE_{\QQ_{01}}\left[\max\{0, 1+w^\top X + b \}\right] + \EE_{\QQ_{11}}\left[\max\{0,1-w^\top X - b \}\right] -1 , \\
    \Sup{\QQ \in \mbb B(\Pnom)}~\EE_{\QQ_{11}}\left[\max\{0, 1+w^\top X + b \}\right] + \EE_{\QQ_{01}}\left[\max\{0,1-w^\top X - b \}\right] -1
    \end{array}
    \right\}.
    \end{align*}
    Fixing any pair $(a, a') \in \{(0, 1), (1, 0)\}$, we have

\begin{align*}
    &\Sup{\QQ \in \mbb B(\Pnom)}~\mathds \EE_{\QQ_{a1}}\left[\max\{0, 1+w^\top X + b \}\right] + \EE_{\QQ_{a'1}}\left[\max\{0,1-w^\top X - b \}\right] -1 \\
    =& \Sup{\QQ \in \mbb B(\Pnom)}~ \EE_{\QQ}\left[\wh p_{a1}^{-1} \max\{0, 1+w^\top X + b \} \mathbbm{1}_{(a, 1)}(A, Y) + \wh p_{a'1}^{-1} \max\{0,1-w^\top X - b \} \mathbbm{1}_{(a', 1)}(A, Y) \right] - 1 \\
    = & \frac{1}{N} \left(\wh p_{a1}^{-1} \sum_{i \in \mc I_{a1}} \Sup{x_i: \| x_i - \wh x_i \| \le \rho} \max\{0,1+ w^\top x_i + b \}  + \wh p_{a'1}^{-1}   \sum_{i \in \mc I_{a'1}} \Sup{x_i: \| x_i - \wh x_i \| \le \rho} \max\{0,1- w^\top x_i - b\}   - \wh p_{a'1}^{-1} | \mc I_{a'1}| \right)\\
    =& \left\{
        \begin{array}{cll}
            \min & \ds \frac{1}{| \mc I_{a1}|}  \sum_{i \in \mc I_{a1}} \lambda_i^a + \frac{1}{| \mc I_{a'1}|} \sum_{i \in \mc I_{a'1}} \lambda_i^a - 1 \\
            \st & \lambda^a \in \R^N_+, \\
            & 1+w^\top \wh x_i + \rho \| w \|_* + b  \le  \lambda_i^a & \forall i \in \mc I_{a1}, \\
            & 1-w^\top \wh x_i + \rho \| w \|_* - b  \le \lambda_i^a & \forall i \in \mc I_{a'1},
        \end{array}
    \right.
\end{align*}
where the last equality follows by applying Lemma~\ref{lemma:indicator} twice and by noticing that $\mc I_{a1} \cap \mc I_{a'1} = \emptyset$. Setting the optimal value of the above minimization problem to be less than $\eta$ completes the proof.
\end{proof}

\noindent If $\| \cdot \|$ is either a 1-norm or an $\infty$-norm on $\R^d$, problem~\eqref{eq:refor-cvx} is a linear optimization problem. If $\| \cdot \|$ is an Euclidean norm, problem~\eqref{eq:refor-cvx} becomes a second-order cone optimization problem. Both types of problems can be solved using off-the-shelf solvers such as MOSEK~\cite{mosek}. 

We now benchmark the $\eps$-DRFC model with the HDRFC model. The reformulation of the $\eps$-DRFC problem \eqref{eq:refor-probtrust} involves $3N$ binary variables and $3N$ big-M constraints, while the reformulation of the HDRFC problem \eqref{eq:refor-cvx} only contains $3N$ continuous variables and $3N$ convex constraints. As solving conic mixed binary programs with big-M constraints is challenging in the face of large sample sizes, HDRFC is more suitable for large instances. In practice, we find the hinge unfairness measure performs quite well, and we will further demonstrate its performance in Section \ref{sect:numerical}.

So far, all the reformulations are derived with $\kappa_{\mc A} = \kappa_{\mc Y} = \infty$, which means we have absolute trust in the sensitive attribute and label. We now extend our study to the case where there is uncertainty in the sensitive attribute and label.

    

\section{Training with General Ground Metric}
\label{sec:notrust}
Previous sections have consider the absolute trust case of the ground cost~\eqref{eq:cost} in which $\kappa_{\mc A} = \kappa_{\mc Y} = \infty$. Here, we consider a general ground metric
\be
\label{eq:gencost}
    c\big( (x', a', y'),  (x, a, y) \big) = \| x - x'\| + \kappa_{\mc A} | a - a'| + \kappa_{\mc Y} | y - y'|
\ee
for some finite values of $\kappa_{\mc A}$ and $\kappa_{\mc Y}$. The case for finite $\kappa_{\mc A}$ and $\kappa_{\mc Y}$ is particularly relevant when we have noisy observations of the sensitive attributes and class labels~\cite{ref:shafieezadeh2019regularization}. Without any loss of generality, we will illustrate how to incorporate this general ground metric using the HDRFC model~\eqref{eq:dro-svm}. 
For the $\eps$-DRFC model \eqref{eq:dro-prob2}, we will provide the corresponding results in Appendix \ref{subsec:mpm_general}. At the same time, we will consider in this section a more general definition of the ambiguity set $\mbb B(\Pnom)$. To this end, we first observe that the ambiguity set $\mbb B(\Pnom)$ can be re-expressed as\footnote{A formal proof can be found in Lemma~\ref{lemma:B-refor}.}
\[
    \mbb B(\Pnom) = \left\{
            \QQ \in \mc M(\mc X \times \mc A \times \mc Y) : \begin{array}{ll}
                \exists \pi_i \in \mc M(\mc X \times \mc A \times \mc Y) \quad \forall i \in [N]:\\
                \QQ = N^{-1} \sum_{i\in[N]}\pi_i, \\
                \Wass_\infty(\pi_i, \delta_{(\wh x_i, \wh a_i, \wh y_i)}) \le \rho & \forall i \in [N], \\
                \QQ(A = a, Y = y) = \wh p_{ay} & \forall (a, y) \in \mc A \times \mc Y
            \end{array}
        \right\}.
\]
Let $\gamma \in [0, 1]$ and consider the ambiguity set $\mc B_\gamma(\Pnom)$ parametrized by $\gamma$ as
    \be \label{eq:modified_ambi}
        \mc B_\gamma(\Pnom) \Let \left\{
            \QQ \in \mc M(\mc X \times \mc A \times \mc Y) : \begin{array}{ll}
                \exists \pi_i \in \mc M(\mc X \times \mc A \times \mc Y) \quad \forall i \in [N]:\\
                \QQ = N^{-1} \sum_{i\in[N]}\pi_i, \\
                \Wass_\infty(\pi_i, \delta_{(\wh x_i, \wh a_i, \wh y_i)}) \le \rho &\forall i \in [N], \\
                \QQ(A = a, Y = y) = \wh p_{ay} & \forall (a, y) \in \mc A \times \mc Y, \\
                \sum_{i\in[N]} \pi_i(A = \wh a_i, Y = \wh y_i) \ge (1-\gamma) N
            \end{array}
        \right\}.
    \ee
    Notice that $\mc B_\gamma(\Pnom)$ differs from $\mbb B(\Pnom)$ solely on the basis of the last constraint defining $\mc B_\gamma(\Pnom)$. Intuitively, the parameter $\gamma$ indicates the maximum proportion of the training sample points that can be flipped in the $(A, Y)$ dimension. When $\gamma = 1$, then the last constraint defining~$\mc B_\gamma(\Pnom)$ collapses into
    \[
        \sum_{i\in[N]} \pi_i(A = \wh a_i, Y = \wh y_i) \ge 0,
    \]
    which holds true trivially. Thus, we can deduce that $\mc B_1(\Pnom) = \mbb B(\Pnom)$. At the other extreme when $\gamma = 0$, then we arrive at the constraint
    \[
        \sum_{i\in[N]} \pi_i(A = \wh a_i, Y = \wh y_i) \ge N \quad \implies \quad \pi_i(A = \wh a_i, Y = \wh y_i) = 1 \qquad \forall i \in [N].
    \]
    The latter constraint resembles the case considered in Section~\ref{sec:prob} and Section~\ref{sec:cvx} with absolute trust in the sensitive attribute and the label. Any value $\gamma \in (0, 1)$ thus can be thought of as an interpolation of the robustness condition between these two above-mentioned extreme cases.
  
    We consider in this section the following modification of problem~\eqref{eq:dro-svm} in which the ambiguity set is replaced by $\mc B_\gamma(\Pnom)$:
\be \label{eq:dro-modified}
    \begin{array}{cl}
        \min & \Sup{\QQ \in \mc B_\gamma(\Pnom)}~\EE_{\QQ}[\max\{0, 1 - Y(w^\top X + b)\}] \\
        \st & w \in \R^d,~b \in \R, \\
        & \Sup{\QQ \in \mc B_\gamma(\Pnom)}~\mathds H(w, b, \QQ) \le \zeta.
    \end{array}
\ee
It is easy to show, by modifying Proposition~\ref{prop:U_h_lowerbound} and leveraging Corollary~\ref{corollary:compact2}, the lower bound of $\zeta$ is still one. We now present the main result of this section, which provides the reformulation for problem~\eqref{eq:dro-modified}.
\begin{theorem}(HDRFC reformulation) \label{thm:refor}
    Suppose that the ground metric is prescribed using~\eqref{eq:gencost}, for any $\gamma \in (0, 1)$, problem~\eqref{eq:dro-modified} is equivalent to the mixed binary conic program
\begin{equation*} 
\begin{array}{cll}
\inf&\ds \frac{1}{N} \sum_{i \in [N]} \nu_i + \sum_{(\bar a, \bar y) \in \mc A \times \mc Y} \hat p_{\bar a \bar y} \mu_{\bar a \bar y}-\theta(1-\gamma) & \\
 \st& \nu \in \R^N ,\; \theta \in \R_+,\; \mu \in \R^{2\times 2},  \\
  &\nu^a \in \R^N ,\; \theta^a \in \R_+,\; \mu^a \in \R^{2\times 2} \qquad \forall (a, a') \in \{(0, 1), (1, 0)\}, \\
&\hspace{-2mm}\left.\begin{array}{l}
   \textup{If} \ \kappa_{\mc A} |a - \wh a_i| + \kappa_{\mc Y}| y - \wh y_i| \leq \rho:\\
    \quad 0 \leq \mu_{ay} - \theta \mathbbm{1}_{(\hat a_i, \hat y_i)} (a, y) + \nu_i,  \\
    \quad 1 - yb -y w^\top \wh x_i + (\rho - \kappa_{\mc A}|a - \wh a_i| - \kappa_{\mc Y}|y - \wh y_i|) \| w \|_*\\
    \hspace{4cm} \leq \mu_{ay} - \theta \mathbbm{1}_{(\hat a_i, \hat y_i)} (a, y) + \nu_i 
  \end{array}
  \right\}\quad \forall i \in [N] \quad \forall (a, y) \in \mc A \times \mc Y,\\
 &\hspace{-2mm}\left.
\begin{array}{l}
   \textup{If} \ \kappa_{\mc A} |a - \wh a_i| + \kappa_{\mc Y}| 1 - \wh y_i| \leq \rho:\\
    \quad \wh 0 \leq \wh p_{a1}( \mu_{a,1}^a - \theta^a \mathbbm{1}_{(\hat a_i, \hat y_i)} (a, 1) + \nu_i^a ), \\
    \quad 1 + w^\top \wh x_i + (\rho- \kappa_{\mc A} |a - \wh a_i| - \kappa_{\mc Y}| 1 - \wh y_i|) \|w\|_* + b  \\  \hspace{4cm}  \leq  \wh p_{a1} (\mu_{a,1}^a - \theta^a \mathbbm{1}_{(\hat a_i, \hat y_i)} (a, 1) + \nu_i^a )\\
    \textup{If} \ \kappa_{\mc A} |a' - \wh a_i| + \kappa_{\mc Y}| 1 - \wh y_i| \leq \rho:\\
    \quad \wh 0 \leq  \wh p_{a'1}(\mu_{a'1}^a - \theta^a \mathbbm{1}_{(\hat a_i, \hat y_i)} (a', 1) + \nu_i^a),  \\
    \quad 1 -w^\top \wh x_i + (\rho- \kappa_{\mc A} |a' - \wh a_i| - \kappa_{\mc Y}| 1 - \wh y_i|) \|w\|_* - b  \\ \hspace{4cm} \leq  \wh p_{a'1}( \mu_{a'1}^a - \theta^a \mathbbm{1}_{(\hat a_i, \hat y_i)} (a', 1) + \nu_i^a ) \\
    \textup{If} \ \kappa_{\mc A} |a - \wh a_i| + \kappa_{\mc Y}| -1 - \wh y_i| \leq \rho:\\
    \quad 0 \leq \mu_{a,-1}^a - \theta^a \mathbbm{1}_{(\hat a_i, \hat y_i)} (a, -1) + \nu_i^a\\
    \textup{If} \ \kappa_{\mc A} |a' - \wh a_i| + \kappa_{\mc Y}| -1 - \wh y_i| \leq \rho:\\
    \quad 0 \leq \mu_{a',-1}^a - \theta^a \mathbbm{1}_{(\hat a_i, \hat y_i)} (a', -1) + \nu_i^a\\
\end{array}
\right\} \quad \forall (a, a') \in \{(0, 1), (1, 0)\} \quad \forall i \in [N], \\
& \ds\frac{1}{N} \sum_{i \in [N]} \nu_i^a + \sum_{(\bar a, \bar y) \in \mc A \times \mc Y} \hat p_{ay} \mu_{\bar a \bar y}^a-\theta^a(1-\gamma) -1 \leq \zeta
\quad \forall a \in \mc A.\\
\end{array}
\end{equation*}
\end{theorem}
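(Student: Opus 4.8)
The plan is to follow the template of the proof of Theorem~\ref{thm:refor-cvx}, but now exploit that finite ground costs $\kappa_{\mc A},\kappa_{\mc Y}$ allow probability mass to migrate across the discrete $(A,Y)$ cells, subject to the fixed marginals and the side constraint defining $\mc B_\gamma(\Pnom)$. I would treat the worst-case objective and the worst-case unfairness constraint separately; in each case the reduction proceeds in two stages, first collapsing the infinite-dimensional problem over the transport plans $\pi_i$ into a finite linear program over transported masses, then applying linear programming duality. For the objective, fix $(w,b)$ and study $\sup_{\QQ\in\mc B_\gamma(\Pnom)}\EE_{\QQ}[\max\{0,1-Y(w^\top X+b)\}]$. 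Using the transport-plan representation of $\mc B_\gamma(\Pnom)$ recorded just before~\eqref{eq:modified_ambi}, each $\pi_i$ is a probability measure whose type-$\infty$ Wasserstein ball forces $\supp(\pi_i)\subseteq\{(x,a,y):\|x-\wh x_i\|+\kappa_{\mc A}|a-\wh a_i|+\kappa_{\mc Y}|y-\wh y_i|\le\rho\}$. Since $\mc A,\mc Y$ are finite, I would decompose $\pi_i$ into its restrictions to each cell $(a,y)$, writing $m_i^{ay}=\pi_i(A=a,Y=y)$ for the transported mass, where the cell is reachable only when the residual radius $r_i^{ay}\defeq\rho-\kappa_{\mc A}|a-\wh a_i|-\kappa_{\mc Y}|y-\wh y_i|$ is nonnegative. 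As the integrand is linear in the cell-conditional law of $X$ and each residual ball is compact, the inner supremum concentrates mass at the worst point, and the dual-norm identity used in Lemma~\ref{lemma:indicator} gives $\sup_{\|x-\wh x_i\|\le r_i^{ay}}\max\{0,1-y(w^\top x+b)\}=\max\{0,1-yb-yw^\top\wh x_i+r_i^{ay}\|w\|_*\}$. This collapses the worst-case expectation into the finite linear program $\max_m\frac1N\sum_i\sum_{(a,y)}m_i^{ay}\max\{0,1-yb-yw^\top\wh x_i+r_i^{ay}\|w\|_*\}$ over reachable cells, subject to $m\ge0$, $\sum_{(a,y)}m_i^{ay}=1$, $\frac1N\sum_i m_i^{ay}=\wh p_{ay}$, and $\sum_i m_i^{\wh a_i\wh y_i}\ge(1-\gamma)N$.

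Next I would assign multipliers $\nu_i$ (free) to the normalization constraints, $\mu_{ay}$ (free) to the marginal constraints, and $\theta\ge0$ to the side constraint, and invoke strong LP duality, which is automatic since the reduced program is a feasible, bounded finite LP. Requiring the objective coefficient of each $m_i^{ay}\ge0$ in the Lagrangian to be nonpositive yields, after absorbing the $1/N$ prefactor into $\nu_i$ and $\theta$, the inequality $\max\{0,1-yb-yw^\top\wh x_i+r_i^{ay}\|w\|_*\}\le\mu_{ay}-\theta\,\mathbbm{1}_{(\wh a_i,\wh y_i)}(a,y)+\nu_i$ for every reachable cell. This splits into the two linear inequalities of the first constraint block, the $0\le\cdots$ branch encoding the $\max\{0,\cdot\}$ and the other its nontrivial branch, while the dual value becomes exactly $\frac1N\sum_i\nu_i+\sum_{(\bar a,\bar y)}\wh p_{\bar a\bar y}\mu_{\bar a\bar y}-\theta(1-\gamma)$.

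For the unfairness constraint I would first push the supremum through the outer maximum via $\sup_{\QQ}\max\{T_1(\QQ),T_2(\QQ)\}=\max\{\sup_{\QQ}T_1,\sup_{\QQ}T_2\}$, exactly as in Theorem~\ref{thm:refor-cvx}, obtaining a separate worst-case problem for each pair $(a,a')\in\{(0,1),(1,0)\}$. Since the marginals are pinned to $\wh p$ inside $\mc B_\gamma(\Pnom)$, the conditionals linearize as $\EE_{\QQ_{a1}}[\cdot]=\wh p_{a1}^{-1}\EE_{\QQ}[\cdot\,\mathbbm{1}_{(a,1)}(A,Y)]$, so the objective of this problem is a mass-weighted combination of $\wh p_{a1}^{-1}\max\{0,1+w^\top\wh x_i+b+r_i^{a1}\|w\|_*\}$ over cell $(a,1)$ and $\wh p_{a'1}^{-1}\max\{0,1-w^\top\wh x_i-b+r_i^{a'1}\|w\|_*\}$ over cell $(a',1)$, with no contribution from the two negative-label cells. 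The same reduction-and-dualization, now with its own multipliers $\nu^a,\mu^a,\theta^a$, produces the four ``if reachable'' blocks: the two label-$1$ cells carry the hinge and hence two inequalities each, with the $\wh p_{a1}$ and $\wh p_{a'1}$ scalings appearing because of the conditional normalization, whereas the two label-$(-1)$ cells have zero objective coefficient and hence only the $0\le\cdots$ inequality. Requiring the resulting dual value minus $1$ to be at most $\zeta$ yields the final constraint, and assembling the two reformulations gives the claimed program.

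The delicate step is the first one: rigorously justifying that the infinite-dimensional transport problem collapses to the finite LP in $m$, i.e.\ that it is without loss to replace each cell-conditional law by a point mass at the norm-maximizer and that strong duality transfers back to the original measure problem. I would handle this by appealing to the support characterization of $\mc B_\gamma(\Pnom)$ (Lemma~\ref{lemma:B-refor}) together with the compactness furnished by Corollary~\ref{corollary:compact2}, which guarantee that the worst-case value is finite and attained, after which strong duality for the finite LP is immediate. The remaining work is purely bookkeeping: tracking the $1/N$-versus-$N$ rescaling of $(\nu,\theta)$ and the $\wh p^{-1}$ normalization so that the multipliers coincide exactly with those displayed in the theorem.
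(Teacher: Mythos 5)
Your proposal is correct and follows essentially the same route as the paper: decompose each transport plan $\pi_i$ over the reachable $(a,y)$ cells, collapse the cell-conditional suprema via the dual-norm identity, reduce to a finite LP over transported masses, and apply LP strong duality separately to the objective and to each pair $(a,a')$ in the unfairness constraint, followed by the same four-case reachability analysis. The only difference is organizational: the paper packages the reduction-plus-duality step as a standalone result (Lemma~\ref{lem:strong_dual}) invoked twice with different loss functions $\phi$, whereas you inline that argument in both places.
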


 In the remainder of this section, we will provide the proof of Theorem~\ref{thm:refor}. This proof leverages the following duality result.
\begin{lemma}(Strong duality)\label{lem:strong_dual}
    Let $\phi:\mc X \times \mc A \times \mc Y \rightarrow \R$ be a Borel measurable loss function. Then for any~$\gamma \in (0, 1)$, the semi-infinite program
    \begin{subequations}
    \begin{equation*}
        \Sup{\QQ \in \mc B_\gamma(\Pnom)}    \EE_{\QQ} [\phi(X,A,Y)]
    \end{equation*}
    admits the following dual form
    \begin{equation*}
    \begin{array}{cll}
         \inf&\ds \frac{1}{N} \sum_{i \in [N]} \nu_i + \sum_{a \in \mc A, y \in \mc Y} \hat p_{a,y} \mu_{a,y}-\theta(1-\gamma) & \\
         \st& \nu \in \R^N ,~\theta \in \R_+,~\mu \in \R^{2\times 2},\\
         & \Sup{x: \| x - \wh x_i\| \le \rho - \kappa_{\mc A}|a - \wh a_i| - \kappa_{\mc Y}|y - \wh y_i|}\phi(x, a, y) \leq \mu_{ay} - \theta \mathbbm{1}_{(\hat a_i, \hat y_i)} (a,y) + \nu_i & \forall i \in [N] \quad \forall (a, y) \in \mc A \times \mc Y,
    \end{array}
    \end{equation*}
    \end{subequations}
    where the supremum value is considered to be $-\infty$ if the corresponding feasible set is empty.
\end{lemma}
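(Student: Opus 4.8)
The plan is to recognize the worst-case expectation as an infinite-dimensional linear program over measures and to obtain the claimed dual by Lagrangian duality, securing the zero-gap property through a minimax argument that exploits the compactness built into the $\Wass_\infty$ ball. First I would rewrite the problem using the disintegration stated just above the lemma. Since $\Wass_\infty(\pi_i,\delta_{(\wh x_i,\wh a_i,\wh y_i)})\le\rho$ forces each $\pi_i$ to be supported on $B_i=\{(x,a,y):\|x-\wh x_i\|+\kappa_{\mc A}|a-\wh a_i|+\kappa_{\mc Y}|y-\wh y_i|\le\rho\}$, the worst-case expectation becomes
\[
\Sup{\QQ\in\mc B_\gamma(\Pnom)}\EE_{\QQ}[\phi]=\sup\Big\{\tfrac1N\textstyle\sum_{i\in[N]}\int\phi\,\mathrm{d}\pi_i\Big\}
\]
over probability measures $\pi_i$ supported on $B_i$, subject to the $|\mc A\times\mc Y|$ marginal equalities $\tfrac1N\sum_i\pi_i(A=a,Y=y)=\wh p_{ay}$ and the single flip inequality $\sum_i\pi_i(A=\wh a_i,Y=\wh y_i)\ge(1-\gamma)N$. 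The decisive structural feature is that each $B_i$ is \emph{compact}: $\mc A$ and $\mc Y$ are finite, and for each admissible $(a,y)$ the $x$-slice is a closed norm ball, so the problem is a moment problem on compact supports coupled by finitely many linear constraints.

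Next I would form the Lagrangian, assigning a free multiplier $\mu_{ay}$ to each marginal equality and a nonnegative multiplier $\theta\ge0$ to the flip inequality, while keeping the normalization $\int\mathrm{d}\pi_i=1$ explicit. Weak duality is then immediate: for any feasible $\pi$ and any $(\mu,\theta)$ with $\theta\ge0$ satisfying the dual constraint, integrating the inequality $\phi\le\nu_i+\mu_{ay}-\theta\mathbbm{1}_{(\wh a_i,\wh y_i)}(a,y)$ against $\pi_i$ and invoking the marginal equalities together with $\sum_i\pi_i(\wh a_i,\wh y_i)\ge(1-\gamma)N$ and $\theta\ge0$ shows the primal objective is bounded above by the dual objective. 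Writing the value as the saddle expression $\sup_\pi\inf_{\mu,\theta\ge0}L$, the inner infimum reproduces exactly the feasibility constraints, so this saddle value equals the primal optimum.

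The crux is to interchange the supremum and infimum. Here I would invoke Sion's minimax theorem: the Lagrangian is affine (hence concave) in $\pi$ and affine (hence convex) in $(\mu,\theta)$; the feasible set $\prod_i\mc M(B_i)$ of tuples of probability measures on the compact $B_i$ is convex and weak-$*$ compact by Banach--Alaoglu/Prokhorov; and $\pi\mapsto\int\phi\,\mathrm{d}\pi$ is weak-$*$ upper semicontinuous whenever $\phi$ is upper semicontinuous and bounded above on the compact $B_i$ (as holds for the affine and hinge losses used downstream). After the interchange, for fixed $(\mu,\theta)$ the inner supremum decouples over $i$ and, by the Bauer maximum principle, reduces to a pointwise supremum over $B_i$; resolving the discrete $(a,y)$ coordinate turns it into $\sup_{x:\|x-\wh x_i\|\le\rho-\kappa_{\mc A}|a-\wh a_i|-\kappa_{\mc Y}|y-\wh y_i|}\phi(x,a,y)$, with the supremum read as $-\infty$ when the radius is negative (the corresponding $(a,y)$ is too costly to reach, matching the stated convention). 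Linearizing each inner supremum with epigraphical variables $\nu_i$ and rescaling $(\nu_i,\theta)$ by $N$ recovers verbatim the dual in the statement.

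I expect the main obstacle to be rigorously justifying the zero duality gap. Because the dual domain in $(\mu,\theta)$ is unbounded, all compactness must come from the primal side, which is available precisely because a type-$\infty$ Wasserstein ball around a Dirac has bounded, hence compact, support; and the minimax step additionally needs enough regularity of $\phi$ (upper semicontinuity and boundedness above on $B_i$) for the integral functional to be weak-$*$ upper semicontinuous. If one insists on merely Borel measurable $\phi$, the interchange should instead be deferred to a general conic/moment-duality theorem (of Shapiro type) that guarantees no gap for moment problems with compact support and finite primal value.
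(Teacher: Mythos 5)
Your dual has the right structure, your weak-duality computation is correct, and the Sion-based interchange you describe is sound \emph{when} $\phi$ is upper semicontinuous and bounded above on the compact sets $B_i$ --- which does cover the hinge-loss instances of Section 5. However, there is a genuine gap relative to the lemma as stated: the lemma assumes only Borel measurable $\phi$, and the paper actually needs that generality. In Appendix D the lemma is invoked with $\phi_a(X,A,Y)=\wh p_{a1}^{-1} \mathbbm{I}(w^\top X + b > -\eps) \mathbbm{1}_{(a,1)}(A, Y) - \wh p_{a'1}^{-1} \mbb I(w^\top X + b \ge 0) \mathbbm{1}_{(a',1)}(A, Y)$ (and with $\phi=\mathbbm{I}(Y(w^\top X+b)<\eps)$ for the objective). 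These functions are \emph{lower} semicontinuous (indicator of an open set, minus indicator of a closed set), not upper semicontinuous, so $\pi \mapsto \int \phi\,\mathrm{d}\pi$ is weak-$*$ LSC rather than USC, and Sion's hypotheses fail precisely on the side where you need them (USC in the maximizing variable over the compact primal set). Your fallback --- deferring to a ``Shapiro-type'' conic/moment duality theorem on the strength of compact support and finite primal value alone --- is not a theorem: without semicontinuity of the integrand, zero duality gap for moment problems requires additional regularity (a Slater-type interior-point condition relative to the moment cone, or closedness of that cone), and verifying such a condition here, in the presence of four marginal \emph{equalities} plus the flip inequality, is exactly the work the proposal leaves undone.

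The paper closes this gap by a more elementary reduction that sidesteps all topology, and you should compare it with your route. Each $\pi_i$ is decomposed as $\pi_i(\mathrm{d}x \times \mathrm{d}a' \times \mathrm{d}y') = \sum_{(a,y)\in\mc A\times\mc Y} \tau_{iay}\,\pi_{iay}(\mathrm{d}x)\,\delta_{(a,y)}(\mathrm{d}a' \times \mathrm{d}y')$, i.e., into finite-dimensional mixture weights $\tau$ and conditional distributions $\pi_{iay}$ of $X$. The key structural observation is that every coupling constraint (the marginal equalities and the flip inequality) involves only $\tau$, never the conditionals; hence the conditionals can be optimized out separately, and $\sup_{\pi_{iay}\in\mc M(\mc X_{iay})} \int \phi(x,a,y)\,\pi_{iay}(\mathrm{d}x) = \sup_{x \in \mc X_{iay}} \phi(x,a,y)$ holds for \emph{arbitrary} Borel $\phi$, since Dirac measures suffice and no semicontinuity is needed. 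What remains is a finite linear program in $\tau \in \R_+^{N\times 2\times 2}$, which is feasible (take $\tau_{iay}=\mathbbm{1}_{(\wh a_i,\wh y_i)}(a,y)$), so finite-dimensional LP strong duality applies unconditionally and yields the stated dual after rescaling the multipliers by $N$. In short: your approach buys a conceptually standard infinite-dimensional Lagrangian picture but only under regularity of $\phi$; the paper's decomposition buys validity at the stated level of generality with no minimax theorem, no compactness argument, and no constraint qualification.
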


\begin{proof}[Proof of Lemma~\ref{lem:strong_dual}]
Using the definition of the type-$\infty$ Wasserstein distance, we can re-express the ambiguity set $\mc B_\gamma(\Pnom)$ as
	\begin{align*}
		\mc B_\gamma(\Pnom) 
		=&\left\{ \QQ \in \mc M(\mc X \times \mc A \times \mc Y):
		\begin{array}{l}
		\exists \pi_i \in \mc M(\mc X \times \mc A \times \mc Y) ~\forall i \in [N] \text{ such that } \QQ = \frac{1}{N} \sum_{i\in[N]} \pi_i,\\
		N^{-1} \sum_{i=1}^N \pi_i (A = a, Y = y) = \wh p_{ay} \quad \forall (a, y) \in \mc A \times \mc Y, \\
		\| x - \wh x_i\| + \kappa_{\mc A} |a - \wh a_i| + \kappa_{\mc Y}| y - \wh y_i| \leq \rho \quad \forall (x, a, y) \in \mathrm{supp}(\pi_i) \quad \forall i \in [N], \\
		\sum_{i\in[N]} \pi_i(A = \wh a_i, Y = \wh y_i) \ge (1-\gamma) N\\
		\end{array}
		\right\}.
    \end{align*}
The worst-case expected loss can now be written as
\begin{align*}
    \Sup{\QQ \in \mc B_\gamma(\Pnom)}~\EE_{\QQ}[\phi(X,A,Y)] = \left\{
        \begin{array}{cll}
            \sup &\ds \frac{1}{N} \sum_{i \in [N]}~\EE_{\pi_i}[\phi(X,A,Y)] \\
            \st & \pi_i \in \mc M(\mc X \times \mc A \times \mc Y) & \forall i \in [N], \\
            & \ds \sum_{i\in[N]} \pi_i(A = a, Y = y) = N \wh p_{a y} & \forall (a, y) \in \mc A \times \mc Y, \\
            & \ds \sum_{i\in[N]} \pi_i(A = \wh a_i, Y = \wh y_i) \ge (1-\gamma) N,\\
            & \| x - \wh x_i\| + \kappa_{\mc A} | a- \wh a_i| + \kappa_{\mc Y}| y - \wh y_i| \leq \rho & \forall (x, a, y) \in \mathrm{supp}(\pi_i) \quad \forall i \in [N].
        \end{array}
    \right.
    \end{align*} 
    Any $\pi_i \in \mc M(\mc X \times \mc A \times \mc Y)$ can be decomposed as
    \[
        \pi_i(\mathrm{d} x \times \mathrm{d} a' \times \mathrm{d} y') = \sum_{(a, y) \in \mc A \times \mc Y} \tau_{iay} \pi_{iay}(\mathrm{d}x)\delta_{(a, y)}(\mathrm{d} a' \times \mathrm{d} y'),
    \]
    where $\pi_{iay}$ is the conditional distribution of $X$ given that $(A, Y) = (a, y)$ and the nonnegative weights $\tau \in \R_+^{N \times | \mc A| \times |\mc Y|}$ satisfy
    \[
      \sum_{(a, y) \in \mc A \times \mc Y} \tau_{iay} = 1 \qquad \forall i \in [N].
    \]
    Moreover, define the following optimal values
    \[
        v_{iay} = \sup \{ \phi(x, a, y) : x \in \mc X,~ \| x - \wh x_i\| \le \rho - \kappa_{\mc A}|a - \wh a_i| - \kappa_{\mc Y}|y - \wh y_i|\}
    \]
    for each $i \in [N]$ and $(a, y) \in \mc A \times \mc Y$. Denote momentarily the feasible set of the above optimization problem as $\mc X_{iay}$. Notice that $\mc X_{iay} =\emptyset$ if $\rho - \kappa_{\mc A}|a - \wh a_i| - \kappa_{\mc Y}|y - \wh y_i| < 0$ and in this case we set $v_{iay} = -\infty$. By definition, we also have
    \[
        v_{iay} = \Sup{\pi_{iay} \in \mc M(\mc X)} \int_{\mc X_{iay}} \phi(x, a, y) \pi_{iay}(\mathrm d x)
    \]
    whenever $\mc X_{iay}$ is non-empty. 
    Using this definition of $v$ and by the above decomposition of $\pi$, we obtain
    \begin{align*}
    \Sup{\QQ \in \mc B_\gamma(\Pnom)}~\EE_{\QQ}[\phi(X,A,Y)] =  \left\{
        \begin{array}{cll}
            \sup & \ds \frac{1}{N} \sum_{i \in [N]}~\sum_{(a,y) \in \mc A \times \mc Y} \tau_{iay} v_{iay} \\
            \st & \tau_{iay}  \in \R_+  &\forall i \in [N] \quad \forall (a, y) \in \mc A \times \mc Y, \\
            &\ds \sum_{(a, y) \in \mc A \times \mc Y} \tau_{iay} = 1 & \forall i \in [N], \\
            &\ds \sum_{i\in[N]} \tau_{iay} = N \wh p_{a y} & \forall (a, y) \in \mc A \times \mc Y, \\
            &\ds \sum_{i\in[N]} \tau_{i \wh a_i \wh y_i} \ge (1-\gamma) N,
        \end{array}
    \right.
    \end{align*} 
    which is a finite-dimensional linear program. Strong duality result from linear programming asserts that
    \begin{align*}
    \Sup{\QQ \in \mc B_\gamma(\Pnom)}~\EE_{\QQ}[\phi(X,A,Y)] =  \left\{
        \begin{array}{cll}
            \inf & \ds \frac{1}{N} \sum_{i \in [N]} \nu_i + \sum_{(a,y) \in \mc A \times \mc Y} \hat p_{ay} \mu_{ay}-\theta(1-\gamma) \\
            \st & \nu \in \R^N ,~\mu \in \R^{|\mc A|\times |\mc Y|},~\theta \in \R_+,\\
            & v_{iay} \leq \nu_i +\mu_{ay} - \theta \mathbbm{1}_{(\hat a_i, \hat y_i)} (a, y)    &\forall i \in [N] \quad \forall (a, y) \in \mc A \times \mc Y.
        \end{array}
    \right.
    \end{align*} 
    Substituting the definition of $v$ into the above optimization problem completes the proof.
\end{proof}

Equipped with the duality result of Lemma~\ref{lem:strong_dual}, we now present the proof of Theorem~\ref{thm:refor}.
\begin{proof}[Proof of Theorem~\ref{thm:refor}]
Notice that the objective function can be written in the form of
\[ 
        \Sup{\QQ \in \mc B_\gamma(\Pnom)}    \EE_{\QQ} [\phi(X,A,Y)],
\]
where $\phi(X,A,Y)=\max\{0, 1 - Y(w^\top X + b)\}$. Thus, by Lemma \ref{lem:strong_dual}, we have
\begin{align}\label{eq:obj_dual}
&\nonumber\Sup{\QQ \in \mc B_\gamma(\Pnom)}    \EE_{\QQ} [\phi(X,A,Y)]\\
\nonumber=&\left\{
\begin{array}{cll}
 \inf& \ds \frac{1}{N} \sum_{i \in [N]} \nu_i + \sum_{(\bar a, \bar y) \in \mc A \times \mc Y} \hat p_{\bar a \bar y} \mu_{\bar a \bar y}-\theta(1-\gamma) & \\
 \st& \nu \in \R^N,~\theta \in \R_+,~\mu \in \R^{2\times 2},\\
 &\Sup{x\in \mc X: \| x - \wh x_i\| \le \rho - \kappa_{\mc A}|a - \wh a_i| - \kappa_{\mc Y}|y - \wh y_i|} \max\{0,1 - y (w^\top x + b) \} \leq \mu_{ay} - \theta \mathbbm{1}_{(\hat a_i, \hat y_i)} (a,  y) + \nu_i \\
 &\hspace{10cm} \forall i \in [N] \quad \forall (a, y) \in \mc A \times \mc Y.
\end{array}
 \right.
\end{align}
The constraint in the above infimum problem is equivalent to
\[
    \left.\begin{array}{l}
   \textup{If} \ \kappa_{\mc A} |a - \wh a_i| + \kappa_{\mc Y}| y - \wh y_i| \leq \rho:\\
    \quad 0 \leq \mu_{ay} - \theta \mathbbm{1}_{(\hat a_i, \hat y_i)} (a, y) + \nu_i,  \\
    \quad 1 - yb + \Sup{x: \| x - \wh x_i\| \le \rho - \kappa_{\mc A}|a - \wh a_i| - \kappa_{\mc Y}|y - \wh y_i|} \{- yw^\top x\} \leq \mu_{ay} - \theta \mathbbm{1}_{(\hat a_i, \hat y_i)} (a, y) + \nu_i 
  \end{array}
  \right\} \forall i \in [N] \quad \forall (a, y) \in \mc A \times \mc Y.
\]
Recall that $\mc Y = \{-1, +1\}$. Thus, the dual norm relationship implies that
\[
    \Sup{x \in \mc X: \| x - \wh x_i\| \le \rho - \kappa_{\mc A}|a - \wh a_i| - \kappa_{\mc Y}|y - \wh y_i|} \{- yw^\top x\} = -y w^\top \wh x_i + (\rho - \kappa_{\mc A}|a - \wh a_i| - \kappa_{\mc Y}|y - \wh y_i|) \| w \|_*,
\]
which leads to first set of constraints in the reformulation

Next, we show the derivation for constraints. Recall that the worst-case hinge-loss unfairness measure can be written as
\begin{align*}
\Sup{\QQ \in \mc B_\gamma(\Pnom)}~\mathds H(w, b, \QQ) 
    &= \max \left\{ \begin{array}{l}
    \Sup{\QQ \in \mc B_\gamma(\Pnom)}~\EE_{\QQ_{01}}\left[\max\{0, 1+w^\top X + b \}\right] + \EE_{\QQ_{11}}\left[\max\{0,1-w^\top X - b \}\right] -1 , \\
    \Sup{\QQ \in \mc B_\gamma(\Pnom)}~\EE_{\QQ_{11}}\left[\max\{0, 1+w^\top X + b \}\right] + \EE_{\QQ_{01}}\left[\max\{0,1-w^\top X - b \}\right] -1
    \end{array}
    \right\}.
\end{align*}
Consider a fixed pair of $(a, a') \in \{ (0, 1), (1, 0)\}$. Employing the result of Lemma \ref{lem:strong_dual} yields
\begin{align*}
&\Sup{\QQ \in \mc B_\gamma(\Pnom)}~\EE_{\QQ_{a1}}\left[\max\{0, 1+w^\top X + b \}\right] + \EE_{\QQ_{a'1}}\left[\max\{0,1-w^\top X - b \}\right] -1 \\
    =& \Sup{\QQ \in \mc B_\gamma(\Pnom)}~ \EE_{\QQ}[\wh p_{a1}^{-1} \max\{0, 1+w^\top X + b \} \mathbbm{1}_{(a,1)}(A, Y) - \wh p_{a'1}^{-1} \max\{0,1-w^\top X - b \} \mathbbm{1}_{(a',1)}(A, Y)]-1\\
    =&\left\{
\begin{array}{cl}
 \inf& \ds\frac{1}{N} \sum_{i \in [N]} \nu_i^a + \sum_{(\bar a, \bar y) \in \mc A \times \mc Y} \hat p_{ay} \mu_{\bar a \bar y}^a-\theta^a(1-\gamma) - 1  \\
 \st& \nu^a \in \R^N ,\; \theta^a \in \R_+,\; \mu^a \in \R^{2\times 2},\\
 &\Sup{x_i \in \mc X: \| x_i - \wh x_i\| \le \rho - \kappa_{\mc A}| \bar a_i - \wh a_i| - \kappa_{\mc Y}| \bar y_i - \wh y_i|}\phi_a(x_i, \bar a_i, \bar y_i) \leq \mu_{\bar a_i \bar y_i}^a - \theta^a \mathbbm{1}_{(\hat a_i, \hat y_i)} (\bar a_i, \bar y_i) + \nu_i^a \\
 &\hspace{10cm} \forall i \in [N] \quad \forall (\bar a_i, \bar y_i) \in \mc A \times \mc Y,
\end{array}
 \right.
\end{align*}
where the second equality relies on the result of Lemma \ref{lem:strong_dual} by defining 
\[
\phi_a(X,A,Y)=\wh p_{a1}^{-1} \max\{0, 1+w^\top X + b \} \mathbbm{1}_{(a,1)}(A, Y) - \wh p_{a'1}^{-1} \max\{0,1-w^\top X - b \} \mathbbm{1}_{(a',1)}(A, Y).\] 
Fix any $i \in [N]$, we now iterate over different values of $(\bar a_i, \bar y_i)$.
\begin{enumerate}
    \item Case 1: $(\bar a_i, \bar y_i) = (a, 1)$. There is an active constraint if $\kappa_{\mc A} |a - \wh a_i| + \kappa_{\mc Y}| 1 - \wh y_i| \leq \rho$, and the constraint is equivalent to
\begin{align*}
    &\Sup{\forall x_i \in \mc X : \| x_i - \wh x_i\| \leq \rho- \kappa_{\mc A} |a - \wh a_i| - \kappa_{\mc Y}| 1 - \wh y_i|}~\wh p_{a1}^{-1} \max\{0, 1+w^\top x_i + b \} \mathbbm{1}_{(a,1)}(a, 1)   \\
    &\hspace{4cm}+ \wh p_{a'1}^{-1} \max\{0, 1-w^\top x_i - b \} \mathbbm{1}_{(a',1)}(a, 1) \leq \mu_{a1}^a - \theta^a \mathbbm{1}_{(\hat a_i, \hat y_i)} (a, 1) + \nu_i^a  \\
    \Longleftrightarrow & \Sup{\forall x_i \in \mc X: \| x_i - \wh x_i\| \leq \rho- \kappa_{\mc A} |a_ - \wh a_i| - \kappa_{\mc Y}| 1 - \wh y_i| }  \wh p_{a1}^{-1} \max\{0, 1+w^\top x_i + b \}  \leq \mu_{a1}^a - \theta^a \mathbbm{1}_{(\hat a_i, \hat y_i)} (a, 1) + \nu_i^a \\
    \Longleftrightarrow& \left\{\begin{array}{l}
         0 \leq \wh p_{a1}( \mu_{a1}^a - \theta^a \mathbbm{1}_{(\hat a_i, \hat y_i)} (a, 1) + \nu_i^a ), \\
         1+w^\top \wh x_i + (\rho- \kappa_{\mc A} |a - \wh a_i| - \kappa_{\mc Y}| 1 - \wh y_i|) \|w\|_* + b \leq \wh p_{a1}( \mu_{a1}^a - \theta^a \mathbbm{1}_{(\hat a_i, \hat y_i)} (a, 1) + \nu_i^a ),
    \end{array}
    \right.
\end{align*}
where the last equivalence follows from the definition of dual norm. 
    \item Case 2: $(\bar a_i,\bar  y_i)= (a', 1)$. There is an active constraint if $\kappa_{\mc A} |a' - \wh a_i| + \kappa_{\mc Y}| 1 - \wh y_i| \leq \rho$, and the constraint is equivalent to
\begin{align*}
    &\Sup{\forall x_i \in \mc X: \| x_i - \wh x_i\| \leq \rho- \kappa_{\mc A} |a' - \wh a_i| - \kappa_{\mc Y}| 1 - \wh y_i|}~\wh p_{a1}^{-1} \max\{0, 1+w^\top x_i + b \} \mathbbm{1}_{(a,1)}(a', 1) \\
    &\hspace{4cm}+ \wh p_{a'1}^{-1} \max\{0, 1-w^\top x_i - b \} \mathbbm{1}_{(a',1)}(a', 1)     \leq \mu_{a'1}^a - \theta^a \mathbbm{1}_{(\hat a_i, \hat y_i)} (a', 1) + \nu_i^a \\
    \Longleftrightarrow& \Sup{\forall x_i \in \mc X: \| x_i - \wh x_i\| \leq \rho- \kappa_{\mc A} |a' - \wh a_i| - \kappa_{\mc Y}| y_i - \wh y_i| } \wh p_{a'1}^{-1} \max\{0, 1-w^\top x_i - b \}  \leq \mu_{a'1}^a - \theta^a \mathbbm{1}_{(\hat a_i, \hat y_i)} (a', 1) + \nu_i^a\\
    \Longleftrightarrow& \left\{\begin{array}{l}
     0 \leq  \wh p_{a'1}( \mu_{a'1}^a - \theta^a \mathbbm{1}_{(\hat a_i, \hat y_i)} (a', 1) + \nu_i^a ), \\
     1 -w^\top \wh x_i + (\rho- \kappa_{\mc A} |a' - \wh a_i| - \kappa_{\mc Y}| 1 - \wh y_i|) \|w\|_* - b  \leq  \wh p_{a'1} (\mu_{a'1}^a - \theta^a \mathbbm{1}_{(\hat a_i, \hat y_i)} (a', 1) + \nu_i^a ).
    \end{array}
    \right.
\end{align*}

    \item Case 3: $(\bar a_i, \bar y_i)= (a, -1)$. There is an active constraint if $\kappa_{\mc A} |a - \wh a_i| + \kappa_{\mc Y}| -1 - \wh y_i| \leq \rho$, the constraint is equivalent to
\begin{align*}
        &\Sup{\forall x_i \in \mc X : \| x_i - \wh x_i\| \leq \rho- \kappa_{\mc A} |a - \wh a_i| - \kappa_{\mc Y}| -1 - \wh y_i|}~\wh p_{a1}^{-1} \max\{0, 1+w^\top x_i + b \} \mathbbm{1}_{(a,1)}(a, -1) \\
    &\hspace{4cm}+ \wh p_{a'1}^{-1} \max\{0, 1-w^\top x_i - b \} \mathbbm{1}_{(a',1)}(a, -1)\leq \mu_{a,-1}^a - \theta^a \mathbbm{1}_{(\hat a_i, \hat y_i)} (a, -1) + \nu_i^a \\
    \Longleftrightarrow& 0 \leq \mu_{a,-1}^a - \theta^a \mathbbm{1}_{(\hat a_i, \hat y_i)} (a, -1) + \nu_i^a.\\
\end{align*}

    \item Case 4: $(\bar a_i, \bar y_i)= (a', -1)$. There is an active constraint if $\kappa_{\mc A} |a' - \wh a_i| + \kappa_{\mc Y}| -1 - \wh y_i| \leq \rho$, the constraint is equivalent to
\begin{align*}
    &\Sup{\forall x_i \in \mc X : \| x_i - \wh x_i\| \leq \rho- \kappa_{\mc A} |a' - \wh a_i| - \kappa_{\mc Y}| -1 - \wh y_i| }~\wh p_{a1}^{-1} \max\{0, 1+w^\top x_i + b \} \mathbbm{1}_{(a,1)}(a', -1)  \\
    &\hspace{4cm}+ \wh p_{a'1}^{-1} \max\{0, 1-w^\top x_i - b \} \mathbbm{1}_{(a',1)}(a', -1)  \leq \mu_{a',-1}^a - \theta \mathbbm{1}_{(\hat a_i, \hat y_i)} (a', -1) + \nu_i^a\\
    \Longleftrightarrow& 0 \leq \mu_{a',-1}^a - \theta^a \mathbbm{1}_{(\hat a_i, \hat y_i)} (a', -1) + \nu_i^a.\\
\end{align*}   

\end{enumerate}

Notice that at least one of the above four conditions will be satisfied, because when $\bar a_i=\wh a_i$ and $\bar y_i=\wh y_i$, we have 
\[  \kappa_{\mc A} |a_i - \wh a_i| + \kappa_{\mc Y}| y_i - \wh y_i|=0\leq \rho
\] for any $\rho \geq 0$. Combining all four cases leads to the second set of constraints.

The last constraint in the reformulation is obtained by setting the optimal value of the dual problem to be less than $\zeta$ for each value of $a \in \mc A$. This completes the proof.
\end{proof}

\section{Numerical Experiments}
\label{sect:numerical}

In this section we present the numerical experiments and examine the performance of different distributionally robust fair classifiers. Except for the DOB+ method~~\cite{ref:donini2018empirical} which is solved by an \textit{sklearn} built-in solver, all other optimization problems are implemented in Python 3.7 with package CVXPY 1.1.0 and solved by MOSEK 9.2. The experiments were run on a 2.2GHz Intel Core i7 CPU laptop with 8GB RAM.

\subsection{Synthetic Experiments}
We visualize the classification hyperplanes determined by the $\eps$-DRFC model~\eqref{eq:dro-prob2} and the HDRFC model~\eqref{eq:dro-svm} on a toy dataset with 200 samples (50 for training, 150 for testing) and $d=2$ features. For each of the classifiers, we will plot three variants. The $\eps$-C and HC classification hyperplanes are obtained by dropping the fairness constraints and setting the Wasserstein radius to zero. The $\eps$-FC and the HFC classifiers include the fairness constraint, but still without robustness consideration. The $\eps$-DRFC and HDRFC models include both the fairness constraints and robustification with $\rho > 0$.
We choose the ground cost of the form~\eqref{eq:cost} with $\|\cdot\|$ being the $l_{\infty}$-norm and $\kappa_{\mc A} = \kappa_{\mc Y} = \infty$. 


\begin{figure}[!h]
    \centering
    \begin{subfigure}[b]{0.45\textwidth}
    \centering
    \includegraphics[width=\textwidth]{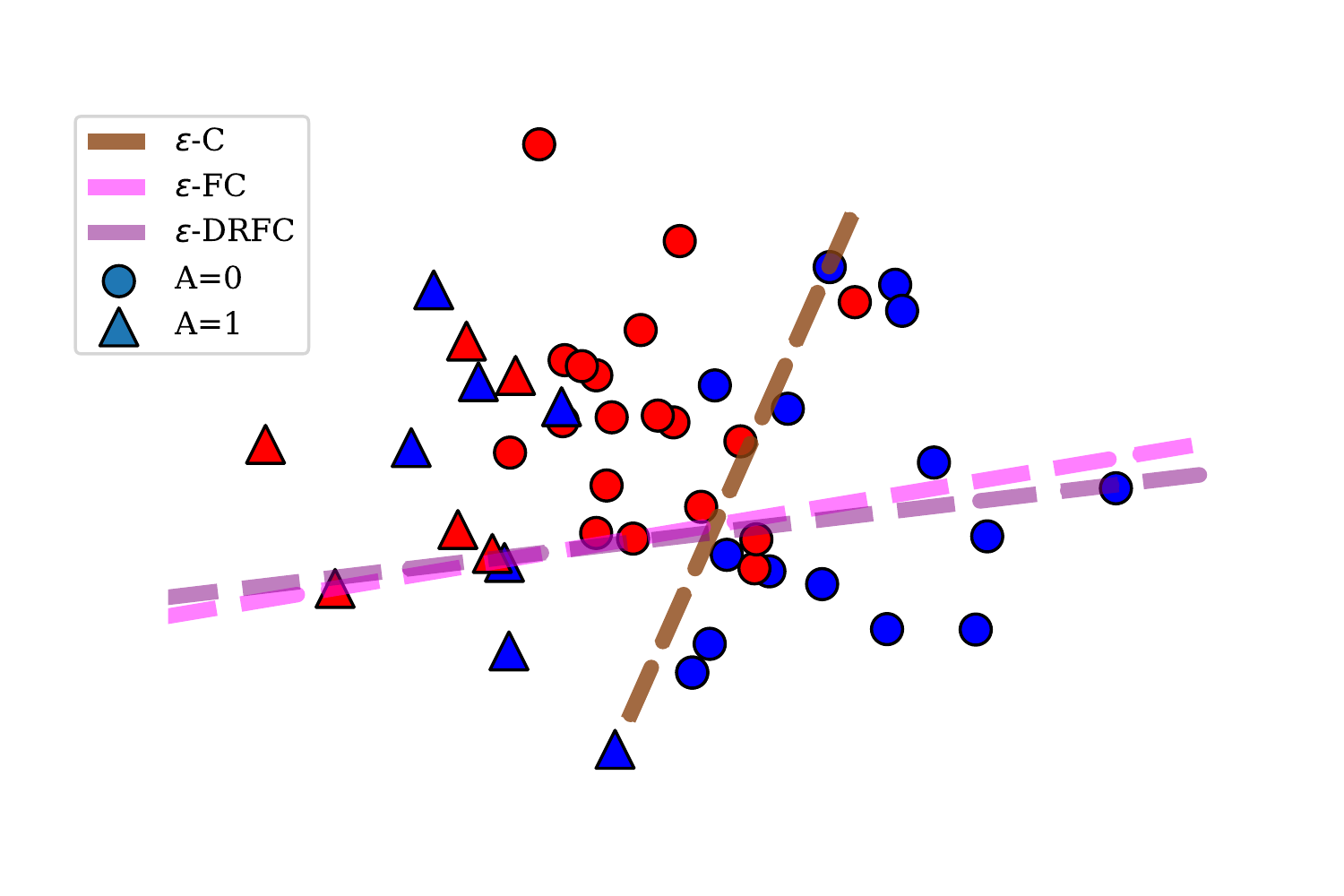}
    \caption{$\eps$-DRFC}
    \end{subfigure}
    \begin{subfigure}[b]{0.45\textwidth}
    \centering
    \includegraphics[width=\textwidth]{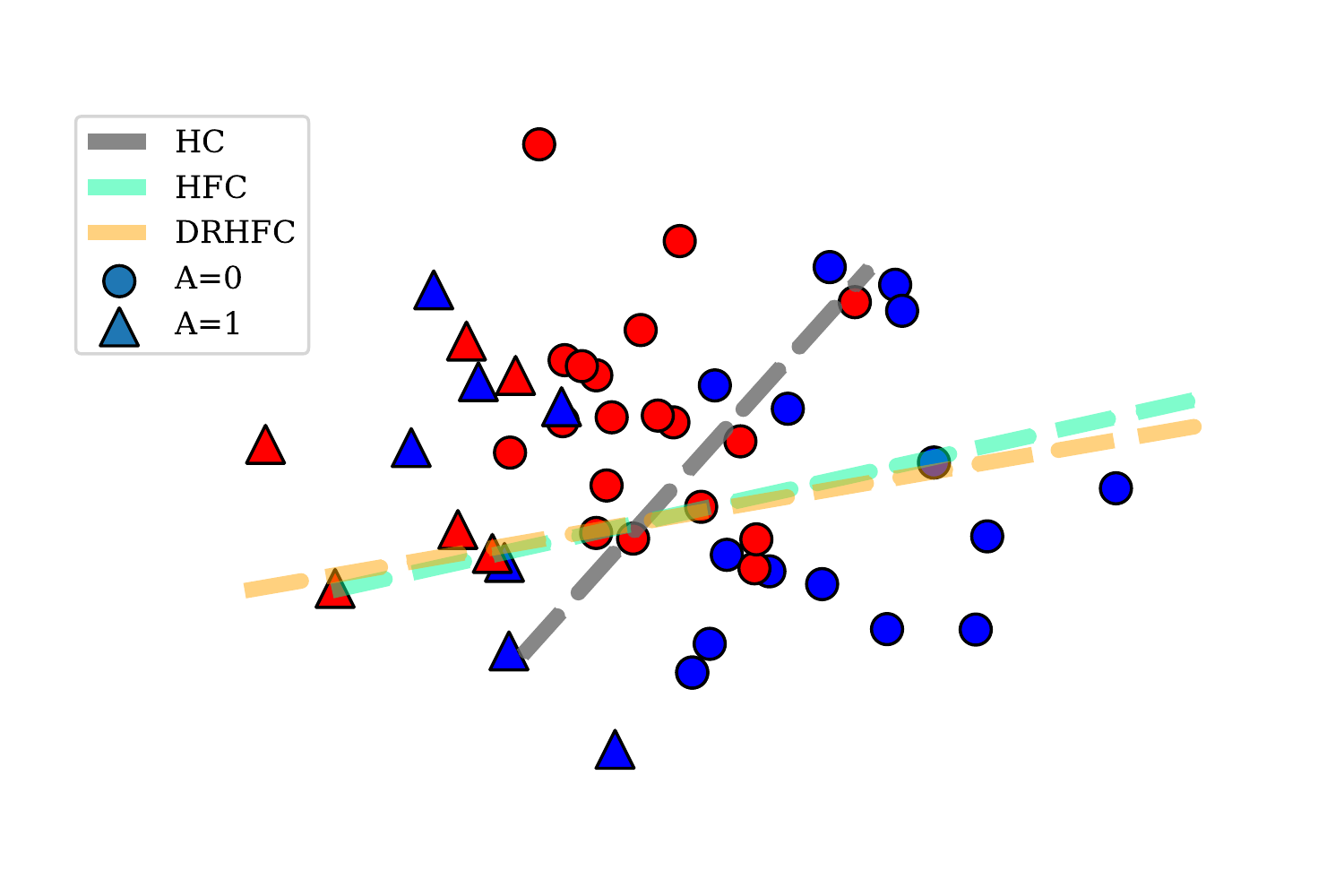}
    \caption{HDRFC}
    \end{subfigure}
    \caption{Classification hyperplanes (dashed) obtained by different approaches. Color encodes the labels (blue for +1 labels and red for -1 labels).}
    \label{fig:synthetic_exper}
\end{figure}

We first demonstrate how the unfairness constraints and the robustification influence the classifiers. Notice that the sensitive attribute $A$ (represented by circles and triangles) is correlated with the feature $X_1$ on the horizontal axis. In the two graphs, all of the four fairness-aware classifiers ($\eps$-FC, $\eps$-DRFC, HFC, HDRFC) assign lower absolute value for the weight $w_1$ corresponding to feature $X_1$. Visually, this shift is reflected by the hyperplane determined by them becoming more horizontal compared to that of $\eps$-C and HC. Moreover, the $\eps$-DRFC and HDRFC models, by being robust, shift their hyperplanes even more horizontal to reduce the dependence of the classifiers on $X_1$ compared to the $\eps$-FC and HFC models.

Next, we benchmark the two objective functions using the $\eps$-C model and the HC model. Recall that the $\eps$-C model minimizes the $\eps$-approximation of the in-sample misclassification rate, i.e., $\hat \PP( Y(w^\top X + b) < \eps)$. In contrast, the HC model minimizes the empirical hinge loss. Note that the HC model is actually equivalent to the well known Support Vector Machine model \cite{cortes1995support}, while $\eps$-C and HC are both vanilla classification models without any fairness constraint. In this graph, we can see a total of $10$ misclassified points in the $\eps$-C result, and a total of $15$ misclassified points in the HC result. Thus, minimizing the $\eps$-approximation of the misclassification probability achieves a higher accuracy in this training dataset.

We then assess the unfairness and accuracy scores on the test set. Compared with the other vanilla classification model HC, the $\eps$-C model achieves higher accuracy in the test set. In addition, by including the fairness constraint, all the fairness-aware classifiers can reduce the unfairness score with a moderate cost of accuracy. Moreover, with the distributionally robust setting, the classification hyperplanes achieve even lower test unfairness scores (evaluated using the EO unfairness measure $\mathds U$ and the empirical distribution supported on the test data).

\begin{table}[h]
\centering
\begin{tabular}{ |p{3cm}||p{3cm}|p{3cm}|  }
 \hline
 Classifier & Test accuracy & Test unfairness ($\mathds U$)\\
 \hline
 $\eps$-C  & 71.03\%   &0.8076\\
 $\eps$-FC  & 57.08\%   &0.0973\\
 $\eps$-DRFC  & 55.92\%   &0.0554\\
 \hline\hline
 HC  & 65.68\%   &0.7092\\
 HFC  & 58.44\%   &0.1500\\
 HDRFC & 57.72\%&  0.1117\\
 \hline
\end{tabular}
\caption{Predictive accuracy and unfairness on test data for the synthetic experiment.}
\label{table:synthetic}
\end{table}

In the second set of synthetic experiments, we compare the performance of our models against the DOB+~\cite{ref:donini2018empirical} and DRFLR~\cite{ref:taskesen2020distributionally}. The DOB+ model is the state-of-the-art method in deterministic linear classification. It minimizes the empirical hinge loss in the objective function, and adopts a \textit{linear-loss-based} unfairness measure to approximate the EO unfairness measure in the constraint. The DRFLR model is a distributionally robust logistic regression model. It minimizes the empirical \textit{log-loss} together with a fairness-driven regularization term in the objective function. Specifically, the paper proposes a \textit{log-probabilistic equalized opportunities} unfairness measure, which is a convex approximation of the EO unfairness measure, as the fairness-driven regularization term. The DRFLR model is also considered as the state-of-the-art method in distributionally robust fair logistic regression.

We plot the Pareto frontiers of the $\eps$-FC, $\eps$-DRFC, HFC, and HDRFC against those of~DOB+ and DRFLR in Figure~\ref{fig:frontier}. The setup for this experiment follows from the synthetic experiment in \cite{ref:zafar2015fairness}. The data-generating probability distribution~$\PP\opt$ satisfies $\mathbb P\opt(Y=1)= \mathbb P\opt(Y=-1)=0.5$, while the conditional distribution of the 2-dimensional feature vectors are set as the following Gaussian distributions
\[
X|Y=1 \sim \mathcal N([2;2],[5,1;1,5]), \qquad  X|Y=-1 \sim \mathcal N([-2;-2],[10,1;1,3]).\] Next, we generate the sensitive feature for each sample $x$ from a Bernoulli distribution 
\[
\mathbb P\opt(A=1|X=x') = \frac{pdf(x'|Y=1)}{pdf(x'|Y=1)+ pdf(x'|Y=-1)},
\] 
where $x'=[\cos(\pi/4),\sin(\pi/4);\sin(\pi/4),\cos(\pi/4)]x$ is a rotated value of the feature vector $x$ and $pdf(\cdot|Y=y)$ is the Gaussian probability density function of $X|Y=y$.

We draw 200 samples from the data generating distribution $\mathbb P\opt$, and then separate them into a group of~$50$ samples used for the training, while the remaining 150 samples are used as the test set. For the $\eps$-FC and $\eps$-DRFC models, we examine the models with different values of the unfairness controlling parameter $\eta$ on~$[0.05,0.25]$ with 5 equidistant points. Similarly, we examine the HFC and HDRFC models with $\zeta$ on $[1.2,1.4]$ with 5 equidistant points, and the DRFLR model with $\eta_f$ on $[0.1, \min\{\wh p_{01}, \wh p_{11}\}]$.\footnote{ The DRFLR model admits tractable reformulations only if $\eta_f \leq \min\{\wh p_{01}, \wh p_{11}\}$} We fix the Wasserstein radius of the $\eps$-DRFC and HDRFC models to $0.25$, and the radius of the DRFLR model to $0.015$. 
Since the authors of the DOB+ method argue that 0 is a reasonable selection for the unfairness controlling parameter in their model, and their code is implemented under this prerequisite, to be consistent with their paper, we fix this parameter for the DOB+ method in our experiment. The hyperparameter $C$ of the DOB+ method is chosen from $[10^{-1},10^1]$ by cross-validation using the authors' code.\footnote{\url{https://github.com/jmikko/fair_ERM}} The described procedure is repeated~$50$ times independently, and the results are averaged over 50 trials.

\begin{figure}[h]
\includegraphics[width=10cm]{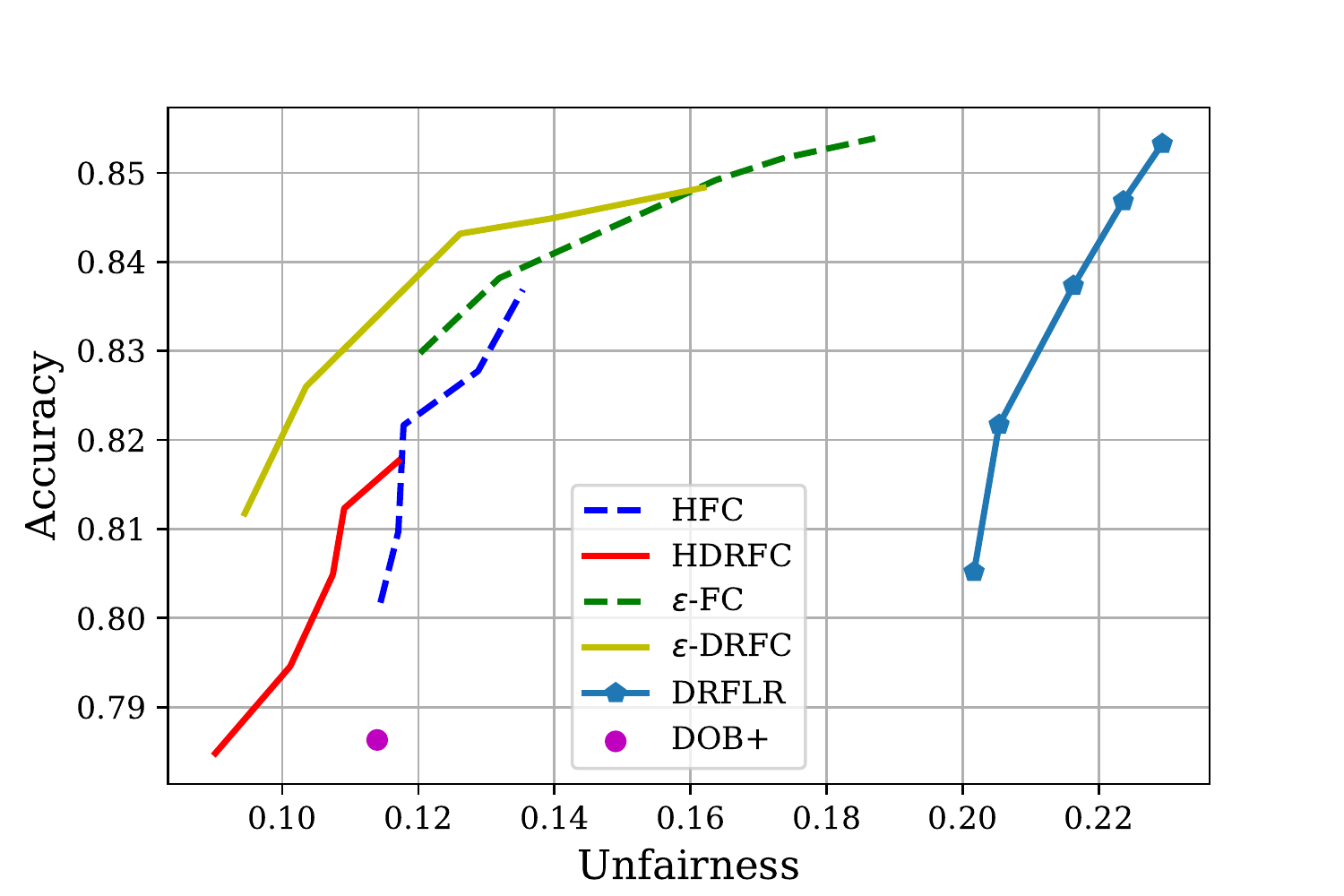}
\caption{Unfairness-accuracy Pareto frontiers for different approaches.}
\label{fig:frontier}
\end{figure}

Figure \ref{fig:frontier} visualizes the Pareto frontiers of six fairness-aware models in the out-of-sample test, where the dashed lines represent the non-robust models ($\eps$-FC and HFC), the solid lines represent the distributionally robust models ($\eps$-DRFC and HDRFC), and the dotted-solid line represents the DRFLR model. Compared to the DOB+ solution (purple dot), our four methods all achieve higher classification accuracy at the same unfairness score. And compared with the DRFLR method, our methods also obtain lower unfairness scores at the same accuracy level. The $\eps$-FC and $\eps$-DRFC models, benefited from their conservative approximation reformulation, dominate the HFC and HDRFC models across all unfairness scores. Nevertheless, the HFC and HDRFC model still perform better than the DOB+ method, and because of the excellent scalability, they are more suitable for practical problems.


\subsection{Experiments with Real Data.} 
We then assess the performance of the HDRFC model and demonstrate its superior performance on four publicly available datasets (Adult, Drug, COMPAS, Arrhythmia). The reason why we select this model is that the $\eps$-DRFC model may encounter computational difficulties in the face of large instances, which we illustrate  in Section \ref{subsec:soltime}. A brief summary of these four datasets is presented in Table \ref{tab:statistics}. While the Adult dataset has already been divided into the training and testing sets, we randomly select $2/3$ samples for training and keep the rest of the data for testing in all other three datasets. 

\begin{table}[ht]
    \centering
    \begin{tabular}{||p{2.5cm}|c|c|c||}
    \hline
         Dataset& Features $d$ & Sensitive Attribute $A$ & Number of samples  \\
         \hline\hline
         Adult  & 12 & Gender & 32561, 12661 \\
         Drug &  11 &  Ethnicity & 1885\\ 
         COMPAS & 10 & Ethnicity & 6172 \\
         Arrhythmia & 279(15) & Gender & 452 \\
         \hline
     \hline
    \end{tabular}
    \caption{Datasets statistics and their sensitive feature. Gender considers the two groups as male and female; ethnicity considers the ethnic groups white and other ethnic groups. The adult dataset has pre-assigned training and test sets.}
    \label{tab:statistics}
\end{table}

We now formally benchmark the models following a cross-validation, training, and testing procedure. The hyperparameter of the HDRFC and DRFLR models, i.e., the radius of the Wasserstein ball, is determined in a cross-validation procedure similar to \cite{ref:donini2018empirical}. We first split the training set into a sub-training set with $200$ samples and keep the remaining samples as a sub-validation set. 
Then we collect statistics (i.e., accuracy score, EO unfairness measure) of $\rho \in [5 \cdot 10^{-3},5]$ on a logarithm searching grid with 40 discretization points based on the sub-training and sub-validation sets. 
This process is repeated $K_1=5$ times, and the average accuracy and unfairness are recorded for each candidate value. Finally, we select the value with the highest (Accuracy~$-$~$0.5 \times$ Unfairness) score from the list. Similarly, the tuning parameter $C$ of the DOB+ method is also determined by cross-validation using the author's code.

With the hyperparameters obtained from cross-validation, we now retrain the four classifiers using another random draw of $N=300$ samples from the training set. We set $\zeta=1.1$ for HFC and HDRFC, $\eta_f=\min\{\wh p_{01}, \wh p_{11}\}/2$ for DRFLR, and assume absolute trust in labels and sensitive features. The DOB+ method is computed using the authors’ code. The accuracy and unfairness measures of all classifiers are then evaluated on the test set. We repeat this process for $K_2=100$ times and report the average accuracy scores and unfairness measures on Table \ref{tab:performance}.

Table \ref{tab:performance} suggests that our HDRFC model performs favorably relative to its competitors: it yields the lowest unfairness score across three datasets with only a moderate loss in accuracy. DRFLR is also an attractive method, as its distributionally robust setting enables the model to achieve good out-of-sample performance.  Meanwhile, even without robustness setting, the HFC model still outperforms the DOB+ method, implying that the hinge unfairness measure can better recover the EO unfairness measure compared with linear-loss-based unfairness measure. Since the statistics are averaged over $100$ replications, and in each replication, testing the $\eps$-DRFC model requires solving five mixed binary programs that take a while, we only report the result of the aforementioned convex models. In the next experiment, we will further demonstrate the efficiency of these methods.

\begin{table}[h]
\begin{center}
 \begin{tabular}{||p{2.5cm}|p{2.5cm} | c| c| c| c| c||} 
 \hline
 Dataset & Metric & HC(SVM) & HFC & DOB+ & DRFLR & HDRFC\\ [1ex] 
 \hline\hline
 \multirow{2}{4em}{Adult} & Accuracy & $\pmb{0.80 \pm 0.01}$ & ${0.79 \pm 0.01}$ & $0.79 \pm 0.02$ & ${0.79 \pm 0.02}$ & ${0.79 \pm 0.02}$ \\ 
 &Unfairness ($\mathds U$)& $0.11 \pm 0.09$ & $0.03 \pm 0.03$ & $0.07 \pm 0.06$ &${0.06 \pm 0.03}$  &$\pmb{0.03 \pm 0.01}$  \\
 \hline
 \multirow{2}{4em}{Drug} & Accuracy & $\pmb{0.80 \pm 0.01}$ & $\pmb{0.80 \pm 0.01}$ & $\pmb{0.80 \pm 0.02}$ & $0.78 \pm 0.02$ & $0.79 \pm 0.02$ \\
 &Unfairness ($\mathds U$)& $0.15 \pm 0.07$ & $0.12 \pm 0.07$ & $0.12 \pm 0.08$ &$\pmb{0.03 \pm 0.02}$&${0.06 \pm 0.07}$ \\
 \hline
 \multirow{2}{4em}{COMPAS} & Accuracy & $\pmb{0.64 \pm 0.01}$ & ${0.57 \pm 0.04}$ & $0.56 \pm 0.04$ & $0.57 \pm 0.03$ & $0.56 \pm 0.02$ \\
 &Unfairness ($\mathds U$)& $0.23 \pm 0.03$ & $0.11 \pm 0.05$ & $0.12 \pm 0.06$ &${0.09 \pm 0.05}$&$\pmb{0.06 \pm 0.03}$ \\
 \hline
 \multirow{2}{4em}{Arrhythmia} & Accuracy & $\pmb{0.65 \pm 0.03}$ & $\pmb{0.65 \pm 0.03}$ & $0.64 \pm 0.03$ & $0.64 \pm 0.02$& $0.64 \pm 0.02$ \\
 &Unfairness ($\mathds U$)& $0.24 \pm 0.09$ & $0.09 \pm 0.07$ & $0.10 \pm 0.08$ & $0.08 \pm 0.06$ &$\pmb{0.04 \pm 0.04}$ \\
 \hline
\end{tabular}
\end{center}
\caption{Test accuracy and unfairness (average $\pm$ standard deviation) for $N = 300$. The best results for each dataset is highlighted in bold.}
\label{tab:performance}
\end{table}

\begin{table}[]
    \centering
    \begin{tabular}{p{2.5cm}|p{2cm}|r|r|r|r|r|r|}
         \multicolumn{1}{c}{}& \multicolumn{1}{c}{}& \multicolumn{6}{c}{Sample size $N$ } \\
         \hline
         Dataset& Classifier& \multicolumn{1}{r}{50}&\multicolumn{1}{r}{100} &\multicolumn{1}{r}{250} &\multicolumn{1}{r}{500} &\multicolumn{1}{r}{750} &\multicolumn{1}{r}{1000}\\
         \hline
         \multirow{3}{4em}{Adult}&$\eps$-FC&\multicolumn{1}{r}{2.38} &\multicolumn{1}{r}{13.37}&\multicolumn{1}{r}{35.69}& \multicolumn{1}{r}{93.28}&\multicolumn{1}{r}{346.84}&\multicolumn{1}{r}{769.432}\\
         & $\eps$-DRFC &\multicolumn{1}{r}{2.15} &\multicolumn{1}{r}{19.58}&\multicolumn{1}{r}{42.31}& \multicolumn{1}{r}{108.80}&\multicolumn{1}{r}{431.36}&\multicolumn{1}{r}{809.68}\\
         & HFC &\multicolumn{1}{r}{0.02} &\multicolumn{1}{r}{0.02}&\multicolumn{1}{r}{0.04}& \multicolumn{1}{r}{0.08}&\multicolumn{1}{r}{0.12}&\multicolumn{1}{r}{0.16}\\
        & HDRFC &\multicolumn{1}{r}{0.03} &\multicolumn{1}{r}{0.03}&\multicolumn{1}{r}{0.04}& \multicolumn{1}{r}{0.10}&\multicolumn{1}{r}{0.13}&\multicolumn{1}{r}{0.15}\\
         & DOB+ & \multicolumn{1}{r}{0.02} & \multicolumn{1}{r}{0.03} & \multicolumn{1}{r}{0.07}& \multicolumn{1}{r}{0.16} & \multicolumn{1}{r}{0.35} & \multicolumn{1}{r}{0.43}\\
          & DRFLR & \multicolumn{1}{r}{4.03} & \multicolumn{1}{r}{8.37} & \multicolumn{1}{r}{21.00}& \multicolumn{1}{r}{44.23} & \multicolumn{1}{r}{67.37} & \multicolumn{1}{r}{91.79}\\
         \hline
          \multirow{3}{4em}{Drug}&$\eps$-FC&\multicolumn{1}{r}{1.61} &\multicolumn{1}{r}{18.93}&\multicolumn{1}{r}{42.14}& \multicolumn{1}{r}{127.93}&\multicolumn{1}{r}{347.53}&\multicolumn{1}{r}{812.45}\\
         & $\eps$-DRFC &\multicolumn{1}{r}{1.70} &\multicolumn{1}{r}{21.83}&\multicolumn{1}{r}{66.59}& \multicolumn{1}{r}{145.37}&\multicolumn{1}{r}{357.74}&\multicolumn{1}{r}{865.12}\\
         & HFC &\multicolumn{1}{r}{0.01} &\multicolumn{1}{r}{0.02}&\multicolumn{1}{r}{0.02}& \multicolumn{1}{r}{0.07}&\multicolumn{1}{r}{0.10}&\multicolumn{1}{r}{0.14}\\
        & HDRFC &\multicolumn{1}{r}{0.02} &\multicolumn{1}{r}{0.02}&\multicolumn{1}{r}{0.03}& \multicolumn{1}{r}{0.08}&\multicolumn{1}{r}{0.11}&\multicolumn{1}{r}{0.15}\\
         & DOB+ & \multicolumn{1}{r}{0.02} & \multicolumn{1}{r}{0.03} & \multicolumn{1}{r}{0.07}& \multicolumn{1}{r}{0.15} & \multicolumn{1}{r}{0.19} & \multicolumn{1}{r}{0.30}\\
        & DRFLR & \multicolumn{1}{r}{3.75} & \multicolumn{1}{r}{7.07} & \multicolumn{1}{r}{21.43}& \multicolumn{1}{r}{45.81} & \multicolumn{1}{r}{68.48} & \multicolumn{1}{r}{90.56}\\
         \hline
         \multirow{3}{4em}{COMPAS}&$\eps$-FC&\multicolumn{1}{r}{1.31} &\multicolumn{1}{r}{22.04}&\multicolumn{1}{r}{73.98}& \multicolumn{1}{r}{237.48}&\multicolumn{1}{r}{548.08}&\multicolumn{1}{r}{1075.67}\\
         & $\eps$-DRFC &\multicolumn{1}{r}{2.52} &\multicolumn{1}{r}{21.61}&\multicolumn{1}{r}{84.76}& \multicolumn{1}{r}{215.43}&\multicolumn{1}{r}{447.87}&\multicolumn{1}{r}{1174.07}\\
      & HFC &\multicolumn{1}{r}{0.02} &\multicolumn{1}{r}{0.03}&\multicolumn{1}{r}{0.05}& \multicolumn{1}{r}{0.09}&\multicolumn{1}{r}{0.12}&\multicolumn{1}{r}{0.16}\\
        & HDRFC &\multicolumn{1}{r}{0.02} &\multicolumn{1}{r}{0.04}&\multicolumn{1}{r}{0.05}& \multicolumn{1}{r}{0.11}&\multicolumn{1}{r}{0.14}&\multicolumn{1}{r}{0.17}\\
         & DOB+ & \multicolumn{1}{r}{0.02} & \multicolumn{1}{r}{0.03} & \multicolumn{1}{r}{0.02}& \multicolumn{1}{r}{0.15} & \multicolumn{1}{r}{0.18} & \multicolumn{1}{r}{0.16}\\
         & DRFLR & \multicolumn{1}{r}{3.89} & \multicolumn{1}{r}{7.10} & \multicolumn{1}{r}{20.09}& \multicolumn{1}{r}{42.95} & \multicolumn{1}{r}{66.09} & \multicolumn{1}{r}{90.29}\\
          \hline
         \multirow{3}{4em}{Arrhythmia}&$\eps$-FC&\multicolumn{1}{r}{3.06} &\multicolumn{1}{r}{88.81}&\multicolumn{1}{r}{378.23}& \multicolumn{1}{r}{-}&\multicolumn{1}{r}{-}&\multicolumn{1}{r}{-}\\
         & $\eps$-DRFC &\multicolumn{1}{r}{4.55} &\multicolumn{1}{r}{107.46}&\multicolumn{1}{r}{419.58}& \multicolumn{1}{r}{-}&\multicolumn{1}{r}{-}&\multicolumn{1}{r}{-}\\
          & HFC &\multicolumn{1}{r}{0.14} &\multicolumn{1}{r}{0.77}&\multicolumn{1}{r}{1.38}& \multicolumn{1}{r}{-}&\multicolumn{1}{r}{-}&\multicolumn{1}{r}{-}\\
        & HDRFC &\multicolumn{1}{r}{0.16} &\multicolumn{1}{r}{0.68}&\multicolumn{1}{r}{1.73}& \multicolumn{1}{r}{-}&\multicolumn{1}{r}{-}&\multicolumn{1}{r}{-}\\
         & DOB+ & \multicolumn{1}{r}{0.11} & \multicolumn{1}{r}{0.66} & \multicolumn{1}{r}{1.20}& \multicolumn{1}{r}{-} & \multicolumn{1}{r}{-} & \multicolumn{1}{r}{-}\\
          & DRFLR & \multicolumn{1}{r}{4.64} & \multicolumn{1}{r}{10.40} & \multicolumn{1}{r}{24.26}& \multicolumn{1}{r}{-} & \multicolumn{1}{r}{-} & \multicolumn{1}{r}{-}\\
          \hline
         \multirow{3}{4em}{Synthetic}&$\eps$-FC&\multicolumn{1}{r}{1.02} &\multicolumn{1}{r}{2.67}&\multicolumn{1}{r}{15.31}& \multicolumn{1}{r}{72.58}&\multicolumn{1}{r}{299.32}&\multicolumn{1}{r}{572.49}\\
         & $\eps$-DRFC &\multicolumn{1}{r}{1.47} &\multicolumn{1}{r}{3.54}&\multicolumn{1}{r}{21.52}& \multicolumn{1}{r}{70.33}&\multicolumn{1}{r}{309.34}&\multicolumn{1}{r}{593.18}\\
      & HFC &\multicolumn{1}{r}{0.01} &\multicolumn{1}{r}{0.01}&\multicolumn{1}{r}{0.01}& \multicolumn{1}{r}{0.02}&\multicolumn{1}{r}{0.05}&\multicolumn{1}{r}{0.07}\\
        & HDRFC &\multicolumn{1}{r}{0.12} &\multicolumn{1}{r}{0.01}&\multicolumn{1}{r}{0.02}& \multicolumn{1}{r}{0.03}&\multicolumn{1}{r}{0.05}&\multicolumn{1}{r}{0.08}\\
         & DOB+ & \multicolumn{1}{r}{0.09} & \multicolumn{1}{r}{0.16} & \multicolumn{1}{r}{0.19}& \multicolumn{1}{r}{0.29} & \multicolumn{1}{r}{0.38} & \multicolumn{1}{r}{0.59}\\
      & DRFLR & \multicolumn{1}{r}{4.00} & \multicolumn{1}{r}{8.51} & \multicolumn{1}{r}{20.83}& \multicolumn{1}{r}{43.82} & \multicolumn{1}{r}{68.56} & \multicolumn{1}{r}{95.54}\\
         \hline
         
    \end{tabular}
    \caption{Running time (in seconds) of different methods. The Arrhythmia dataset only contains 452 examples, hence we only examine its performance up to $N= 250$.}
    \label{tab:time}
\end{table}


\subsection{Solution time.} \label{subsec:soltime}
We now report the running time of different methods on 5 datasets  (Adult, Drug, COMPAS, Arrhythmia, and Synthetic) with the sample size varying from $50$ to $1000$. We set the unfairness controlling parameters $\eta=0.1$ for $\eps$-FC and $\eps$-DRFC, $\zeta=1.1$ for HFC and HDRFC, $\eta_f=\min\{\wh p_{01}, \wh p_{11}\}/2$ for DRFLR, Wasserstein radius $\rho=0.05$ for all distributionally robust models, and assume all samples are correctly labeled. All results are averaged over 10 independent trials.


Table \ref{tab:time} suggests that the $\eps$-DRFC model is applicable to moderate-size problems. However, it encounters computational difficulties at large sample sizes, where it takes more than 10 minutes to solve any datasets with sample size greater than 1000. The DRFLR model involves solving an exponential cone program, which is less efficient compared to the linear-program-based methods HFC and HDRFC, and the gradient-descent-based method DOB+.
The sample size is the factor that most affects the running time, because the number of variables and constraints are proportional to the sample sizes. Compared with the $\eps$-DRFC model and DRFLR model, the HDRFC and the DOB+ methods are more efficient across all datasets. For all sample sizes, these methods can be solved in one second. Therefore, this result suggests that the HDRFC model is more suitable for large instances.



\section{Concluding remarks}
\label{sec:remark}
In this paper, we developed a new principled approach to fair classification by incorporating the equality of opportunity criterion as a constraint and robustifying the resulting optimization problem using the framework of Wasserstein min-max learning. We utilize the type-$\infty$ Wasserstein ambiguity set, which generally enables a more scalable conic programming reformulation while providing the same statistical performance guarantees as the models based on the type-1 Wasserstein ambiguity sets.  In addition, our proposed model can handle problem instances with noisy, adversarial sensitive attributes and labels.

Since the original problem cannot be reformulated exactly, we propose a conservative approximation. We remark that this conservative approximation is amenable to a mixed binary linear programming reformulation. Moreover, this approximation, for the first time, enables decision makers to bound the EO unfairness measure explicitly. However,  experimental results indicate that the reformulation is not as efficiently solvable as plain-vanilla models such as SVM and logistic regression. To address this issue, we further approximate both the objective function and the unfairness measure using the hinge loss function to obtain a convex model. We find that the hinge-loss-based distributionally robust fairness-aware model plays favorably compared to the state-of-the-art method DOB+ and DRFLR in the numerical experiments. In summary, we propose a tight conservative fairness-aware classifier for moderate-size problems and an efficient, high-quality fairness-aware classifier for large instances.


\section*{Acknowledgements}
This research was supported by the National Science Foundation grant no.~$1752125$.

\bibliographystyle{abbrv}
\bibliography{references}

\newpage
\appendix


\section{Auxiliary Results and Proofs}
\begin{lemma}[Compactness] \label{lemma:compact}
    The set $\mbb B(\Pnom)$ defined in~\eqref{eq:B-def} is weakly compact and convex. More specifically, there exists a convex, compact set $\mbb X \in \mc X$ defined as
    \[
        \mbb X = \mathrm{ConvexHull}\big( \{x \in \mc X: \| x - \wh x_i \| \le \rho \}_{i=1}^N \big)
    \]
    such that $\QQ(\mbb X \times \mc A \times \mc Y) = 1$ for any $\QQ \in \mbb B(\Pnom)$.
\end{lemma}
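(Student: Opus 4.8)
The plan is to prove the support statement first, since it simultaneously identifies $\mbb X$ and drives both the compactness and the weak-compactness arguments; convexity is then a short separate check. Throughout I would work with the coupling representation of the type-$\infty$ Wasserstein ball, writing any $\QQ \in \mbb B(\Pnom)$ as $\QQ = N^{-1}\sum_{i=1}^N \pi_i$ where each $\pi_i \in \mc M(\mc X \times \mc A \times \mc Y)$ satisfies $\Wass_\infty(\pi_i, \delta_{(\wh x_i, \wh a_i, \wh y_i)}) \le \rho$ (the same decomposition obtained by disintegrating a coupling over the atoms of $\Pnom$, as used in the proof of Theorem~\ref{thm:refor-probtrust}). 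Because the second argument is a Dirac, the only admissible coupling is the product, so this constraint is equivalent to the support containment $\supp(\pi_i) \subseteq K_i \Let \{(x,a,y) : c((x,a,y),(\wh x_i,\wh a_i,\wh y_i)) \le \rho\}$.

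First I would establish the support claim. Since the ground metric dominates its first component, $c((x,a,y),(\wh x_i,\wh a_i,\wh y_i)) \ge \|x - \wh x_i\|$, membership in $K_i$ forces $\|x - \wh x_i\| \le \rho$. Hence the $\mc X$-marginal of each $\pi_i$ is supported in the closed ball $\{x : \|x-\wh x_i\|\le\rho\} \subseteq \mbb X$, and taking the mixture shows $\QQ(\mbb X \times \mc A \times \mc Y) = 1$ for every $\QQ \in \mbb B(\Pnom)$. For compactness of $\mbb X$ itself, I note that each closed ball $\{x : \|x-\wh x_i\|\le\rho\}$ is a compact, convex subset of $\R^d$; their finite union is compact, and by Carath\'eodory's theorem the convex hull of a compact subset of $\R^d$ is again compact. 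Thus $\mbb X$ is convex and compact.

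Convexity of $\mbb B(\Pnom)$ I would verify directly. The marginal constraints $\QQ(A=a, Y=y) = \wh p_{ay}$ are affine in $\QQ$, hence preserved under mixtures. For the Wasserstein constraint, given $\QQ_1, \QQ_2 \in \mbb B(\Pnom)$ with couplings $\pi^{(1)}, \pi^{(2)}$ to $\Pnom$ of essential-supremum cost at most $\rho$, the mixture $\lambda \pi^{(1)} + (1-\lambda)\pi^{(2)}$ is a coupling of $\lambda \QQ_1 + (1-\lambda)\QQ_2$ with $\Pnom$, since its second marginal is $\lambda \Pnom + (1-\lambda)\Pnom = \Pnom$; its support lies in the union of the two supports, on which $c \le \rho$, so $\Wass_\infty(\lambda\QQ_1 + (1-\lambda)\QQ_2, \Pnom) \le \rho$. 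Hence $\mbb B(\Pnom)$ is convex.

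Finally, weak compactness. Tightness is immediate, as every $\QQ \in \mbb B(\Pnom)$ is supported in the fixed compact set $\mbb X \times \mc A \times \mc Y$, so the family is uniformly tight and, by Prokhorov's theorem, relatively weakly compact. It then remains to show $\mbb B(\Pnom)$ is weakly closed. The marginal constraints pass to weak limits because each indicator $\mathbbm 1_{(a,y)}$ is continuous on $\Xi$ ($\mc A \times \mc Y$ being finite and hence discrete). For the Wasserstein constraint I would argue at the level of the decomposition: given $\QQ_n \to \QQ$ weakly with decompositions $\{\pi_i^{(n)}\}$, each family $\{\pi_i^{(n)}\}_n$ is supported in the compact set $K_i$ and hence tight, so along a common subsequence $\pi_i^{(n)} \to \pi_i$ weakly; the limit is a probability measure with $\supp(\pi_i) \subseteq K_i$ (by the Portmanteau theorem, since $K_i$ is closed and $\pi_i^{(n)}(K_i)=1$), and $\QQ = N^{-1}\sum_i \pi_i$ by uniqueness of weak limits. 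This exhibits $\QQ \in \mbb B(\Pnom)$, proving closedness. The main obstacle is precisely this last point: the type-$\infty$ Wasserstein distance is \emph{not} weakly lower semicontinuous in the convenient way the type-$p$ distances are, so $\Wass_\infty \le \rho$ cannot be transferred to the limit by a direct lower-semicontinuity argument; reducing the constraint to support containment in the fixed closed sets $K_i$ and invoking Portmanteau is the device that makes the limit go through.
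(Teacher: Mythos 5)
Your proposal is correct and follows the same skeleton as the paper's proof: represent each $\QQ \in \mbb B(\Pnom)$ through the decomposition $\QQ = N^{-1}\sum_i \pi_i$ with each $\pi_i$ supported in the ground-metric ball around $(\wh x_i, \wh a_i, \wh y_i)$, deduce the support containment in $\mbb X \times \mc A \times \mc Y$ from the fact that $c$ dominates $\|x - \wh x_i\|$, obtain convexity by mixing the decompositions, and get weak compactness from uniform compact support via Prohorov's theorem. The one place you genuinely go beyond the paper is the weak \emph{closedness} step: the paper stops at ``the weak compactness of $\mbb B(\Pnom)$ follows from Prohorov's theorem,'' but Prohorov only delivers relative compactness (tightness $\Rightarrow$ every sequence has a weakly convergent subsequence), so closedness of $\mbb B(\Pnom)$ in the weak topology is needed and left implicit. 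Your argument supplies it cleanly: the marginal constraints pass to the limit because $\mathbbm 1_{(a,y)}$ is bounded continuous on $\mc X \times \mc A \times \mc Y$ with $\mc A \times \mc Y$ discrete, and the Wasserstein constraint passes to the limit by extracting weak limits of the $\pi_i^{(n)}$ (tight, since supported in the fixed compact sets $K_i$) and applying Portmanteau to conclude $\supp(\pi_i) \subseteq K_i$, with uniqueness of weak limits giving $\QQ = N^{-1}\sum_i \pi_i$. Likewise, your Carath\'eodory argument for compactness of $\mbb X$ makes explicit what the paper merely asserts. One small caveat: your closing remark that $\Wass_\infty$ fails to be weakly lower semicontinuous is not quite right --- since $\Wass_p$ is nondecreasing in $p$ and weakly l.s.c.\ for each finite $p$, and $\Wass_\infty = \sup_p \Wass_p$, the type-$\infty$ distance is in fact weakly l.s.c.\ and its balls are weakly closed; however, establishing that identity is itself nontrivial, and your support-containment device circumvents it, so the proof stands as written.
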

\begin{proof}[Proof of Lemma~\ref{lemma:compact}]
    Because the $\Pnom$ is an empirical measure, the ambiguity set $\mbb B(\Pnom)$ can be represented as
	    \begin{align}
	    &\mbb B(\Pnom) = \notag \\
	    &\left\{
	        \QQ \in \mc M(\mc X \times \mc A \times \mc Y): 
	        \begin{array}{l}
	            \exists \pi_i \in \mc M(\mc X \times \mc A \times \mc Y) ~\forall i \in [N] \text{ such that :} \\
	            \QQ = N^{-1} \sum_{i \in [N]} \pi_i, \\
	            \| x_i - \wh x_i\| + \kappa_{\mc A} | a_i - \wh a_i| + \kappa_{\mc Y} | y_i - \wh y_i| \leq \rho \quad \forall (x_i, a_i, y_i) \in \mathrm{supp}(\pi_i) \quad \forall i \in [N], \\
	            N^{-1} \sum_{i\in[N]} \pi_i(A = a, Y = y) = \wh p_{ay} \quad \forall (a, y) \in \mc A \times \mc Y
	        \end{array}
	    \right\}, \label{eq:B-refor}
	    \end{align}
	    where $\mathrm{supp}(\pi_i)$ denotes the support of the probability measure $\pi_i$ \cite[Page~441]{ref:aliprantis06hitchhiker}. 
	    Pick any arbitrary $\QQ^0$ and $\QQ^1$ from $\mbb B(\Pnom)$. Associated with $\QQ^j$, $j \in \{0, 1\}$ is a collection of conditional probability measures $\{\pi_i^j\} \in \mc M(\mc X \times \mc A \times \mc Y)^N$ satisfying
	    \[
	        \left\{
	            \begin{array}{ll}
	                \QQ^j = N^{-1} \sum_{i \in [N]} \pi_i^j,\\
	                \| x_i - \wh x_i\| + \kappa_{\mc A} | a_i - \wh a_i| + \kappa_{\mc Y} | y_i - \wh y_i| \leq \rho \quad &\forall (x_i, a_i, y_i) \in \mathrm{supp}(\pi_i^j) \quad \forall i \in [N], \\
	                N^{-1} \sum_{i\in[N]} \pi_i^j(A = a, Y = y) = \wh p_{ay} & \forall (a, y) \in \mc A \times \mc Y.
	            \end{array}
	        \right.
	    \]
	    Consider any convex combination $\QQ^\lambda = \lambda \QQ^1 + (1-\lambda) \QQ^0$ for $\lambda \in (0, 1)$. It is easy to verify that the measure $\pi_i^\lambda = \lambda \pi_i^1 + (1- \lambda) \pi_i^0$ for any $i \in [N]$ satisfies
	    \[
	        \left\{
	            \begin{array}{ll}
	                \QQ^\lambda = N^{-1} \sum_{i \in [N]} \pi_i^\lambda, \\
	                \| x_i - \wh x_i\| + \kappa_{\mc A} | a_i - \wh a_i| + \kappa_{\mc Y} | y_i - \wh y_i| \leq \rho \quad &\forall (x_i, a_i, y_i) \in \mathrm{supp}(\pi_i^\lambda) \quad  \forall i \in [N], \\
	                N^{-1} \sum_{i\in[N]} \pi_i^\lambda(A = a, Y = y) = \wh p_{ay} & \forall (a, y) \in \mc A \times \mc Y,
	            \end{array}
	        \right.
	    \]
	    where the middle constraint is satisfied by noticing that $\mathrm{supp}(\pi_i^\lambda) = \mathrm{supp}(\pi_i^0) \cup \mathrm{supp}(\pi_i^1)$.
	    This observation implies that $\QQ^\lambda \in \mbb B(\Pnom)$. 

    Notice that for any feasible measure $\pi_i$, we have
    \[
        \mathrm{supp}(\pi_i) \subseteq \{x \in \mc X: \| x - \wh x_i\| \le \rho \} \times \mc A \times \mc Y,
    \]
    and as a consequence, we have
    \[
        \mathrm{supp}(\QQ) \subseteq \displaystyle \bigcup_{i \in [N]} \{x \in \mc X: \| x - \wh x_i\| \le \rho \} \times \mc A \times \mc Y.
    \]  
    By definition of $\mbb X$, we have $\bigcup_{i \in [N]} \{x \in \mc X: \| x - \wh x_i\| \le \rho \} \subseteq \mbb X$. Because $\mbb X$ is a compact set, the weakly compactness of $\mbb B(\Pnom)$ follows from Prohorov's theorem. This completes the proof.
\end{proof}

The result of Lemma~\ref{lemma:compact} also extends to the ambiguity set $\mc B_\gamma(\Pnom)$ defined as in~\eqref{eq:modified_ambi}.
\begin{corollary}[Compactness] \label{corollary:compact2}
    For any $\gamma \in [0, 1]$, the set $\mc B_\gamma(\Pnom)$ defined in~\eqref{eq:modified_ambi} is weakly compact and convex. More specifically, there exists a compact set $\mbb X \in \mc X$ defined as
    \[
        \mbb X = \mathrm{ConvexHull}\big( \{x \in \mc X: \| x - \wh x_i \| \le \rho \}_{i=1}^N \big)
    \]
    such that $\QQ(\mbb X \times \mc A \times \mc Y) = 1$ for any $\QQ \in \mc B_\gamma(\Pnom)$.
\end{corollary}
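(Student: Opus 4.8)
The plan is to exploit the inclusion $\mc B_\gamma(\Pnom) \subseteq \mbb B(\Pnom)$, which holds because $\mc B_\gamma(\Pnom)$ is obtained from the decomposition-based representation of $\mbb B(\Pnom)$ by appending the single linear constraint $\sum_{i\in[N]} \pi_i(A = \wh a_i, Y = \wh y_i) \ge (1-\gamma)N$. The containment-in-$\mbb X$ claim is then immediate: any $\QQ \in \mc B_\gamma(\Pnom)$ also lies in $\mbb B(\Pnom)$, so Lemma~\ref{lemma:compact} directly yields $\QQ(\mbb X \times \mc A \times \mc Y) = 1$ with the same compact convex set $\mbb X$. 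Thus only the convexity and weak compactness of $\mc B_\gamma(\Pnom)$ remain.

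For convexity, I would repeat the argument of Lemma~\ref{lemma:compact} verbatim: given $\QQ^0, \QQ^1 \in \mc B_\gamma(\Pnom)$ with associated decompositions $\{\pi_i^0\}$ and $\{\pi_i^1\}$ satisfying all defining constraints, the combination $\pi_i^\lambda = \lambda \pi_i^1 + (1-\lambda)\pi_i^0$ certifies that $\QQ^\lambda = \lambda \QQ^1 + (1-\lambda)\QQ^0 \in \mc B_\gamma(\Pnom)$. The only new verification is the appended constraint, which survives by linearity, since $\sum_i \pi_i^\lambda(A=\wh a_i, Y=\wh y_i) = \lambda \sum_i \pi_i^1(A=\wh a_i, Y=\wh y_i) + (1-\lambda)\sum_i \pi_i^0(A=\wh a_i, Y=\wh y_i) \ge (1-\gamma)N$.

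For weak compactness, since $\mc B_\gamma(\Pnom) \subseteq \mbb B(\Pnom)$ and the latter is weakly compact, it suffices to show that $\mc B_\gamma(\Pnom)$ is weakly closed, as a weakly closed subset of a weakly compact set is weakly compact. I would take a sequence $\QQ^{(n)} \in \mc B_\gamma(\Pnom)$ with $\QQ^{(n)} \to \QQ$ weakly and produce a decomposition of $\QQ$ witnessing membership. Each $\QQ^{(n)}$ admits a decomposition $\{\pi_i^{(n)}\}$ with every $\pi_i^{(n)}$ supported on the fixed compact set $K_i = \{x : \|x - \wh x_i\| \le \rho\} \times \mc A \times \mc Y$. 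Hence each family $\{\pi_i^{(n)}\}_n$ is tight, and by Prohorov's theorem I may pass to a common subsequence along which $\pi_i^{(n)} \to \pi_i$ weakly for every $i$; the portmanteau theorem applied to the closed set $K_i$ then forces $\pi_i(K_i)=1$, so the support constraint persists. Uniqueness of weak limits gives $\QQ = N^{-1}\sum_i \pi_i$.

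It then remains to verify that the marginal constraint and the appended flip constraint pass to the limit. The crucial enabling observation is that $\mc A$ and $\mc Y$ are finite, so each indicator $\mathbbm{1}_{(a,y)}(A,Y)$ is continuous on $\mc X \times \mc A \times \mc Y$; consequently $\pi_i^{(n)}(A=a, Y=y) \to \pi_i(A=a, Y=y)$ for every $(a,y)$, which preserves both $N^{-1}\sum_i \pi_i(A=a,Y=y) = \wh p_{ay}$ and $\sum_i \pi_i(A=\wh a_i, Y=\wh y_i) \ge (1-\gamma)N$. I expect this last limit-passing step, together with the extraction of convergent couplings through the existential quantifier in the definition of $\mc B_\gamma(\Pnom)$, to be the main obstacle; the finiteness of $\mc A \times \mc Y$ is precisely what makes the relevant functionals weakly continuous and lets the inequality survive in the limit.
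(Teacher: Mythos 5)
Your proposal is correct, and on the convexity claim it coincides with the paper's own argument: the paper's entire proof is the one-line remark that one repeats Lemma~\ref{lemma:compact}, ``noticing that $\sum_{i\in[N]} \pi_i(A = \wh a_i, Y = \wh y_i) \ge (1-\gamma) N$ is a convex constraint for $\pi_i$,'' which is exactly your linearity observation. Where you genuinely diverge is on the other two claims. You obtain the support statement for free from the inclusion $\mc B_\gamma(\Pnom) \subseteq \mbb B(\Pnom)$ rather than re-running the support computation, and, more substantively, you prove weak compactness by showing that $\mc B_\gamma(\Pnom)$ is weakly closed inside the weakly compact set $\mbb B(\Pnom)$: you extract weak limits of the decompositions $\pi_i^{(n)}$ via Prohorov, use portmanteau for the support constraint, and use finiteness of $\mc A \times \mc Y$ (so each $\mathbbm{1}_{(a,y)}$ is bounded and continuous) to pass the marginal and flip constraints to the limit. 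The paper, following Lemma~\ref{lemma:compact}, only invokes tightness and Prohorov's theorem, which strictly speaking yields relative weak compactness; your closedness argument supplies the missing half, so your route is the more complete one. One small repair is needed: in the portmanteau step you should apply the theorem to the actual closed set defined by the $\Wass_\infty$ constraint, namely $\{(x,a,y): \|x - \wh x_i\| + \kappa_{\mc A}|a - \wh a_i| + \kappa_{\mc Y}|y - \wh y_i| \le \rho\}$, rather than to the larger set $K_i = \{x: \|x - \wh x_i\| \le \rho\} \times \mc A \times \mc Y$. Using $K_i$ is fine for tightness, but $\pi_i(K_i)=1$ alone does not certify $\Wass_\infty(\pi_i, \delta_{(\wh x_i, \wh a_i, \wh y_i)}) \le \rho$ when $\kappa_{\mc A}$ and $\kappa_{\mc Y}$ are finite, which is precisely the setting of Section~\ref{sec:notrust} where this corollary is used; the identical argument with the correct closed set closes the gap.
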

The proof of Corollary~\ref{corollary:compact2} follows a similar line of argument as the proof of Lemma~\ref{lemma:compact} by noticing that $\sum_{i\in[N]} \pi_i(A = \wh a_i, Y = \wh y_i) \ge (1-\gamma) N$ is a convex constraint for $\pi_i$. 
\begin{lemma}[Reformulation of $\mbb B(\Pnom)$] \label{lemma:B-refor}
    The set $\mbb B(\Pnom)$ defined in~\eqref{eq:B-def} can be equivalently written as
    \[
    \mbb B(\Pnom) = \left\{
            \QQ \in \mc M(\mc X \times \mc A \times \mc Y) : \begin{array}{l}
                \exists \pi_i \in \mc M(\mc X \times \mc A \times \mc Y) \quad \forall i \in [N] \text{ such that :}\\
                \QQ = N^{-1} \sum_{i\in[N]}\pi_i \\
                \Wass_\infty(\pi_i, \delta_{(\wh x_i, \wh a_i, \wh y_i)}) \le \rho \\
                \QQ(A = a, Y = y) = \wh p_{ay} \quad \forall (a, y) \in \mc A \times \mc Y
            \end{array}
        \right\}.
    \]
\end{lemma}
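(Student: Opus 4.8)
The plan is to prove the two set inclusions by exploiting the fact that $\Pnom$ is finitely supported, which forces every coupling with $\Pnom$ to disintegrate into $N$ pieces. Write $\hat \xi_i \Let (\wh x_i, \wh a_i, \wh y_i)$, so that $\Pnom = N^{-1} \sum_{i \in [N]} \delta_{\hat \xi_i}$. Since the marginal constraint $\QQ(A = a, Y = y) = \wh p_{ay}$ appears verbatim in both descriptions, it suffices to show that $\Wass_\infty(\QQ, \Pnom) \le \rho$ holds if and only if there exist $\pi_i \in \mc M(\mc X \times \mc A \times \mc Y)$ with $\QQ = N^{-1}\sum_{i \in [N]} \pi_i$ and $\Wass_\infty(\pi_i, \delta_{\hat \xi_i}) \le \rho$ for every $i$. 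The key preliminary observation is that, because the only coupling between a measure $\pi_i$ and the Dirac $\delta_{\hat \xi_i}$ is the product $\pi_i \otimes \delta_{\hat \xi_i}$, the type-$\infty$ distance to a point mass collapses to an essential supremum, namely $\Wass_\infty(\pi_i, \delta_{\hat \xi_i}) = \sup\{ c(\xi, \hat \xi_i) : \xi \in \supp(\pi_i)\}$; thus $\Wass_\infty(\pi_i, \delta_{\hat \xi_i}) \le \rho$ is equivalent to $\supp(\pi_i) \subseteq \{\xi : c(\xi, \hat \xi_i) \le \rho\}$, which is precisely the support description used in~\eqref{eq:B-refor}.

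For the sufficiency direction, given such a family $\{\pi_i\}$ I would glue the pieces into the coupling $\pi = N^{-1}\sum_{i \in [N]} \pi_i \otimes \delta_{\hat \xi_i}$. Its first marginal is $N^{-1}\sum_i \pi_i = \QQ$ and its second marginal is $N^{-1}\sum_i \delta_{\hat \xi_i} = \Pnom$, so $\pi \in \Pi(\QQ, \Pnom)$. Because $\supp(\pi) = \bigcup_{i \in [N]} \big( \supp(\pi_i) \times \{\hat \xi_i\}\big)$ is a finite union on each slice of which $c(\cdot, \hat \xi_i) \le \rho$, the essential supremum of $c$ under $\pi$ is at most $\rho$, whence $\Wass_\infty(\QQ, \Pnom) \le \rho$.

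For the necessity direction, suppose $\Wass_\infty(\QQ, \Pnom) \le \rho$. I would invoke attainment of an optimal plan for the type-$\infty$ distance on the Polish space $\Xi$ to obtain $\pi^\star \in \Pi(\QQ, \Pnom)$ whose essential supremum cost is at most $\rho$, and then disintegrate $\pi^\star$ along its discrete second marginal, writing $\pi^\star = N^{-1}\sum_{i \in [N]} \pi_i \otimes \delta_{\hat \xi_i}$, where $\pi_i$ is the regular conditional law of the first coordinate given that the second equals $\hat \xi_i$ (splitting the conditional mass across any indices that share a common atom). Reading off the first marginal gives $\QQ = N^{-1}\sum_i \pi_i$, while the bound on the essential supremum cost forces $\supp(\pi_i) \subseteq \{\xi : c(\xi, \hat \xi_i) \le \rho\}$, i.e.\ $\Wass_\infty(\pi_i, \delta_{\hat \xi_i}) \le \rho$, as required.

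The main obstacle is the existence of the optimal plan $\pi^\star$, since the infimum defining $\Wass_\infty$ is not obviously attained. If a direct attainment result is unavailable, I would instead argue by approximation: for each $\eps > 0$ pick $\pi^\eps \in \Pi(\QQ, \Pnom)$ with essential supremum cost at most $\rho + \eps$, disintegrate it into pieces $\pi_i^\eps$ supported in the compact ball $\{\xi : c(\xi, \hat \xi_i) \le \rho + \eps\}$, and, since these pieces are tight, extract weakly convergent subsequences as $\eps \downarrow 0$ via Prohorov's theorem exactly as in the proof of Lemma~\ref{lemma:compact}. The weak limits $\pi_i$ then satisfy $N^{-1}\sum_i \pi_i = \QQ$ and, because the supports are contained in nested closed sets shrinking to $\{\xi : c(\xi, \hat \xi_i) \le \rho\}$, also $\Wass_\infty(\pi_i, \delta_{\hat \xi_i}) \le \rho$. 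The remaining points are routine: the measurability of the disintegration (standard on Polish spaces) and the identity equating the essential supremum of $c$ under a glued measure with the maximum over its pieces.
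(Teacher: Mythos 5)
Your proposal is correct, and it is in fact substantially more complete than the paper's own proof. The paper disposes of this lemma in two lines: it invokes the representation~\eqref{eq:B-refor} (the decomposition of any $\QQ \in \mbb B(\Pnom)$ into conditional pieces $\pi_i$ with $\supp(\pi_i)$ contained in the cost-ball around $(\wh x_i, \wh a_i, \wh y_i)$), which is \emph{asserted without proof} at the start of the proof of Lemma~\ref{lemma:compact}, and then merely observes that the support condition on each $\pi_i$ is, by definition, the same as $\Wass_\infty(\pi_i, \delta_{(\wh x_i, \wh a_i, \wh y_i)}) \le \rho$ --- which is exactly your ``key preliminary observation'' that a coupling with a Dirac mass must be the product coupling. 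Where you go beyond the paper is in actually proving the decomposition that the paper takes for granted: your gluing construction $\pi = N^{-1}\sum_i \pi_i \otimes \delta_{\hat\xi_i}$ handles the inclusion $\supseteq$, and your disintegration of a (near-)optimal coupling along the discrete marginal $\Pnom$, together with the Prohorov extraction to circumvent non-attainment of the infimum defining $\Wass_\infty$, handles $\subseteq$; you even flag the repeated-atom case correctly. What the paper's route buys is brevity, at the price of resting on an unproved (though standard) representation; what your route buys is a genuinely self-contained argument, at the price of invoking disintegration on Polish spaces and a tightness/weak-limit step. One small simplification available to you: since each near-optimal piece $\pi_i^\eps$ already has the correct normalization and the closed balls $\{\xi : c(\xi,\hat\xi_i)\le \rho+\eps\}$ are nested and compact (as $\mc X = \R^d$ and $\mc A, \mc Y$ are finite), your portmanteau argument giving $\pi_i(\{\xi : c(\xi,\hat\xi_i)\le \rho+\delta\}) = 1$ for every $\delta>0$, followed by continuity of measure, is all that is needed; no measurability subtleties of the disintegration actually arise because the second marginal is finitely supported, so the ``conditional law'' is just the normalized restriction $\pi^\eps(\cdot \times \{\hat\xi_i\})$ split across coincident atoms.
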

\begin{proof}[Proof of Lemma~\ref{lemma:B-refor}]
    Notice that the condition
    \[
    \| x_i - \wh x_i\| + \kappa_{\mc A} | a_i - \wh a_i| + \kappa_{\mc Y} | y_i - \wh y_i| \leq \rho \quad \forall (x_i, a_i, y_i) \in \mathrm{supp}(\pi_i) \quad \forall i \in [N]
    \]
    is equivalent to the condition
    \[
        \Wass_{\infty}(\pi_i, \delta_{(\wh x_i, \wh a_i, \wh y_i)}) \le \rho \qquad \forall i \in [N]
    \]
    by the definition of the type-$\infty$ Wasserstein distance $\Wass_{\infty}$. Replacing latter condition into~\eqref{eq:B-refor} finishes the proof.
\end{proof}

\subsection{Proof of Section~\ref{sec:refor-prob}}\label{subsec:proof3}
\begin{proof}[Proof of Proposition~\ref{prop:eps_convergence}]
We first define the events $S=\{x: w^\top x + b \geq 0 \}$ and $S_\eps=\{x: w^\top x + b > -\eps \}$. 
As~$\eps$ tends to zero, we have
\[S=\lim_{\eps \rightarrow 0} S_\eps.
\]Moreover, we see that $S_\eps$ is non-increasing as $\eps \rightarrow 0$. Thus, for any distribution $\QQ$, we have \cite[Lemma~5]{grimmett2020probability}
\[ \QQ(S)=\lim_{\eps \rightarrow 0} \QQ(S_\eps).
\] 
Plugging this result into the $\eps$-unfairness measure yields
\begin{align*}
    \lim_{\eps \rightarrow 0} \mathds U_\eps(w, b, \QQ) &= \max \left\{ \begin{array}{l}
     \lim_{\eps \rightarrow 0} \QQ_{01}( w^\top X + b > -\eps) - \QQ_{11}(w^\top X + b \ge 0), \\
     \lim_{\eps \rightarrow 0} \QQ_{11}( w^\top X + b > -\eps) - \QQ_{01}(w^\top X + b \ge 0) 
    \end{array}
    \right\}\\
    &= \max \left\{ \begin{array}{l}
      \QQ_{01}( w^\top X + b \geq 0) - \QQ_{11}(w^\top X + b \ge 0), \\
      \QQ_{11}( w^\top X + b \geq 0) - \QQ_{01}(w^\top X + b \ge 0) 
    \end{array}
    \right\}\\
    &=\mathds U(w, b, \QQ).
\end{align*}
The proof of the objective function is the same and thus omitted.
\end{proof}

\subsection{Proof of Section~\ref{sec:cvx}}\label{subsec:proof4}
    \begin{proof}[Proof of Lemma~\ref{prop:U_h_lowerbound}]
    By definition, we find
    \begin{align*}
    \mathds H(w, b, \QQ)&=\max  \left\{ \begin{array}{l}
    \EE_{\QQ_{01}}\left[\max\{0, 1+w^\top X + b \}\right] + \EE_{\QQ_{11}}\left[\max\{0,1-w^\top X - b \}\right] -1 , \\
    \EE_{\QQ_{11}}\left[\max\{0, 1+w^\top X + b \}\right] + \EE_{\QQ_{01}}\left[\max\{0,1-w^\top X - b \}\right] -1
    \end{array}
    \right\}\\
    &\geq \max  \left\{ \begin{array}{l}
    \EE_{\QQ_{01}}\left[ 1+w^\top X + b \right] + \EE_{\QQ_{11}}\left[1-w^\top X - b \right] -1 , \\
    \EE_{\QQ_{11}}\left[1+w^\top X + b \right] + \EE_{\QQ_{01}}\left[1-w^\top X - b\right] -1
    \end{array}
    \right\}\\
    &\geq \max  \left\{ \begin{array}{l}
    \EE_{\QQ_{01}}\left[w^\top X + b \right] - \EE_{\QQ_{11}}\left[w^\top X + b \right] +1 , \\
    \EE_{\QQ_{11}}\left[w^\top X + b \right] - \EE_{\QQ_{01}}\left[w^\top X + b\right] +1
    \end{array}
    \right\}\\
    &=1 + \left|\EE_{\QQ_{01}}\left[w^\top X + b \right] - \EE_{\QQ_{11}}\left[w^\top X + b \right]\right|.
    \end{align*}
Thus, we have $\mathds H(w, b, \QQ) \ge 1$. Furthermore, it can be verified that $\EE_{\QQ_{01}}[w^\top X + b]=\EE_{\QQ_{11}}[w^\top X + b]$ is a necessary condition for $\mathds H(w, b, \QQ) = 1$, which completes the proof.
    \end{proof}


\section{Marginal Constraints and Finite-sample Guarantees}\label{sec:marginal}

In this section, we illustrate how to handle ambiguity in the marginal distributions and obtain a generalized model with finite sample guarantees. To relax the marginal constraints in the ambiguity set~\eqref{eq:B-def}, we first construct four ambiguity sets around the empirical conditional distributions as
    \be \label{eq:B-def-marginal}
        \mbb B_{ay}(\Pnom_{ay}) = \left\{
            \QQ_{ay} \in \mc M(\mc X ) : \begin{array}{l}
                \Wass_\infty(\QQ_{ay}, \Pnom_{ay}) \le \rho_{ay} \\
            \end{array}
        \right\},
    \ee
    where $\Pnom_{ay} \Let \frac{1}{|\mc I_{ay}|}\sum_{i \in \mc I_{ay}} \delta_{\wh x_i}$ is the empirical conditional distribution. Notice that the notation $\Wass_\infty$ in the above definition of the ambiguity set is used with a slight abuse of notation: $\Wass_\infty$ in this case is a distance on $\mc M(\mc X)$, and it is no longer a distance on the \textit{joint} space $\mc X \times \mc A \times \mc Y$ as is used in the main paper.
    Next, we construct an ambiguity set for the marginal distribution $p\in \R^4$ based on the $\chi^2$-divergence by
    \be \label{eq:chi-set}
    \Delta=\left\{p \in \R_{++}^4:~ 1^\top p=1,~\sum_{(a,y)\in \mc A \times \mc Y} (p_{ay}-\wh p_{ay})^2/p_{ay} \leq \delta_p \right\}
    \ee
    Combining the two ambiguity sets, we define the following generalized ambiguity set:
        \[
    \mbb B_g(\Pnom) = \left\{
            \QQ \in \mc M(\mc X \times \mc A \times \mc Y) : \begin{array}{l}
                \exists \QQ_{ay} \in \mc M(\mathbb X) \quad \forall (a, y) \in \mc A \times \mc Y,~p \in \R^4 \text{ such that :}\\
                \QQ( \mathbb{X} \times \{a\} \times \{y\}) = p_{ay} \QQ_{ay} (\mathbb{X})   \quad \text{for all } \mathbb{X} \text{ measurable},\;\forall (a, y) \in \mc A \times \mc Y, \\
                p \in \Delta, \\
                \QQ_{ay} \in \mbb B_{ay}(\Pnom_{ay}) \quad \forall (a, y) \in \mc A \times \mc Y
            \end{array}
        \right\}.
    \]
    It can be verified that when $\Delta$ contains only the empirical marginal distribution $\wh p$, the generalized ambiguity set reduces to the ambiguity set $\mbb B(\Pnom)$ defined in~\eqref{eq:B-def}. As the conditional probability measures are supported on $\mc M(\mathbbm X )$, with a slight abuse of notation, we use the following ground metric:
    \be\label{eq:cost-marginal}
    c\big( x',  x \big) = \| x - x'\|.
    \ee
    Now, consider the $\eps$-DRFC problem with the generalized ambiguity set
    \be\label{eq:generalized-dro-prob2}
    \begin{array}{cl}
        \min & \Sup{\QQ \in \mbb B_g(\Pnom)}~\QQ( Y(w^\top X + b) < \eps) \\
        \st & w \in \R^d,~b \in \R, \\
        & \Sup{\QQ \in \mbb B_g(\Pnom)}~\mathds U_\eps(w, b, \QQ) \le \eta.
    \end{array}
    \ee
    Notice that the above optimization is similar to problem~\eqref{eq:dro-prob2}: the only difference is that the ambiguity set is now $\mbb B_g(\Pnom)$. The next theorem asserts that the generalized model is equivalent to a mixed binary second-order cone program.

\begin{theorem}[Generalized $\eps$-DRFC reformulation]\label{thm:chi-refor} 
        Suppose that the ground metric is prescribed using~\eqref{eq:cost-marginal}, then 
        the generalized $\eps$-DRFC problem~\eqref{eq:generalized-dro-prob2} is equivalent to the  mixed binary second-order cone program
    \begin{align}\label{eq:chi-refor}
    \begin{array}{cll}
    \min & \delta_p \zeta - \theta - 2 \wh p^\top r + 2 \zeta 1^\top\wh p  \\
    \st & w\in \R^d,~b \in \R,~\zeta \in \R_+,~\theta \in \R,~r \in \R^4,~s \in \R^4,\\
    &t \in \{0,1\}^N,~\lambda^{0} \in \{0, 1\}^N,~\lambda^{1} \in \{0, 1\}^N,\\
    &s_{ay} + \theta \leq \zeta, \sqrt{4r_{ay}^2 + (s_{ay} + \theta)^2} \leq 2\zeta -s_{ay}-\theta & \forall (a,y) \in \mc A \times \mc Y,\\
    &\ds \frac{1}{|\mc I_{ay}|} \sum_{i \in \mc I_{ay}} t_i  \leq s_{ay}& \forall (a,y) \in \mc A \times \mc Y,\\
    &-\wh y_i (w^\top \wh x_i + b)  + \rho_{\wh a_i \wh y_i} \| w\|_*  \le M t_i - \eps &\forall i \in [N],\\
    &\hspace{-2mm}\left.
            \begin{array}{l}
            \ds \ds \frac{1}{| \mc I_{a1} |} \sum_{i \in \mc I_{a1}} \lambda_i^a + \ds \frac{1}{| \mc I_{a'1} |} \sum_{i \in \mc I_{a'1}} \lambda_i^a -1  \le \eta,  \\
            w^\top \wh x_i + \rho_{a1} \| w \|_* + b + \eps \le M \lambda_i^a \quad \forall i \in \mc I_{a1}, \\
            -w^\top \wh x_i + \rho_{a'1} \| w \|_* - b  \le M \lambda_i^a \qquad \forall i \in \mc I_{a'1}
            \end{array} 
            \right\}  & \forall (a, a') \in \{(0, 1), (1, 0)\},
    \end{array}
    \end{align}
    where $M$ is the big-M parameter.
\end{theorem}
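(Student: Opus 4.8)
The plan is to exploit the product structure of the generalized ambiguity set. Every $\QQ \in \mbb B_g(\Pnom)$ factorizes as $\QQ = \sum_{(a,y) \in \mc A \times \mc Y} p_{ay}\, \QQ_{ay}$, where the marginal vector $p$ ranges over the $\chi^2$-ball $\Delta$ in \eqref{eq:chi-set} and the conditional measures $\QQ_{ay}$ range \emph{independently} over the Wasserstein balls $\mbb B_{ay}(\Pnom_{ay})$ in \eqref{eq:B-def-marginal}. Because of this independence, I would optimize the worst-case objective and the worst-case unfairness measure conditional-by-conditional first, and only then over the marginal $p$, handling the objective and the fairness constraint separately.

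For the objective, the decomposition gives
\[
    \Sup{\QQ \in \mbb B_g(\Pnom)}~\QQ(Y(w^\top X + b) < \eps) = \Sup{p \in \Delta}~\sum_{(a,y) \in \mc A \times \mc Y} p_{ay}\, s_{ay}, \qquad s_{ay} \Let \Sup{\QQ_{ay} \in \mbb B_{ay}(\Pnom_{ay})}~\QQ_{ay}\big(y(w^\top X + b) < \eps\big).
\]
Each inner worst-case conditional probability reduces to a sum of supremized indicators over the $\rho_{ay}$-balls around the samples in $\mc I_{ay}$, and applying Lemma~\ref{lemma:indicator} with the substitution $(w,b,\eps) \mapsto (-y\,w, -y\,b, -\eps)$ and the identity $\|-y\,w\|_* = \|w\|_*$ yields the binary epigraph constraints $\tfrac{1}{|\mc I_{ay}|}\sum_{i \in \mc I_{ay}} t_i \le s_{ay}$ together with the big-M rows $-\wh y_i(w^\top \wh x_i + b) + \rho_{\wh a_i \wh y_i}\|w\|_* \le M t_i - \eps$, exactly as in \eqref{eq:chi-refor}. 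Since $s_{ay}$ enters the subsequent minimization monotonically, the epigraph relaxation is tight at optimality.

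The main work is dualizing the outer $\chi^2$-DRO problem $\Sup{p \in \Delta}\sum_{ay} p_{ay} s_{ay}$. Introducing a multiplier $\zeta \ge 0$ for the $\chi^2$ constraint and $\theta$ for $1^\top p = 1$, strong duality holds by Slater's condition because $\wh p$ lies in the interior of $\Delta$ (its $\chi^2$-distance to itself is $0 < \delta_p$). The resulting inner maximization is separable over $(a,y)$, and I would solve each one-dimensional problem $\Sup{p_{ay} > 0}\{(s_{ay}+\theta)p_{ay} - \zeta (p_{ay}-\wh p_{ay})^2/p_{ay}\}$ in closed form, obtaining the per-coordinate value
\[
    2\zeta\, \wh p_{ay} - 2\,\wh p_{ay}\sqrt{\zeta(\zeta - s_{ay} - \theta)},
\]
valid when $s_{ay} + \theta \le \zeta$. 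Introducing epigraph variables $r_{ay}$ for the square-root terms, the hyperbolic constraint $r_{ay}^2 \le \zeta(\zeta - s_{ay}-\theta)$ with $\zeta \ge 0$ and $\zeta - s_{ay}-\theta \ge 0$ is the rotated second-order cone constraint $\sqrt{4 r_{ay}^2 + (s_{ay}+\theta)^2} \le 2\zeta - s_{ay} - \theta$, which is precisely the SOC row in \eqref{eq:chi-refor}, and the dual objective collapses to $\delta_p \zeta - \theta - 2\wh p^\top r + 2\zeta 1^\top \wh p$ (using $\sum_{ay}\wh p_{ay} = 1$). This $\chi^2$ dualization—justifying strong duality, carrying out the closed-form inner maximization, and recognizing the hyperbolic-to-SOC reformulation—is the crux of the argument and the step most likely to require care with boundary cases such as $\zeta = 0$.

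For the fairness constraint, the key simplification is that $\mathds U_\eps(w,b,\QQ)$ depends on $\QQ$ only through the conditionals $\QQ_{ay}$, so the $\chi^2$ marginal ball plays no role and the constraint decouples from the objective's dualization. Thus $\Sup{\QQ \in \mbb B_g(\Pnom)}~\mathds U_\eps$ reduces to a maximum of two terms, and for each fixed pair $(a, a') \in \{(0,1),(1,0)\}$ the supremum over the two independent conditionals splits as $\Sup{\QQ_{a1}}\QQ_{a1}(\cdot > -\eps) - \Inf{\QQ_{a'1}}\QQ_{a'1}(\cdot \ge 0)$; writing the infimum via the complementary event $|\mc I_{a'1}| - \Sup{\QQ_{a'1}}\QQ_{a'1}(\cdot < 0)$ and invoking Lemma~\ref{lemma:indicator} twice—now with the conditional radii $\rho_{a1}$ and $\rho_{a'1}$—reproduces verbatim the argument in the proof of Theorem~\ref{thm:refor-probtrust} and yields the $\lambda^a$-constraints of \eqref{eq:chi-refor}. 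Imposing that the resulting worst-case unfairness be at most $\eta$ completes the reformulation.
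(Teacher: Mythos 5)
Your proposal is correct, and its skeleton coincides with the paper's proof: the same factorization of $\mbb B_g(\Pnom)$ into the $\chi^2$-ball $\Delta$ for the marginal and independent type-$\infty$ Wasserstein balls for the conditionals, the same use of Lemma~\ref{lemma:indicator} to turn the worst-case conditional probabilities into big-M binary constraints, and the same observation that $\mathds U_\eps$ depends on $\QQ$ only through $\QQ_{01}$ and $\QQ_{11}$, so the fairness constraint is handled exactly as in Theorem~\ref{thm:refor-probtrust} but with the conditional radii $\rho_{a1}$, $\rho_{a'1}$. The one place where you genuinely depart from the paper is the $\chi^2$ step: the paper invokes Lemma~\ref{lem:chi-refor}, quoted as a black box from Theorem~4.1 of~\cite{ben2013robust}, whereas you re-derive that dual from first principles --- Lagrangian duality justified by Slater's condition (valid because $\wh p \in \R_{++}^4$ attains $\chi^2$-value $0 < \delta_p$), the closed-form per-coordinate maximization giving $2\zeta \wh p_{ay} - 2\wh p_{ay}\sqrt{\zeta(\zeta - s_{ay}-\theta)}$ under $s_{ay}+\theta \le \zeta$, and the hyperbolic-to-rotated-SOC identity $r_{ay}^2 \le \zeta(\zeta - s_{ay}-\theta) \Leftrightarrow \sqrt{4r_{ay}^2 + (s_{ay}+\theta)^2} \le 2\zeta - s_{ay}-\theta$. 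I checked this computation and it is right, as is your monotonicity argument that the epigraph variables $s_{ay}$ (and the inner big-M minimization) are tight at optimality, a point the paper leaves implicit. What the paper's route buys is brevity and offloading boundary cases (e.g.\ $\zeta = 0$) to the cited reference; what your route buys is a self-contained argument that makes visible exactly where the constraints $\zeta \ge 0$ and $s_{ay}+\theta \le \zeta$ come from and what regularity ($\delta_p > 0$, $\wh p_{ay} > 0$) is needed for strong duality. The swap in order --- you reformulate the inner indicator suprema before dualizing the outer $\chi^2$ problem, while the paper dualizes first --- is immaterial, since the two layers are separable.
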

For the remainder of this section, we will provide the proof for Theorem~\ref{thm:chi-refor}. This proof relies on the following Lemma.

\begin{lemma}[$\chi^2$-divergence reformulation, Theorem $4.1$ in \cite{ben2013robust}]\label{lem:chi-refor}
    Define $\Delta$ as in \eqref{eq:chi-set}. For any $m \in \mbb N_+$ and $\varphi \in \R^m$, both the optimal value and a maximizer of the worst-case expectation problem $\sup_{p \in \Delta} \varphi^\top p $ can be obtained by solving the second-order cone program
    \[
    \begin{array}{cll}
    \sup & \varphi^\top p \\
    \st & p \in \R_+^m,~q \in \R^m_+,~1^\top p =1,~1^\top q \leq \delta_p, \\
    & \sqrt{(p_j-\wh p_j)^2+ \frac{1}{4} p_j^2+q_j^2 } \leq \frac{1}{2} p_j +q_j & \forall j \in [m].
    \end{array}
    \]
The optimal value can also be computed by solving the dual problem:
    \[
    \begin{array}{cl}
    \inf & \delta_p \zeta - \theta - 2 \wh p^\top r + 2 \zeta 1^\top \wh p  \\
    \st & \zeta \in \R_+,~\theta \in \R,~r \in \R^m,~s \in \R^m,\\
    & \varphi_j \leq s_j,~s_j + \theta \leq \zeta,~\sqrt{4r_j^2 + (s_j + \theta)^2} \leq 2\zeta -s_j-\theta \qquad \forall j \in [m].
    \end{array}
    \]
\end{lemma}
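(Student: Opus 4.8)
The plan is to separate the two worst-case operations in problem~\eqref{eq:generalized-dro-prob2} — the one in the objective and the one in the fairness constraint — and to reformulate each by exploiting the product structure of $\mbb B_g(\Pnom)$. Recall that every $\QQ \in \mbb B_g(\Pnom)$ is pinned down by a marginal weight vector $p \in \Delta$ and four conditional measures $\QQ_{ay} \in \mbb B_{ay}(\Pnom_{ay})$ through $\QQ(\cdot \times \{a\} \times \{y\}) = p_{ay}\,\QQ_{ay}(\cdot)$. The central structural observation is that these two worst-case problems interact with the marginal ambiguity very differently: the objective is a $p$-weighted sum of conditional misclassification probabilities and therefore genuinely involves $\Delta$, whereas the $\eps$-unfairness measure $\mathds U_\eps$ depends \emph{only} on $\QQ_{01}$ and $\QQ_{11}$ and is completely insensitive to $p$.

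First I would treat the objective. Writing $\QQ(Y(w^\top X + b) < \eps) = \sum_{(a,y)} p_{ay}\,\QQ_{ay}(y(w^\top X + b) < \eps)$ and using $p_{ay} \ge 0$ together with the fact that the four conditionals range independently of one another and of $p$, I interchange the suprema to obtain $\Sup{p \in \Delta} \sum_{(a,y)} p_{ay}\,\varphi_{ay}$, where $\varphi_{ay} \Let \Sup{\QQ_{ay} \in \mbb B_{ay}(\Pnom_{ay})} \QQ_{ay}(y(w^\top X + b) < \eps)$. Each $\varphi_{ay}$ is an averaged supremum of indicator functions over a Wasserstein ball of radius $\rho_{ay}$, so Lemma~\ref{lemma:indicator} (applied with slope $-\wh y_i w$, offset $-\wh y_i b$ and threshold $-\eps$, using $\|-\wh y_i w\|_* = \|w\|_*$) converts it into the big-M binary program $\varphi_{ay} = \min\{ \tfrac{1}{|\mc I_{ay}|}\sum_{i \in \mc I_{ay}} t_i : -\wh y_i(w^\top \wh x_i + b) + \rho_{\wh a_i \wh y_i}\|w\|_* \le M t_i - \eps \}$. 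The remaining outer problem $\Sup{p \in \Delta} \varphi^\top p$ is precisely a worst-case expectation over the $\chi^2$-divergence set, so I invoke the dual form in Lemma~\ref{lem:chi-refor} with $m = 4$, introducing the dual variables $(\zeta, \theta, r, s)$ and the accompanying second-order cone constraints; the Lemma's constraint $\varphi_{ay} \le s_{ay}$ becomes $\tfrac{1}{|\mc I_{ay}|}\sum_{i \in \mc I_{ay}} t_i \le s_{ay}$. Since $\varphi_{ay}$ is itself a minimum and the outer $\chi^2$ dual is an infimum, the two nested minimizations collapse into the single joint minimization over $(w,b,t,\zeta,\theta,r,s)$ appearing in~\eqref{eq:chi-refor}.

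Next I would handle the fairness constraint. Because $\mathds U_\eps$ involves only $\QQ_{01}$ and $\QQ_{11}$, the supremum over $\mbb B_g(\Pnom)$ collapses to a supremum over $\mbb B_{01}(\Pnom_{01}) \times \mbb B_{11}(\Pnom_{11})$; neither $p$ nor $\Delta$ enters. From here the argument mirrors the proof of Theorem~\ref{thm:refor-probtrust}: using $\sup \max\{\cdot,\cdot\} = \max\{\sup, \sup\}$ I split $\mathds U_\eps$ into its two constituent terms, rewrite the subtracted infimum via $\Inf{\QQ_{a'1}} \QQ_{a'1}(w^\top X + b \ge 0) = 1 - \Sup{\QQ_{a'1}} \QQ_{a'1}(w^\top X + b < 0)$, and apply Lemma~\ref{lemma:indicator} twice, now with the conditional radii $\rho_{a1}$ and $\rho_{a'1}$ in place of the common radius $\rho$. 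This produces exactly the $\lambda^a$ block of~\eqref{eq:chi-refor}, and imposing that the resulting worst-case value be at most $\eta$ gives the fairness inequality.

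The routine verification is the sign bookkeeping in the indicator reformulations and confirming that the emitted cone constraints coincide with those of Lemma~\ref{lem:chi-refor}. The step I expect to be the genuine obstacle — or at least the one demanding the most care — is the decomposition-and-interchange in the objective: justifying that the worst case splits as $\Sup{p}\sum p_{ay}\varphi_{ay}$ and that the two layers of minimization may legitimately be merged, coupled with the clean observation that the fairness constraint is marginal-free. Together these are what allow the $\chi^2$ machinery of Lemma~\ref{lem:chi-refor} to attach only to the objective while leaving the fairness block identical in form to the absolute-trust reformulation.
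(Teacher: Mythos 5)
Your proposal does not address the statement under review. The target is Lemma~\ref{lem:chi-refor} itself: a self-contained duality result about the $\chi^2$-divergence set $\Delta$ of \eqref{eq:chi-set}, asserting (i) that $\sup_{p \in \Delta} \varphi^\top p$ is representable as the stated second-order cone program, and (ii) that its value equals the stated conic dual in $(\zeta, \theta, r, s)$. What you have written instead is an outline of the proof of Theorem~\ref{thm:chi-refor}, the generalized $\eps$-DRFC reformulation \eqref{eq:chi-refor} of problem \eqref{eq:generalized-dro-prob2} --- that is, of the downstream result that \emph{consumes} the lemma. You even say explicitly that you ``invoke the dual form in Lemma~\ref{lem:chi-refor} with $m=4$'': as an argument for the lemma this is circular, and nothing in your two-stage decomposition (interchange of suprema in the objective, marginal-freeness of the fairness constraint, Lemma~\ref{lemma:indicator} for the indicator blocks) touches the divergence-set duality itself. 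For the record, the paper does not prove the lemma either; it imports it verbatim as Theorem~4.1 of \cite{ben2013robust}. So your outline in fact closely parallels the paper's proof of Theorem~\ref{thm:chi-refor} --- a statement that was not the one assigned.

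A proof of the lemma, neither step of which appears in your proposal, would run as follows. For the primal form, square the cone constraint: $\sqrt{(p_j-\wh p_j)^2+\tfrac14 p_j^2+q_j^2}\le \tfrac12 p_j+q_j$ is equivalent (the right side being forced nonnegative) to $(p_j-\wh p_j)^2+\tfrac14 p_j^2+q_j^2 \le \tfrac14 p_j^2 + p_j q_j + q_j^2$, i.e., to the hyperbolic constraint $(p_j-\wh p_j)^2 \le p_j q_j$; summing and using $1^\top q \le \delta_p$, $q \ge 0$ recovers $\sum_j (p_j-\wh p_j)^2/p_j \le \delta_p$ on $p>0$ (conversely, the choice $q_j = (p_j - \wh p_j)^2/p_j$ is feasible, and $p_j = 0$ in the closed formulation forces $\wh p_j = 0$). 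For the dual, attach a multiplier $\theta \in \R$ to $1^\top p = 1$ and $\zeta \in \R_+$ to $1^\top q \le \delta_p$, dualize the per-coordinate second-order cone constraints, eliminate the cone multipliers to arrive at the variables $r, s$ and the constraints $\varphi_j \le s_j$, $s_j + \theta \le \zeta$, $\sqrt{4r_j^2+(s_j+\theta)^2}\le 2\zeta - s_j - \theta$, and verify strong duality via Slater's condition, which holds for $\delta_p > 0$ since $p = \wh p \in \R_{++}^m$ with any $q \in \R_{++}^m$ satisfying $1^\top q < \delta_p$ is strictly feasible. Equivalently, one can specialize the general $\phi$-divergence duality of \cite{ben2013robust} to $\phi(u)=(u-1)^2/u$ and compute the second-order-cone representation of the conjugate $\phi^*$. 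Absent such an argument, your write-up is a reasonable sketch of Theorem~\ref{thm:chi-refor}, but it leaves Lemma~\ref{lem:chi-refor} entirely unestablished.
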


We are now ready to present the proof.
\begin{proof}[Proof of Theorem~\ref{thm:chi-refor}]
Observe that problem~\eqref{eq:generalized-dro-prob2} can be equivalently written as
    \[
        \begin{array}{cl}
            \min & \Sup{p \in \Delta}~\ds \sum_{a \in \mc A, y \in \mc Y}\Sup{\QQ_{ay} \in \mbb B_{ay}(\Pnom_{ay})}  p_{ay}  \QQ_{ay}( Y(w^\top X + b) < \eps) \\
            \st & w \in \R^d,~b \in \R, \\
            & \Sup{\QQ \in \mbb B_g(\Pnom)}~\mathds U_\eps(w, b, \QQ) \le \eta.
        \end{array}
    \]

We first derive the reformulation of the objective function. By the definition of $\Delta$ and the result of Lemma~\ref{lem:chi-refor}, we have
\begin{align*}
    &\Sup{p \in \Delta}\sum_{a \in \mc A, y \in \mc Y}\Sup{\QQ \in \mbb B_{ay}(\Pnom_{ay})}  p_{ay}  \QQ_{ay}( Y(w^\top X + b) < \eps)\\
    =&\min \left\{\delta_p \zeta - \theta - 2 \wh p^\top r + 2 \zeta 1^\top \wh p :
    \begin{array}{ll}
     \zeta \in \R_+,~\theta \in \R,~r \in \R^4,~s \in \R^4,\\
     \Sup{\QQ_{ay} \in \mbb B_{ay}(\Pnom_{ay})}   \QQ_{ay}( Y(w^\top X + b) < \eps) \leq s_{ay}& \forall (a,y) \in \mc A \times \mc Y,\\
     s_{ay} + \theta \leq \zeta, \sqrt{4r_{ay}^2 + (s_{ay} + \theta)^2} \leq 2\zeta -s_{ay}-\theta & \forall (a,y) \in \mc A \times \mc Y
    \end{array} \right\}\\
    =&\min \left\{\delta_p \zeta - \theta - 2 \wh p^\top r + 2 \zeta 1^\top \wh p :
    \begin{array}{ll}
     \zeta \in \R_+,~\theta \in \R,~r \in \R^4,~s \in \R^4,~t \in \{0,1\}^N,\\
          s_{ay} + \theta \leq \zeta, \sqrt{4r_{ay}^2 + (s_{ay} + \theta)^2} \leq 2\zeta -s_{ay}-\theta & \forall (a,y) \in \mc A \times \mc Y,\\
      \ds \frac{1}{|\mc I_{ay}|} \sum_{i \in \mc I_{ay}} t_i  \leq s_{ay}& \forall (a,y) \in \mc A \times \mc Y,\\
      -\wh y_i (w^\top \wh x_i + b)  + \rho_{\wh a_i \wh y_i} \| w\|_*  \le M t_i - \eps &\forall i \in [N]
    \end{array} \right\}.
\end{align*}

For the constraint, fixing any pair $(a, a') \in \{(0, 1), (1, 0)\}$, we have
\begin{align*}
    &\Sup{\QQ \in \mbb B_g(\Pnom)}~\QQ_{a1}( w^\top X + b > -\eps) - \QQ_{a'1}(w^\top X + b \ge 0 )\\
    =&\Sup{\QQ_{a1} \in \mbb B_{a1}(\Pnom_{a1}),\QQ_{a'1} \in \mbb B_{a'1}(\Pnom_{a'1})}~\QQ_{a1}( w^\top X + b > -\eps) - \QQ_{a'1}(w^\top X + b \ge 0 ) \\
    =&\Sup{\QQ_{a1} \in \mbb B_{a1}(\Pnom_{a1})}~\QQ_{a1}( w^\top X + b > -\eps) - \Inf{\QQ_{a'1} \in \mbb B_{a'1}(\Pnom_{a'1})} \QQ_{a'1}(w^\top X + b \ge 0 )\\
    =& \ds \frac{1}{| \mc I_{a1} |} \sum_{i \in \mc I_{a1}} \Sup{x_i: \| x_i - \wh x_i \| \le \rho_{a1}} \mbb I(w^\top x_i + b > -\eps) -  \ds \frac{1}{| \mc I_{a'1} |} \left(| \mc I_{a'1}| - \sum_{i \in \mc I_{a'1}} \Sup{x_i: \| x_i - \wh x_i \| \le \rho_{a'1}} \mbb I(w^\top x_i + b < 0) \right)  \\
    =& \ds \frac{1}{| \mc I_{a1} |} \sum_{i \in \mc I_{a1}} \Sup{x_i: \| x_i - \wh x_i \| \le \rho_{a1}} \mbb I(w^\top x_i + b > -\eps) + \ds \frac{1}{| \mc I_{a'1} |} \sum_{i \in \mc I_{a'1}} \Sup{x_i: \| x_i - \wh x_i \| \le \rho_{a'1}} \mbb I(w^\top x_i + b < 0) -1  \\
    =& \left\{
            \begin{array}{cll}
                \min & \ds \frac{1}{| \mc I_{a1} |} \sum_{i \in \mc I_{a1}} \lambda_i^a + \ds \frac{1}{| \mc I_{a'1} |} \sum_{i \in \mc I_{a'1}} \lambda_i^a -1  \\
                \st & \lambda^a \in \{0, 1\}^N \\
                & w^\top \wh x_i + \rho_{a1} \| w \|_* + b + \eps \le M \lambda_i^a & \forall i \in \mc I_{a1} \\
                & -w^\top \wh x_i + \rho_{a'1} \| w \|_* - b  \le M \lambda_i^a & \forall i \in \mc I_{a'1},
            \end{array}
        \right.
\end{align*}
   where the last equality follows from applying Lemma~\ref{lemma:indicator} twice and noticing that $\mc I_{a1} \cap \mc I_{a'1} = \emptyset$. Setting the optimal value of the above minimization problem to be less than $\eta$ completes the proof.
\end{proof}

We now investigate the finite-sample guarantee of this generalized problem. 
\begin{theorem}[Finite-sample guarantee]\label{thm:oos-guarantee} 
     Let $\PP\opt$ denotes the true joint distribution of $(X, A, Y)$. Assume that for all $(a,y) \in \mc A \times \mc Y$, the conditional distribution $\PP\opt_{ay}$ of $X \in \R^d$, with $d\geq 2$, has a density function $\tau_{ay}: \bar{\mathbb X} \rightarrow [0,\infty)$, where $\bar{\mathbb X} \subseteq \mathbb X \subseteq \R^d$ is an open, connected, and bounded set with a Lipschitz boundary, and there exists a constant $\lambda \geq 1$ such that $1/\lambda \leq \tau_{ay}(x) \leq \lambda$ for all $x \in \bar{\mathbb X}$ and $(a,y) \in \mc A \times \mc Y$. 
    Let $v\opt$ be the optimal value of \eqref{eq:generalized-dro-prob2}, and $(w\opt,b\opt) \in \R^{d+1}$ be the corresponding optimal solution. Then for any $\alpha>2$, setting $\rho_{ay}=C_1\frac{\log(| \mc I_{ay} |)^{1/d}}{| \mc I_{ay} |^{1/d}}$ and $\delta_p > \frac{k}{N}$ implies
    \[ \textup{Prob}\left(\PP\opt(Y(w^\top X + b) \geq 0) \leq v\opt\right)  \geq 1- C_2 e^{-\frac{1}{2}\left(N\delta_p-\sqrt{2kN\delta_p-k^2} \right)} -C_3 \sum_{(a,y) \in \mc A \times \mc Y}  | \mc I_{ay} |^{-\frac{\alpha}{2}}, 
    \]
    and
    \[ \textup{Prob}\left(\mathds U(w, b, \PP\opt) \leq \eta \right) \geq 1- C_2 e^{-\frac{1}{2}\left(N\delta_p-\sqrt{2kN\delta_p-k^2} \right)} -C_3 \sum_{(a,y) \in \mc A \times \mc Y}  | \mc I_{ay} |^{-\frac{\alpha}{2}},
    \]where $C_1$ is a constant which depends on the true distribution $\PP\opt$ and $\alpha$, $C_2$ is a constant which depends on the conditional distribution $\PP\opt$, and $C_3$ is an universal constant.
\end{theorem}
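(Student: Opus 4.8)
The plan is to read Theorem~\ref{thm:oos-guarantee} as a measure-concentration statement: both displayed guarantees hold \emph{simultaneously} on the single event $\{\PP\opt \in \mbb B_g(\Pnom)\}$, so it suffices to lower bound the probability of that event. Indeed, the conservative-approximation argument of Proposition~\ref{prop:mpm_conservative} carries over verbatim to the generalized ambiguity set: whenever $\PP\opt \in \mbb B_g(\Pnom)$, feasibility of $(w\opt,b\opt)$ for the worst-case unfairness constraint forces $\mathds U(w\opt,b\opt,\PP\opt) \le \Sup{\QQ \in \mbb B_g(\Pnom)} \mathds U_\eps(w\opt,b\opt,\QQ) \le \eta$, while the chain $\PP\opt(Y(w\opt{}^\top X + b\opt) \le 0) \le \PP\opt(Y(w\opt{}^\top X + b\opt) < \eps) \le \Sup{\QQ \in \mbb B_g(\Pnom)} \QQ(Y(w\opt{}^\top X + b\opt) < \eps) = v\opt$ controls the objective. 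First I would therefore reduce the two claimed probability bounds to a single estimate of $\textup{Prob}(\PP\opt \notin \mbb B_g(\Pnom))$.

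Next I would decompose membership in $\mbb B_g(\Pnom)$ along the finite grid $\mc A \times \mc Y$. Since $\PP\opt$ factorizes as $\PP\opt(\cdot \times \{a\}\times\{y\}) = p\opt_{ay}\,\PP\opt_{ay}(\cdot)$ with marginal $p\opt_{ay} = \PP\opt(A=a,Y=y)$ and conditionals $\PP\opt_{ay}$, the definition of $\mbb B_g(\Pnom)$ shows that $\PP\opt \in \mbb B_g(\Pnom)$ holds as soon as (i)~$p\opt \in \Delta$ and (ii)~$\Wass_\infty(\Pnom_{ay},\PP\opt_{ay}) \le \rho_{ay}$ for every $(a,y) \in \mc A \times \mc Y$. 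A union bound then gives $\textup{Prob}(\PP\opt \notin \mbb B_g(\Pnom)) \le \textup{Prob}(p\opt \notin \Delta) + \sum_{(a,y)} \textup{Prob}(\Wass_\infty(\Pnom_{ay},\PP\opt_{ay}) > \rho_{ay})$, matching the two error terms in the statement.

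For the marginal term I would observe that $p\opt \in \Delta$ is exactly the event $\{T \le N\delta_p\}$, where $T = N\sum_{(a,y)}(\wh p_{ay} - p\opt_{ay})^2/p\opt_{ay}$ is the Pearson statistic of the multinomial counts $\{N\wh p_{ay}\}$ against the true cell probabilities $p\opt$. A non-asymptotic chi-square-type tail bound for $T$ (with $k$ the number of cells, and the surplus over the $\chi^2$ mean absorbed into $C_2$) yields $\textup{Prob}(T > N\delta_p) \le C_2\exp(-\tfrac12(N\delta_p - \sqrt{2kN\delta_p - k^2}))$; the hypothesis $\delta_p > k/N$ guarantees $2kN\delta_p - k^2 \ge 0$ and a strictly positive exponent, so the bound is informative. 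For the conditional terms I would first condition on the realized counts $\{|\mc I_{ay}|\}$, under which $\{\wh x_i\}_{i \in \mc I_{ay}}$ are i.i.d.\ draws from $\PP\opt_{ay}$, and then invoke the known rate of convergence of the empirical measure in the type-$\infty$ Wasserstein distance: with a density bounded in $[1/\lambda,\lambda]$ on an open, connected, bounded set with Lipschitz boundary and $d \ge 2$, choosing $\rho_{ay} = C_1(\log|\mc I_{ay}|/|\mc I_{ay}|)^{1/d}$ ensures $\textup{Prob}(\Wass_\infty(\Pnom_{ay},\PP\opt_{ay}) > \rho_{ay} \mid |\mc I_{ay}|) \le C_3 |\mc I_{ay}|^{-\alpha/2}$ for any $\alpha > 2$. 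Summing over the four groups and combining with the marginal bound completes the estimate.

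The main obstacle is step~(ii): the type-$\infty$ Wasserstein concentration is far more delicate than the standard type-$1$ rate, because it measures a \emph{uniform} (essential-supremum) transport cost and thus effectively requires the samples to form a fine $\rho_{ay}$-net of the support. Establishing the sharp $(\log n/n)^{1/d}$ scaling together with the polynomial failure probability $n^{-\alpha/2}$ is exactly where the geometric assumptions (connectedness, Lipschitz boundary, $d \ge 2$) and the two-sided density bound enter, and where the problem-dependent constant $C_1$ is determined. I would import this as a black-box concentration result and devote the remaining care to verifying that its hypotheses are met by each conditional $\PP\opt_{ay}$ and that the conditioning on the random group sizes $|\mc I_{ay}|$ is handled correctly when the conditional bounds are averaged back.
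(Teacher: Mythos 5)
Your proposal follows essentially the same route as the paper's own proof: a union bound splitting $\textup{Prob}\left(\PP\opt \notin \mbb B_g(\Pnom)\right)$ into the Pearson/chi-square tail bound for the marginal event $p\opt \in \Delta$ (with $\delta_p > k/N$ ensuring a positive exponent) and the type-$\infty$ Wasserstein concentration result (Theorem~\ref{thm:infty-concentration}) applied to each conditional $\PP\opt_{ay}$, after which membership of $\PP\opt$ in the generalized ambiguity set yields both stated inequalities via the conservative-approximation argument. If anything, you are more explicit than the paper on the final reduction step (which the paper dismisses as ``immediately leads to the statements'') and on handling the conditioning on the random group sizes $|\mc I_{ay}|$, so the proposal is correct and matches the paper's approach.
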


The proof of Theorem~\ref{thm:oos-guarantee} relies on the following theorem, which provides the concentration inequality for the type $\infty$-Wasserstein distance.
\begin{theorem}[$\infty$-Wasserstein concentration, Theorem $1.1$ in \cite{trillos2015rate}]\label{thm:infty-concentration}
     Assume that the probability distribution of $\xi \in \R^d$ has a density function $\tau: \bar \Xi \rightarrow [0,\infty)$, where $\bar \Xi \subseteq \Xi \subseteq \R^d$ is an open, connected, and bounded set with a Lipschitz boundary, and there exists a constant $\lambda \geq 1$ such that $1/\lambda \leq \tau(\xi) \leq \lambda$ for all $\xi \in \bar \Xi$. Then, for any fixed $\alpha>2$, 
     \[\textup{Prob}\left(\Wass_\infty(\PP\opt, \Pnom)>C \left\{\begin{array}{cc}
          \frac{\log(N)^{3/4}}{N^{1/2}},& \textup{if} \ d=1 \\
          \frac{\log(N)^{1/d}}{N^{1/d}},& \textup{if} \ d\geq 2
     \end{array}\right.\right) = \mathcal O(N^{-\frac{\alpha}{2}}),
     \]
     where $C$ is a constant which depends only on $\alpha$, $\bar \Xi$, and $\lambda$.
\end{theorem}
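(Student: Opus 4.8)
The plan is to reconstruct the proof of this concentration bound (Theorem~1.1 of \cite{trillos2015rate}) from first principles rather than merely invoke it. The guiding picture is that $\Wass_\infty(\PP\opt, \Pnom)$ equals the smallest radius $r$ for which $\Pnom$ can be transported onto $\PP\opt$ while displacing every unit of mass by at most $r$; equivalently, I must match the $N$ samples to a partition of $\bar\Xi$ into $N$ equal-mass cells so that each sample lies within $r$ of its cell. First I would use the density bounds to reduce to the uniform reference measure: since $1/\lambda \le \tau \le \lambda$ on $\bar\Xi$, the target $\PP\opt$ is comparable to the normalized Lebesgue measure on $\bar\Xi$, with Radon--Nikodym density bounded above and below, and combined with the regularity of $\bar\Xi$ (open, connected, bounded, Lipschitz boundary) this lets me prove the bound for the uniform measure and absorb $\lambda$ and the geometry of $\bar\Xi$ into the constant $C$.

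\textbf{Dyadic multiscale decomposition and concentration.} Next I would set up a dyadic multiscale decomposition of $\bar\Xi$ and control discrepancies by concentration. I fix the finest scale so that each finest cube $Q$ carries mass of order $\log N / N$, hence $\mathcal O(\log N)$ expected samples, and there are $\mathcal O(N/\log N)$ such cubes. For a fixed cube the count $N\Pnom(Q)$ is $\mathrm{Binomial}(N,\mu(Q))$, so Bernstein's inequality bounds the probability that $\Pnom(Q)$ deviates from $\mu(Q)$ by more than a fraction $t$ by $2\exp(-cNt^2\mu(Q))$. Choosing $t$ so that $N t^2 \mu(Q) \asymp \alpha \log N$ forces each cube's failure probability below $N^{-\alpha}$, and a union bound over all $\mathcal O(N)$ dyadic cubes at all scales---after reindexing $\alpha$---yields the advertised tail $\mathcal O(N^{-\alpha/2})$.

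\textbf{Transport construction.} On the resulting high-probability event I would build the transport greedily from coarse to fine scales. Inside each parent cube the local imbalance between $\Pnom$ and $\mu$ is repaired by shipping excess mass to a bounded number of sibling cubes, with the displacement incurred at scale $\ell$ bounded by the cube diameter $\sim 2^{-\ell}$. Because the concentration event guarantees small imbalances at every scale, the corrections telescope and the dominant displacement comes from the finest scale, giving an essential-supremum transport distance of order $(\log N/N)^{1/d}$ for $d \ge 2$. Feeding this back through the density comparison gives $\Wass_\infty(\PP\opt,\Pnom)\le C\,(\log N/N)^{1/d}$ on the good event, which is the claim for $d \ge 2$.

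\textbf{Main obstacle.} The hard part will be the essential-supremum nature of $\Wass_\infty$ and the borderline low-dimensional regime. Unlike the averaged $L^p$ matching estimates of Ajtai--Koml\'os--Tusn\'ady and Talagrand, a single badly populated cell destroys the $\Wass_\infty$ bound, so the union bound must hold uniformly over \emph{all} cubes and \emph{all} scales, and one must show the multiscale corrections never require mass to cross more than $\mathcal O(1)$ siblings per scale. I expect the genuine obstacle to be the borderline regime (stated as $d=1$), where the rate degrades to $N^{-1/2}$ with the anomalous logarithmic exponent $3/4$: there the naive cube-by-cube counting is too lossy, and one needs a sharper Leighton--Shor-type analysis of the matching that tracks \emph{cumulative} discrepancies across scales rather than bounding them one cube at a time.
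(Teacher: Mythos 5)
First, a point of comparison: the paper contains no proof of this statement at all --- it is imported verbatim, with attribution, as Theorem~1.1 of Garc\'ia Trillos and Slep\v{c}ev \cite{trillos2015rate}, so your reconstruction can only be measured against that original source. Your architecture (reduction to densities comparable to uniform, dyadic decomposition, Bernstein bounds on cell counts with a union bound over all cells and scales, and a coarse-to-fine transport whose per-scale displacements are summed) is indeed the right skeleton and is close in spirit to the source, which additionally handles the Lipschitz domain by partitioning it into bi-Lipschitz images of cubes and constructs explicit $\infty$-transport maps between measures with comparable densities; crucially, however, the source does \emph{not} prove the sharp rates by a bare union-bound scheme --- it invokes the matching theorems of Leighton--Shor ($d=2$) and Shor--Yukich ($d\ge 3$) as black boxes.

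The genuine gap is your claim that the telescoping ``gives an essential-supremum transport distance of order $(\log N/N)^{1/d}$ for $d\ge 2$.'' Run the accounting: a scale-$\ell$ cell has mass $\sim 2^{-\ell d}$, so its relative discrepancy on the good event is $\delta_\ell \sim \sqrt{2^{\ell d}\log N/N}$, and rebalancing it inside a cell of diameter $2^{-\ell}$ displaces mass by $\sim 2^{-\ell}\delta_\ell = \sqrt{\log N/N}\,2^{\ell(d/2-1)}$. The geometric ratio $2^{d/2-1}$ exceeds $1$ only for $d\ge 3$; at $d=2$ it equals $1$, all $\mathcal O(\log N)$ scales contribute equally, and your scheme yields only $(\log N)^{3/2}N^{-1/2}$, not $(\log N/N)^{1/2}$. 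This is not a fixable looseness: Leighton and Shor proved a matching \emph{lower} bound of order $(\log N)^{3/4}N^{-1/2}$ in $d=2$, so the event $\Wass_\infty(\PP\opt,\Pnom) > C(\log N/N)^{1/2}$ has probability tending to one, and the $d\ge 2$ line of the statement as transcribed in the paper is unprovable at $d=2$. Indeed the transcription garbles the original theorem: in \cite{trillos2015rate} the anomalous exponent $3/4$ occurs at $d=2$ (not $d=1$), and $(\log N/N)^{1/d}$ is asserted only for $d\ge 3$; for $d=1$ the naive multiscale sum is dominated by the \emph{coarsest} scale and already gives the stronger $\sqrt{\log N/N}$. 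You half-sensed the problem (``borderline regime (stated as $d=1$)'') and correctly named Leighton--Shor as the missing ingredient, but as written your proposal attempts to prove a false claim at $d=2$: the fix is to restore the correct dimension split ($d=2$ versus $d\ge 3$) and, at $d=2$, replace the cube-by-cube union bound with the Leighton--Shor matching analysis (or cite it), exactly as the original proof does. A secondary soft spot is the conversion of cell-count bounds into an actual $\infty$-transport map: the step ``move all mass by $\delta_\ell 2^{-\ell}$ rather than a $\delta_\ell$-fraction by $2^{-\ell}$'' requires the comparable-densities transport lemma, which your sketch uses implicitly but never states.
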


Equipped with Theorem~\ref{thm:infty-concentration}, we are now ready to show the proof of Theorem~\ref{thm:oos-guarantee}.
\begin{proof}[Proof of Theorem~\ref{thm:oos-guarantee}]
For any $(a,y) \in \mc A \times \mc Y$, $\wh p_{ay}$ is an estimator given by
\[ \wh p_{ay}= \frac{1}{N}\sum_{i=1}^N \mathbbm{1}_{(a, y)}(\wh a_i, \wh y_i).
\]
Let $p_{ay}$ denotes the mass of the true marginal distribution on $(a,y)$. It can be verified that $N \sum_{(a,y)\in \mc A \times \mc Y} (p_{ay}-\wh p_{ay})^2/p_{ay}$ asymptotically converges to $\chi^2$ with degree $k=3$ \cite{boucheron2003concentration}. To obtain an explicit and concise result, we employ the $\chi^2$ test-statistic upper tail bound (Lemma 1 in \cite{ref:Laurent2000Adaptive}), which can be written as 
\[\textup{Prob}\left(Z \geq k + 2 \sqrt{kt}+ 2 t\right) \leq e^{-t},
\]where $Z \sim \chi^2_k$ and $t \geq 0$. By letting  $N\delta_p=k+\sqrt{kt}+ 2t$ and solving for $t$, 
we further obtain
\[\textup{Prob}\left(N \sum_{(a,y)\in \mc A \times \mc Y} (p_{ay}-\wh p_{ay})^2/p_{ay} > N \delta_p \right) \leq  C_2 e^{-\frac{1}{2}\left(N\delta_p-\sqrt{2kN\delta_p-k^2} \right)},
\] where $C_2$ is a constant that depends on the underlying distribution. 
We assume the conditions of Theorem~\ref{thm:infty-concentration} hold and $d \geq 2$. Then there exists $C_3>0$ such that setting $\rho_{ay}=C_1\frac{\log(| \mc I_{ay} |)^{1/d}}{| \mc I_{ay} |^{1/d}}$ implies 
\[ \textup{Prob}\left(\Wass_\infty(\PP_{ay}\opt, \Pnom_{ay})> \rho \right) \leq C_3 | \mc I_{ay} |^{-\frac{\alpha}{2}} \qquad \forall (a,y) \in \mc A \times \mc Y.
\] 
By union bound, we further obtain
\[\textup{Prob}\left(\PP_{ay}\opt \in \mbb B_{ay}(\Pnom_{ay})\ \forall (a,y) \in \mc A \times \mc Y \ \textup{and} \ p \in \Delta\right) \geq 1- C_2 e^{-\frac{1}{2}\left(N\delta_p-\sqrt{2kN\delta_p-k^2} \right)} - \sum_{(a,y) \in \mc A \times \mc Y} C_3 | \mc I_{ay} |^{-\frac{\alpha}{2}}.
\]
This result immediately leads to the statements in the theorem.
\end{proof}

Theorem~\ref{thm:oos-guarantee} requires choosing the radius $\rho$ based on the constants $C_1$ and $C_2$, and the probabilistic guarantee also depends on the universal constant $C_3$. However, these parameters depend on the properties of the underlying distribution~$\PP\opt$, which are typically unknown to decision makers. Moreover, the generalized model introduces one more tuning parameter $\delta_p$, making the cross-validation procedure more demanding. Therefore, in practice, we adopt the simpler ambiguity set \eqref{eq:B-def}.  Nonetheless, Theorem 1 describes an explicit rate for decreasing the Wasserstein radius $\rho$, which provides useful insight to determine the parameter value in practice.

\section{Relationship between SVM and CVaR}\label{subsec:cvar}

In this section, we show that the Support Vector Machine (SVM) model is exactly the Conditional Value at Risk (CVaR) approximation of the misclassification probability minimization problem~\eqref{eq:probminimize}. SVM is a linear classifier obtained by determining the parameters $(w, b) \in \R^{d+1}$ that minimizes  the empirical expected \textit{hinge loss}:
        \begin{equation}\label{eq:SVM-true}
    \min_{w \in \R^d,b \in \R}~\EE_{\hat \PP}\left[\max \left\{0, 1-Y(w^\top X + b)\right\}\right].
\end{equation}
Observe that problem~\eqref{eq:probminimize} under the empirical distribution can be equivalently written as
\begin{equation} \label{eq:chancecons}
\begin{array}{cl}
         \inf & t\\
         \st & w \in \R^d,~b \in \R,~t \in \R,\\
         & \hat \PP \left( Y(w^\top X + b)> 0\right) \geq 1- t.
\end{array} 
\end{equation}
The optimization problem above can be regarded as an extension of chance constrained programs, with the quantile $t$ being a decision variable. However, the feasible set of such a chance constraint is non-convex, which makes optimization problematic in the face of large instances. A natural way to overcome this difficulty is to replace the chance constraint with a tractable approximation. To obtain an efficient model for large sample sizes, we require the approximation to be convex. Moreover, we also like the approximation to be conservative, i.e., if a solution is feasible in the approximated problem then it is also feasible to the original problem. If these two conditions hold, we refer to the approximation as a convex conservative approximation. 

To this end, the well-known CVaR can be used to derive such an approximation.  The core idea lies in the fact that the chance constraint in \eqref{eq:chancecons} can be written as a Value at Risk (VaR) constraint. This yields the equivalent reformulation
\[
\begin{array}{cl}
         \inf & t\\
         \st & w \in \R^d,~b \in \R,~t \in \R,\\
         & \textup{VaR}_{(t,\hat \PP)} (-Y(w^\top X + b)) < 0,
\end{array}
\]
where $\textup{VaR}_{(t,\hat \PP)} (Z)) \Let \inf\{\tau \in \R: \hat \PP (Z \leq \tau) \geq 1- t \}$ can be interpreted as the $1-t$ quantile of $Z$. Recall that the CVaR of a random variable $Z$ is defined as
\[ \textup{CVaR}_{(t,\hat \PP)} (Z) \Let \inf\left\{\tau \in \R: \tau + \frac{1}{t}\EE_{\wh \PP}\left[\max\{0,Z-\tau \}\right]\right\}.
\]
It can be verified that $\textup{VaR}_{(t,\hat \PP)} (Z)$ is a minimizer of the right hand side problem~\cite{nemirovski2007convex}. Thus, the relation $\textup{VaR}_{(t,\hat \PP)} (Z)) \leq \textup{CVaR}_{(t,\hat \PP)} (Z)$ holds for any distribution, and we can simply replace the 
VaR term using CVaR, and obtain the following convex conservative approximation to the chance constrained program \eqref{eq:chancecons}:

\begin{equation} \label{eq:CVaR}
\begin{array}{cl}
         \inf & t\\
         \st & w \in \R^d,~b \in \R,~t \in \R,\\
         & \textup{CVaR}_{(t,\hat \PP)} (-Y(w^\top X + b)) < 0.
\end{array}
\end{equation}

Interestingly, we find that the CVaR approximation problem \eqref{eq:CVaR} is exactly equivalently to the SVM model~\eqref{eq:SVM-true}. That is, the optimal value of \eqref{eq:CVaR} coincides with the optimal value of \eqref{eq:SVM-true}, and the two problems yield the same  optimal classifiers. This result connects the well-known SVM model with the misclassification minimization problem from the perspective of conservative approximation.

\begin{theorem}[CVaR equivalence] \label{thm:cvar_eq}
    The CVaR approximation 
    \eqref{eq:CVaR} is equivalent to the SVM model~\eqref{eq:SVM-true}. 
\end{theorem}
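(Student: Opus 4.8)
The plan is to show that the two problems share the same optimal value and the same optimal decision boundary, by unfolding the CVaR constraint through its defining variational representation and then exploiting a scaling identity that links the auxiliary CVaR variable to the normalization of the classifier. Write $L(w,b) = \EE_{\hat\PP}[\max\{0, 1 - Y(w^\top X + b)\}]$ for the SVM objective, and let $v_{\mathrm{SVM}} = \inf_{w,b} L(w,b)$ and $v_{\mathrm{CVaR}}$ be the optimal value of~\eqref{eq:CVaR}. Using the definition of CVaR, the constraint $\textup{CVaR}_{(t,\hat\PP)}(-Y(w^\top X + b)) < 0$ is equivalent to the existence of $\tau \in \R$ with $\tau + \tfrac1t \EE_{\hat\PP}[\max\{0, -Y(w^\top X+b) - \tau\}] < 0$, so I may treat $\tau$ as an additional free variable and keep $t>0$ throughout.

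First I would establish $v_{\mathrm{CVaR}} \le v_{\mathrm{SVM}}$. Fix $\eps > 0$ and pick $(w,b)$ with $L(w,b) < v_{\mathrm{SVM}} + \eps$. Setting $\tau = -1$ and $t = v_{\mathrm{SVM}} + \eps$ reduces the CVaR inequality to $-1 + L(w,b)/t < 0$, which holds precisely because $L(w,b) < t$; hence $(w,b,t)$ is feasible and $v_{\mathrm{CVaR}} \le v_{\mathrm{SVM}} + \eps$. Letting $\eps \downarrow 0$ gives the bound.

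Next I would establish the reverse inequality through the scaling identity. For any feasible tuple the auxiliary $\tau$ must be strictly negative: since $\EE_{\hat\PP}[\max\{0,\cdot\}] \ge 0$ and $t>0$, a nonnegative $\tau$ would force the left-hand side to be nonnegative, contradicting the strict inequality. Write $\tau = -c$ with $c>0$; multiplying through by $t$ and using $\max\{0, c - Y(w^\top X + b)\} = c\max\{0, 1 - Y((w/c)^\top X + b/c)\}$, the constraint collapses to $L(w/c, b/c) < t$. Thus the rescaled classifier $(w/c, b/c)$ has hinge loss strictly below $t$, so $v_{\mathrm{SVM}} \le L(w/c,b/c) < t$, and taking the infimum over feasible points yields $v_{\mathrm{SVM}} \le v_{\mathrm{CVaR}}$. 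Combining the two bounds gives equality of optimal values, and the same computation delivers the correspondence of minimizers: an SVM-optimal $(w^\star,b^\star)$ is feasible for~\eqref{eq:CVaR} at every $t > v_{\mathrm{SVM}}$, while any minimizing sequence for~\eqref{eq:CVaR} produces rescaled classifiers whose hinge loss tends to $v_{\mathrm{SVM}}$; since rescaling by $1/c > 0$ leaves the hyperplane $\{x : w^\top x + b = 0\}$ unchanged, the two problems return the same classifier.

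The step I expect to be the main obstacle is the clean bookkeeping of the strict inequality together with the normalization. Because~\eqref{eq:CVaR} carries a strict $<0$ constraint, neither optimum need be attained, so the argument must be phrased with $\eps$-optimal points and infima rather than exact minimizers; and the rescaling $(w,b) \mapsto (w/c, b/c)$ has to be certified legitimate, which is exactly the sign fact $c>0$ proven above. A degenerate case worth flagging is $v_{\mathrm{SVM}} = 0$ (e.g.\ perfectly separable data), where $t \downarrow 0$ and the $1/t$ factor degenerates; this is dispatched by a limiting argument, noting that feasibility for arbitrarily small $t>0$ already certifies $v_{\mathrm{CVaR}} = 0$.
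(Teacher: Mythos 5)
Your proof is correct and follows essentially the same route as the paper's: both unfold the CVaR constraint via its variational (Rockafellar--Uryasev) representation, argue that the auxiliary variable must be strictly negative (your $\tau=-c<0$ is the paper's restriction to $\beta>0$), and exploit positive homogeneity of the hinge loss through the rescaling $(w,b)\mapsto(w/c,b/c)$ to collapse the constraint to $L(w/c,b/c)<t$, concluding that the hyperplane is unchanged. The only difference is presentational: the paper writes one chain of problem equivalences, whereas you split the argument into two inequalities with $\eps$-optimal points, which if anything treats the strict inequality and non-attainment issues more explicitly.
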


\begin{proof}[Proof of Theorem~\ref{thm:cvar_eq}]
Let $t^\star$ be the optimal value of \eqref{eq:CVaR}. When $t^\star=0$, one can verify that this condition implies the dataset is linearly separable, and the optimal value of the SVM model~\eqref{eq:SVM-true} will also be zero. Now, without loss of generality, we assume $t^\star>0$. By the definition of CVaR, we have
\begin{align*}
    t^\star &= \inf~\left\{t: \begin{array}{l}
         w \in \R^d,~b \in \R,~t \in \R_{++},  \\
         \textup{CVaR}_{(t,\hat \PP)} (-Y(w^\top X + b)) < 0
    \end{array} \right\}\\
    &= \inf~\left\{t: \begin{array}{l}
         w \in \R^d,~b \in \R,~t \in \R_{++}, \\
         \inf_{\beta \in \R} -\beta + \frac{1}{t} \EE_{\hat \PP}\left[\max\{\beta-Y(w^\top X + b),0\} \right] < 0
    \end{array} \right\}\\
    &= \inf~\left\{t: \begin{array}{l}
         w \in \R^d,~b \in \R,~\beta \in \R,~t \in \R_{++}, \\
          -\beta + \frac{1}{t} \EE_{\hat \PP}\left[\max\{\beta-Y(w^\top X + b),0\} \right] < 0.\
    \end{array} \right\}.
\end{align*}
Notice that when $\beta \leq 0$, the constraint is always infeasible; hence, we can restrict $\beta$ to be positive without changing the feasible region, which yields
\begin{align*}
   t^\star  &= \inf~\left\{t: \begin{array}{l}
         w \in \R^d,~b \in \R,~\beta \in \R_{++},~t \in \R_{++} \\
         -\beta + \frac{1}{t} \EE_{\hat \PP}\left[\max\{\beta-Y(w^\top X + b),0\} \right] < 0
    \end{array} \right\}\\
    &= \inf~\left\{t : \begin{array}{l}
         w \in \R^d,~b \in \R,~\beta \in \R_{++},~t \in \R_{++} \\
          \EE_{\hat \PP}\left[\max\{\beta-Y(w^\top X + b),0\} \right]< \beta t
    \end{array} \right\}\\
    &= \inf~\left\{t: \begin{array}{l}
         w \in \R^d,~b \in \R,~\beta \in \R_{++},~t \in \R_{++}\\
          \EE_{\hat \PP}\left[\max\left\{1-Y\left(\left(\frac{w}{\beta}\right)^\top X + \left(\frac{b}{\beta}\right)\right),0\right\} \right]< t
    \end{array} \right\}\\
    &= \inf~\left\{t: \begin{array}{l}
         w' \in \R^d,~b' \in \R,~t \in \R_{++} \\
          \EE_{\hat \PP}\left[\max\left\{1-Y\left(w'^\top X + b'\right),0\right\} \right]< t
    \end{array} \right\}\\
    &= \left\{
    \begin{array}{cl}
         \min &\EE_{\hat \PP}\left[\max\left\{1-Y\left(w'^\top X + b'\right),0\right\} \right] \\
         \st & w' \in \R^d,~b' \in \R,
    \end{array}
    \right.
\end{align*}
where the penultimate equality holds by setting $w'=w/\beta$ and $\beta'=b/\beta$. Thus, the optimal value of these two problem coincides. Furthermore, by noticing that $w' = w/\beta$ and $b' = b/\beta$, the corresponding optimal hyperplanes $w^\top X + b=0$ and $w'^\top X + b'=0$ are also the same. This completes the proof.
\end{proof}

\section{Training $\eps$-DRFC Model with General Metrics}\label{subsec:mpm_general}

The general ground metric \eqref{eq:gencost} defined in Section \ref{sec:notrust} can also be applied to the $\eps$-DRFC model~\eqref{eq:dro-prob2} derived in Section \ref{sec:prob}. 
We consider in this section the modified problem of~\eqref{eq:dro-prob2} that utilizes the ambiguity set \eqref{eq:modified_ambi}:
\be \label{eq:dro-prob2-modified}
    \begin{array}{cl}
        \min & \Sup{\QQ \in \mc B_\gamma(\Pnom)}~\QQ( Y(w^\top X + b) < \eps) \\
        \st & w \in \R^d,~b \in \R, \\
        & \Sup{\QQ \in \mc B_\gamma(\Pnom)}~\mathds U_\eps(w, b, \QQ) \le \eta.
    \end{array}
\ee
We now present the main result of this section, which provides the reformulation for~\eqref{eq:dro-prob2-modified}.
\begin{theorem}($\eps$-DRFC reformulation) \label{thm:exactrefor}
    Suppose that the ground metric is prescribed using~\eqref{eq:gencost}. For any $\gamma \in (0, 1)$, problem \eqref{eq:dro-prob2-modified} is equivalent to the mixed binary conic program
\begin{equation} \label{eq:exact-refor-general}
\begin{array}{cll}
\inf& \ds \frac{1}{N} \sum_{i \in [N]} \nu_i + \sum_{(\bar a, \bar y) \in \mc A \times \mc Y} \hat p_{\bar a \bar y} \mu_{\bar a \bar y}-\theta(1-\gamma) & \\
 \st& \nu \in \R^N ,\; \theta \in \R_+,\; \mu \in \R^{2\times 2},\; \tau \in \{0, 1\}^N,  \\
  &\nu^a \in \R^N ,\; \theta^a \in \R_+,\; \mu^a \in \R^{2\times 2},\lambda_a^{a} \in \{0, 1\}^N, \; \lambda_{a'}^{a} \in \{0, 1\}^N \qquad \forall (a, a') \in \{(0, 1), (1, 0)\}, \\
&    \hspace{-2mm}\left.\begin{array}{l}
   \textup{If} \ \kappa_{\mc A} |a - \wh a_i| + \kappa_{\mc Y}| y - \wh y_i| \leq \rho:\\
    \quad  \tau_i^{a} \in \{0,1\},\\
    \quad \tau_i^a \leq \mu_{ay} - \theta \mathbbm{1}_{(\hat a_i, \hat y_i)} (a,  y) + \nu_i,\\
    \quad -\wh y_i (w^\top \wh x_i + b)  +(\rho - \kappa_{\mc A}|a - \wh a_i| - \kappa_{\mc Y}|y - \wh y_i|)\| w\|_*  \le M \tau_i^a - \eps
  \end{array}
  \right\} \forall i \in [N] \quad \forall(a, y) \in \mc A \times \mc Y,\\
 &\hspace{-2mm}\left.
\begin{array}{l}
   \textup{If} \ \kappa_{\mc A} |a - \wh a_i| + \kappa_{\mc Y}| 1 - \wh y_i| \leq \rho:\\
    \quad \wh p_{a1}^{-1} \lambda_{ai}^{a} \leq \mu_{a,1}^a - \theta^a \mathbbm{1}_{(\hat a_i, \hat y_i)} (a, 1) + \nu_i^a,  \\
    \quad w^\top \wh x_i + (\rho- \kappa_{\mc A} |a - \wh a_i| - \kappa_{\mc Y}| 1 - \wh y_i|) \|w\|_* + b + \eps \leq M \lambda_{ai}^{a}\\
    \textup{If} \ \kappa_{\mc A} |a' - \wh a_i| + \kappa_{\mc Y}| 1 - \wh y_i| \leq \rho:\\
    \quad \wh p_{a'1}^{-1}( \lambda_{a'i}^{a}-1) \leq \mu_{a'1}^a - \theta^a \mathbbm{1}_{(\hat a_i, \hat y_i)} (a', 1) + \nu_i^a,  \\
    \quad -w^\top \wh x_i + (\rho- \kappa_{\mc A} |a' - \wh a_i| - \kappa_{\mc Y}| 1 - \wh y_i|) \|w\|_* - b  \leq M \lambda_{a'i}^{a}\\
    \textup{If} \ \kappa_{\mc A} |a - \wh a_i| + \kappa_{\mc Y}| -1 - \wh y_i| \leq \rho:\\
    \quad 0 \leq \mu_{a,-1}^a - \theta^a \mathbbm{1}_{(\hat a_i, \hat y_i)} (a, -1) + \nu_i^a,\\
    \textup{If} \ \kappa_{\mc A} |a' - \wh a_i| + \kappa_{\mc Y}| -1 - \wh y_i| \leq \rho:\\
    \quad 0 \leq \mu_{a',-1}^a - \theta^a \mathbbm{1}_{(\hat a_i, \hat y_i)} (a', -1) + \nu_i^a,\\
\end{array}
\right\} \quad \forall (a, a') \in \{(0, 1), (1, 0)\} \quad \forall i \in [N], \\
& \ds\frac{1}{N} \sum_{i \in [N]} \nu_i^a + \sum_{(\bar a, \bar y) \in \mc A \times \mc Y} \hat p_{ay} \mu_{\bar a \bar y}^a-\theta^a(1-\gamma) \leq \eta
\quad \forall a \in \mc A,\\
\end{array}
\end{equation}
where $M$ is the big-M constant.
\end{theorem}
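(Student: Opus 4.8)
The plan is to mirror the two-stage strategy used in the proof of Theorem~\ref{thm:refor}: first invoke the strong duality result of Lemma~\ref{lem:strong_dual} to dualize each worst-case expectation over $\mc B_\gamma(\Pnom)$, and then linearize the residual inner suprema. The essential difference from the hinge-loss case is that every inner supremum now ranges over an \emph{indicator} function rather than a continuous convex loss, so after dualization I would combine Lemma~\ref{lem:strong_dual} with the binary big-$M$ device of Lemma~\ref{lemma:indicator}. This is what turns \eqref{eq:dro-prob2-modified} from a pure conic program into a \emph{mixed binary} conic program.

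For the objective I would write $\Sup{\QQ \in \mc B_\gamma(\Pnom)}\QQ(Y(w^\top X + b) < \eps) = \Sup{\QQ \in \mc B_\gamma(\Pnom)}\EE_\QQ[\phi(X,A,Y)]$ with $\phi(x,a,y) = \mbb I(y(w^\top x + b) < \eps)$, and apply Lemma~\ref{lem:strong_dual}. This yields the multipliers $(\nu,\theta,\mu)$ together with, for every $i \in [N]$ and every flip $(a,y) \in \mc A \times \mc Y$, the constraint $\Sup{x:\|x-\wh x_i\|\le \rho - \kappa_{\mc A}|a-\wh a_i| - \kappa_{\mc Y}|y-\wh y_i|}\mbb I(y(w^\top x + b) < \eps) \le \mu_{ay} - \theta\mathbbm{1}_{(\wh a_i,\wh y_i)}(a,y) + \nu_i$. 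Each such supremum of an indicator is $0/1$-valued and, exactly as in Lemma~\ref{lemma:indicator}, equals $1$ iff the displaced ball meets the halfspace $\{y(w^\top x + b) < \eps\}$; I would encode it with a binary variable (written $\tau_i^a$ in \eqref{eq:exact-refor-general}) satisfying $\tau_i^a \le \mu_{ay} - \theta\mathbbm{1}_{(\wh a_i,\wh y_i)}(a,y) + \nu_i$ and the big-$M$ linking inequality, where the dual-norm identity $\Max{\|x-\wh x_i\|\le r}(-y\,w^\top x) = -y\,w^\top\wh x_i + r\|w\|_*$ (valid since $|y|=1$) supplies the left-hand side. The side conditions $\kappa_{\mc A}|a-\wh a_i| + \kappa_{\mc Y}|y-\wh y_i| \le \rho$ flag exactly the flips whose displaced ball is non-empty; otherwise the supremum is $-\infty$ and the constraint is vacuous.

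For the fairness constraint I would treat the two ordered pairs $(a,a') \in \{(0,1),(1,0)\}$ separately, since $\Sup{\QQ}\mathds U_\eps$ is the maximum over the two of $\Sup{\QQ}[\QQ_{a1}(w^\top X + b > -\eps) - \QQ_{a'1}(w^\top X + b \ge 0)]$. Using the marginal identities $\QQ(A=a,Y=1)=\wh p_{a1}$ to convert the conditionals into unconditional expectations, and the key rewriting $\mbb I(w^\top x + b \ge 0) = 1 - \mbb I(w^\top x + b < 0)$ to turn the subtracted (infimized) term into a supremum, I can express each gap as $\Sup{\QQ \in \mc B_\gamma(\Pnom)}\EE_\QQ[\phi_a(X,A,Y)]$ with $\phi_a(x,a,1) = \wh p_{a1}^{-1}\mbb I(w^\top x + b > -\eps)$, $\phi_a(x,a',1) = \wh p_{a'1}^{-1}(\mbb I(w^\top x + b < 0) - 1)$, and $\phi_a \equiv 0$ when $\bar y = -1$. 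Applying Lemma~\ref{lem:strong_dual} again and iterating over the four flip targets $(\bar a,\bar y) \in \{(a,1),(a',1),(a,-1),(a',-1)\}$ reproduces the four blocks of the second constraint group: the two active-halfspace cases are linearized with binaries $\lambda_{ai}^a$ and $\lambda_{a'i}^a$ (the constant $-1$ from the rewriting surfacing as $\wh p_{a'1}^{-1}(\lambda_{a'i}^a - 1)$), while the two $\bar y = -1$ cases contribute only the sign constraints $0 \le \mu^a_{\cdot,-1} - \theta^a\mathbbm{1}_{(\wh a_i,\wh y_i)}(\cdot) + \nu_i^a$. Imposing that the resulting dual optimal value be $\le \eta$ for each $a \in \mc A$ gives the final constraint block.

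I expect the main obstacle to be the careful bookkeeping in the constraint derivation, where one must simultaneously dualize over $\mc B_\gamma(\Pnom)$ via Lemma~\ref{lem:strong_dual} (already introducing the $(\nu^a,\theta^a,\mu^a)$ multipliers and the $\gamma$-dependent term $-\theta^a(1-\gamma)$) and nest inside each dual constraint the binary big-$M$ linearization of an indicator whose argument is either $w^\top x + b > -\eps$ or its complement. Getting the complement rewriting right — so that the minimized term becomes a maximized indicator and the spurious constant is absorbed into the $-1$ shift — together with tracking which flips are admissible, is the delicate part; the remaining dual-norm evaluations and the reassembly of the $\max$ over $(a,a')$ into the per-$a$ constraints are routine.
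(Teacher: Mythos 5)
Your proposal is correct and follows essentially the same route as the paper's own proof: dualize each worst-case expectation over $\mc B_\gamma(\Pnom)$ via Lemma~\ref{lem:strong_dual}, linearize the residual indicator suprema with big-$M$ binaries in the spirit of Lemma~\ref{lemma:indicator}, iterate over the four flip targets $(\bar a,\bar y)$ with the complement rewriting $\mbb I(w^\top x + b \ge 0) = 1 - \mbb I(w^\top x + b < 0)$ producing the $\wh p_{a'1}^{-1}(\lambda_{a'i}^{a}-1)$ shift, and impose the dual optimal value $\le \eta$ for each $a \in \mc A$. The only cosmetic difference is that you fold the complement rewriting into the definition of $\phi_a$ before invoking the duality lemma, whereas the paper performs it inside Case 2 of the constraint analysis; the resulting reformulation is identical.
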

\noindent The reformulation~\eqref{eq:exact-refor-general} involves $8N$ binary variables. However, because the constraints of problem~\eqref{eq:exact-refor-general} are contingent, the empirical number of binary variables is smaller than $8N$. Problem~\eqref{eq:exact-refor-general} is a linear mixed binary optimization problem  if $\| \cdot \|$ is either a 1-norm or an $\infty$-norm on $\R^d$. If $\| \cdot \|$ is the Euclidean norm, problem~\eqref{eq:refor-probtrust} becomes a mixed binary second-order cone optimization problem. Both types of problems can be solved using off-the-shelf solvers such as MOSEK~\cite{mosek}.
We then present the proof of Theorem~\ref{thm:exactrefor}.
\begin{proof}[Proof of Theorem~\ref{thm:exactrefor}]
Notice that the objective function can be written in the form of
\[ 
        \Sup{\QQ \in \mc B_\gamma(\Pnom)}    \EE_{\QQ} [\phi(X,A,Y)],
\]
where $\phi(X,A,Y)=\mathbbm{I} \left(Y(w^\top X + b) < \eps \right)$ is an indicator function. By Lemma \ref{lem:strong_dual}, we have
\begin{align}\label{eq:obj_dual}
&\nonumber\Sup{\QQ \in \mc B_\gamma(\Pnom)}    \EE_{\QQ} [\phi(X,A,Y)]\\
\nonumber=&\left\{
\begin{array}{cll}
 \inf& \ds \frac{1}{N} \sum_{i \in [N]} \nu_i + \sum_{(\bar a, \bar y) \in \mc A \times \mc Y} \hat p_{\bar a \bar y} \mu_{\bar a \bar y}-\theta(1-\gamma) & \\
 \st& \nu \in \R^N,~\theta \in \R_+,~\mu \in \R^{2\times 2},\\
 &\Sup{x: \| x - \wh x_i\| \le \rho - \kappa_{\mc A}|a - \wh a_i| - \kappa_{\mc Y}|y - \wh y_i|} \mathbbm{I} \left(Y(w^\top X + b) < \eps \right) \leq \mu_{ay} - \theta \mathbbm{1}_{(\hat a_i, \hat y_i)} (a,  y) + \nu_i \\
 &\hspace{10cm}\forall i \in [N] \quad \forall (a, y) \in \mc A \times \mc Y.
\end{array}
 \right.
\end{align}
Based on Lemma \ref{lemma:indicator}, the constraint in the above infimum problem is equivalent to
\[
\left.\begin{array}{l}
   \textup{If} \ \kappa_{\mc A} |a - \wh a_i| + \kappa_{\mc Y}| y - \wh y_i| \leq \rho:\\
    \quad \tau_i^{a} \in \{0,1\},\\
    \quad \tau_i^a \leq \mu_{ay} - \theta \mathbbm{1}_{(\hat a_i, \hat y_i)} (a,  y) + \nu_i,\\
    \quad -\wh y_i (w^\top \wh x_i + b)  +(\rho - \kappa_{\mc A}|a - \wh a_i| - \kappa_{\mc Y}|y - \wh y_i|)\| w\|_*  \le M \tau_i^a - \eps
  \end{array}
  \right\} \forall i \in [N] \quad \forall (a, y) \in \mc A \times \mc Y.
\]

Next, we show the derivation for constraints. Recall that the worst-case unfairness measure can be written as
\begin{align*}
\Sup{\QQ \in \mc B_\gamma(\Pnom)}~\mathds U_\eps(w, b, \QQ) 
&= \max \left\{ \begin{array}{l}
\Sup{\QQ \in \mc B_\gamma(\Pnom)}~\QQ_{01}( w^\top X + b > -\eps) - \QQ_{11}(w^\top X + b \ge 0), \\
\Sup{\QQ \in \mc B_\gamma(\Pnom)}~\QQ_{11}( w^\top X + b > -\eps) - \QQ_{01}(w^\top X + b \ge 0) 
\end{array}
\right\}.
\end{align*}

Consider a fixed pair of $(a, a') \in \{ (0, 1), (1, 0)\}$. Employing the result of Lemma \ref{lem:strong_dual} yields
\begin{align*}
&\Sup{\QQ \in \mc B_\gamma(\Pnom)}~\QQ( w^\top X + b > -\eps | A = a, Y = 1) - \QQ(w^\top X + b \ge 0 | A = a', Y = 1) \\
    =& \Sup{\QQ \in \mc B_\gamma(\Pnom)}~ \EE_{\QQ}[\wh p_{a1}^{-1} \mathbbm{I}(w^\top X + b > -\eps) \mathbbm{1}_{(a,1)}(A, Y) - \wh p_{a'1}^{-1} \mbb I(w^\top X + b \ge 0) \mathbbm{1}_{(a',1)}(A, Y)]\\
    =&\left\{
\begin{array}{cl}
 \inf& \ds\frac{1}{N} \sum_{i \in [N]} \nu_i^a + \sum_{(\bar a, \bar y) \in \mc A \times \mc Y} \hat p_{ay} \mu_{\bar a \bar y}^a-\theta^a(1-\gamma)  \\
 \st& \nu^a \in \R^N ,\; \theta^a \in \R_+,\; \mu^a \in \R^{2\times 2},\\
 &\Sup{\forall x_i \in \mc X: \| x_i - \wh x_i\| \le \rho - \kappa_{\mc A}| \bar a_i - \wh a_i| - \kappa_{\mc Y}| \bar y_i - \wh y_i|}\phi_a(x_i, \bar a_i, \bar y_i) \leq \mu_{\bar a_i \bar y_i}^a - \theta^a \mathbbm{1}_{(\hat a_i, \hat y_i)} (\bar a_i, \bar y_i) + \nu_i^a \\
 &\hspace{10cm} \forall i \in [N]  \quad \forall (\bar a_i, \bar y_i) \in \mc A \times \mc Y,
\end{array}
 \right.
\end{align*}
where the second equation relies on the result of Lemma \ref{lem:strong_dual} by defining 
\[
\phi_a(X,A,Y)=\wh p_{a1}^{-1} \mathbbm{I}(w^\top X + b > -\eps) \mathbbm{1}_{(a,1)}(A, Y) - \wh p_{a'1}^{-1} \mbb I(w^\top X + b \ge 0) \mathbbm{1}_{(a',1)}(A, Y).\] 
Fix any $i \in [N]$, we now iterate over $(\bar a_i, \bar y_i)$.
\begin{enumerate}
    \item Case 1: $(\bar a_i, \bar y_i) = (a, 1)$. There is an active constraint if $\kappa_{\mc A} |a - \wh a_i| + \kappa_{\mc Y}| 1 - \wh y_i| \leq \rho$, and the constraint is equivalent to
\begin{align*}
    &\Sup{\forall x_i \in \mc X : \| x_i - \wh x_i\| \leq \rho- \kappa_{\mc A} |a - \wh a_i| - \kappa_{\mc Y}| 1 - \wh y_i|}\wh p_{a1}^{-1} \mathbbm{I}(w^\top x_i + b > -\eps) \mathbbm{1}_{(a,1)}(a, 1) - \wh p_{11}^{-1} \mbb I(w^\top x_i + b \ge 0) \mathbbm{1}_{(a',1)}(a, 1)\\
    &\hspace{11cm} \leq \mu_{a1}^a - \theta^a \mathbbm{1}_{(\hat a_i, \hat y_i)} (a, 1) + \nu_i^a    \\
    \Longleftrightarrow & \Sup{\forall x_i \in \mc X: \| x_i - \wh x_i\| \leq \rho- \kappa_{\mc A} |a_ - \wh a_i| - \kappa_{\mc Y}| 1 - \wh y_i| }  \wh p_{a1}^{-1} \mathbbm{I}(w^\top x_i + b > -\eps)  \leq \mu_{a1}^a - \theta^a \mathbbm{1}_{(\hat a_i, \hat y_i)} (a, 1) + \nu_i^a \\
    \Longleftrightarrow& \left\{\begin{array}{l}
         \lambda_{ai}^{a} \in \{0,1\},\\
         \wh p_{a1}^{-1} \lambda_{ai}^{a} \leq \mu_{a1}^a - \theta^a \mathbbm{1}_{(\hat a_i, \hat y_i)} (a, 1) + \nu_i^a,  \\
         w^\top \wh x_i + (\rho- \kappa_{\mc A} |a - \wh a_i| - \kappa_{\mc Y}| 1 - \wh y_i|) \|w\|_* + b + \eps \leq M \lambda_{ai}^{a},
    \end{array}
    \right.
\end{align*}
where the last equation follows from the result of Lemma \ref{lemma:indicator}. 
    \item Case 2: $(\bar a_i,\bar  y_i)= (a', 1)$. There is an active constraint if $\kappa_{\mc A} |a' - \wh a_i| + \kappa_{\mc Y}| 1 - \wh y_i| \leq \rho$, and the constraint is equivalent to
\begin{align*}
    &\Sup{ \forall x_i \in \mc X: \| x_i - \wh x_i\| \leq \rho- \kappa_{\mc A} |a' - \wh a_i| - \kappa_{\mc Y}| 1 - \wh y_i| }\wh p_{a1}^{-1} \mathbbm{I}(w^\top x_i + b > -\eps) \mathbbm{1}_{(a,1)}(a', 1) - \wh p_{a'1}^{-1} \mbb I(w^\top x_i + b \ge 0) \mathbbm{1}_{(a',1)}(a', 1)  \\
    &\hspace{11cm} \leq \mu_{a'1}^a - \theta^a \mathbbm{1}_{(\hat a_i, \hat y_i)} (a', 1) + \nu_i^a \\
    \Longleftrightarrow& \Sup{\forall x_i \in \mc X:\| x_i - \wh x_i\| \leq \rho- \kappa_{\mc A} |a' - \wh a_i| - \kappa_{\mc Y}| y_i - \wh y_i| } - \wh p_{a'1}^{-1} \mbb I(w^\top x_i + b \ge 0)  \leq \mu_{a'1}^a - \theta^a \mathbbm{1}_{(\hat a_i, \hat y_i)} (a', 1) + \nu_i^a\\
    \Longleftrightarrow& - \wh p_{a'1}^{-1} \Inf{\| x_i - \wh x_i\| \leq \rho- \kappa_{\mc A} |a' - \wh a_i| - \kappa_{\mc Y}| y_i - \wh y_i| }  \mbb I(w^\top x_i + b \ge 0) \leq \mu_{a'1}^a - \theta^a \mathbbm{1}_{(\hat a_i, \hat y_i)} (a', 1) + \nu_i^a\\
    \Longleftrightarrow& - \wh p_{a'1}^{-1}\left(1- \Sup{\| x_i - \wh x_i\| \leq \rho- \kappa_{\mc A} |a' - \wh a_i| - \kappa_{\mc Y}| y_i - \wh y_i| }  \mbb I(w^\top x_i + b < 0) \right) \leq \mu_{a'1}^a - \theta^a \mathbbm{1}_{(\hat a', \hat y_i)} (a', 1) + \nu_i^a\\
    \Longleftrightarrow& \left\{\begin{array}{l}
    \lambda_{a'i}^{a} \in \{0,1\},\\
     \wh p_{a'1}^{-1}( \lambda_{a'i}^{a}-1) \leq \mu_{a'1}^a - \theta^a \mathbbm{1}_{(\hat a_i, \hat y_i)} (a', 1) + \nu_i^a,  \\
     -w^\top \wh x_i + (\rho- \kappa_{\mc A} |a' - \wh a_i| - \kappa_{\mc Y}| 1 - \wh y_i|) \|w\|_* - b  \leq M \lambda_{a'i}^{a}.
    \end{array}
    \right.
\end{align*}

    \item Case 3: $(\bar a_i, \bar y_i)= (a, -1)$. There is an active constraint if $\kappa_{\mc A} |a - \wh a_i| + \kappa_{\mc Y}| -1 - \wh y_i| \leq \rho$, the constraint is equivalent to
\begin{align*}
        &\Sup{\forall x_i \in \mc X : \| x_i - \wh x_i\| \leq \rho- \kappa_{\mc A} |a - \wh a_i| - \kappa_{\mc Y}| -1 - \wh y_i|}\wh p_{a1}^{-1} \mathbbm{I}(w^\top x_i + b > -\eps) \mathbbm{1}_{(a,1)}(a, -1) - \wh p_{a'1}^{-1} \mbb I(w^\top x_i + b \ge 0) \mathbbm{1}_{(a',1)}(a, -1) \\
    &\hspace{11cm} \leq \mu_{a,-1}^a - \theta^a \mathbbm{1}_{(\hat a_i, \hat y_i)} (a, -1) + \nu_i^a    \\
    \Longleftrightarrow& 0 \leq \mu_{a,-1}^a - \theta^a \mathbbm{1}_{(\hat a_i, \hat y_i)} (a, -1) + \nu_i^a.\\
\end{align*}

    \item Case 4: $(\bar a_i, \bar y_i)= (a', -1)$. There is an active constraint if $\kappa_{\mc A} |a' - \wh a_i| + \kappa_{\mc Y}| -1 - \wh y_i| \leq \rho$, the constraint is equivalent to
\begin{align*}
        &\Sup{\forall x_i \in \mc X: \| x_i - \wh x_i\| \leq \rho- \kappa_{\mc A} |a' - \wh a_i| - \kappa_{\mc Y}| -1 - \wh y_i|}\wh p_{a1}^{-1} \mathbbm{I}(w^\top x_i + b > -\eps) \mathbbm{1}_{(a,1)}(a', -1) - \wh p_{a'1}^{-1} \mbb I(w^\top x_i + b \ge 0) \mathbbm{1}_{(a',1)}(a', -1)  \\
    &\hspace{11cm}   \leq \mu_{a',-1}^a - \theta \mathbbm{1}_{(\hat a_i, \hat y_i)} (a', -1) + \nu_i \\
    \Longleftrightarrow& 0 \leq \mu_{a',-1}^a - \theta^a \mathbbm{1}_{(\hat a_i, \hat y_i)} (a', -1) + \nu_i^a.\\
\end{align*}   

\end{enumerate}

Notice that at least one of the above four conditions will be satisfied, because when $\bar a_i=\wh a_i$ and $\bar y_i=\wh y_i$, we have 
\[  \kappa_{\mc A} |a_i - \wh a_i| + \kappa_{\mc Y}| y_i - \wh y_i|=0\leq \rho
\] for any $\rho \geq 0$. Combining all four cases leads to the second set of constraints.

The last constraint in the reformulation is obtained by setting the optimal value of the dual problem to be less than $\eta$ for each value of $a \in \mc A$. This completes the proof.
\end{proof}

\end{document}